\documentclass{article}

\PassOptionsToPackage{numbers,compress}{natbib}

\usepackage[preprint]{neurips_2026}

\usepackage[utf8]{inputenc}
\usepackage[T1]{fontenc}
\usepackage{hyperref}
\usepackage{url}
\usepackage{booktabs}
\usepackage{amsfonts}
\usepackage{amsmath}
\usepackage{amssymb}
\usepackage{amsthm}
\usepackage{nicefrac}
\usepackage{microtype}
\usepackage{xcolor}
\usepackage{graphicx}
\usepackage{enumitem}
\usepackage{placeins}
\usepackage{colortbl}
\usepackage{tikz}

\newtheorem{theorem}{Theorem}
\newtheorem{proposition}[theorem]{Proposition}
\newtheorem{corollary}[theorem]{Corollary}
\theoremstyle{definition}
\newtheorem{assumption}[theorem]{Assumption}

\newtheorem{definition}[theorem]{Definition}

\DeclareMathOperator{\tr}{tr}

\DeclareMathOperator*{\argmax}{arg\,max}
\newcommand{\R}{\mathbb{R}}
\newcommand{\E}{\mathbb{E}}
\newcommand{\norm}[1]{\left\lVert #1 \right\rVert}
\newcommand{\Dtheta}{\Delta\theta}

\title{Layers Matter: Why Continual Learning Regularization Should Be Layer-Adaptive}

\author{
Brian B. Moser $^{1 *}$
\quad
Ahmed Anwar $^{1,2 *}$
\quad
Tobias Christian Nauen $^{1,2}$
\quad
Shishir Muralidhara $^{1,2}$
\vspace{-10em}\And
Federico Raue $^{1}$
\quad
René Schuster $^{1,2}$
\quad 
Stanislav Frolov $^{1}$
\quad
Andreas Dengel $^{1,2}$\\ \\
$^{1}$ German Research Center for Artificial Intelligence (DFKI)\\
$^{2}$ RPTU University Kaiserslautern-Landau\\ \\ 
\texttt{first.last@dfki.de $\backslash$ first\_second.last@dfki.de} \\ \\ 
$^*$ Equal Contribution
}

\begin{document}

\maketitle

\begin{abstract}
Continual learning regularizers like EWC fight forgetting by penalizing changes from previous-task parameters with per-parameter importance, typically diagonal Fisher values. Per-parameter looks more flexible than per-layer, but each layer's diagonal Fisher is a weak summary of its actual curvature, missing the top-eigenvalue information that controls forgetting. Adversarial bit-flip attacks and Hessian-spectrum studies show that this missing per-layer sensitivity spans orders of magnitude in neural networks. Under a block-diagonal Hessian assumption, the layer-level analogue of EWC's existing diagonal assumption, we prove three things. Forgetting decomposes as a sum of per-layer terms weighted by each layer's top Hessian eigenvalue. Diagonal-Fisher weights cannot recover this eigenvalue. For instance, two layers with identical Fisher averages can have top eigenvalues differing by a factor as large as the layer width. For the same level of forgetting, uniform regularization loses new-task performance by an amount scaling with the layer condition number. Our theoretical analysis leads to a simple recipe: protect early layers strongly, let deeper layers move. We apply this recipe to EWC and SLCA and show clear improvements in average performance and forgetting metrics.
\end{abstract}

\section{Introduction}
\label{sec:intro}

Continual learning (CL) trains a single neural network on a sequence of tasks $\mathcal{T}_1, \mathcal{T}_2, \dots$ presented one after another. The defining difficulty of this setting is \emph{catastrophic forgetting}: the network keeps performing well on the most recent task but loses competence on the earlier ones. To prevent forgetting, the literature has converged on three broad families. Replay-based methods store or generate examples from past tasks~\cite{lopez2017gradient,chaudhry2019efficient,rebuffi2017icarl}, parameter-isolation methods route different tasks through different parts of the network~\cite{serra2018overcoming,delange2021continual,wang2024comprehensive}, and regularization-based methods anchor parameters near the previous-task optimum~\cite{kirkpatrick2017overcoming,zenke2017si,aljundi2018mas}. This paper advances the third family.

Regularization-based methods prevent forgetting by adding a penalty against parameter drift,
\begin{equation}
\label{eq:penalty}
\Omega(\theta) = \tfrac{1}{2}\|\theta - \theta^\star\|_W^2,
\end{equation}
where $\theta^\star$ is the previous-task optimum, $\|x\|_W^2 := x^\top W x$, and $W \succeq 0$ encodes the importance assigned to each direction in parameter space.
Setting $W$ to the diagonal Fisher gives EWC~\cite{kirkpatrick2017overcoming}, to path-integral importance gives SI~\cite{zenke2017si}, and to synaptic salience gives MAS~\cite{aljundi2018mas}.
Later refinements such as Online EWC~\cite{schwarz2018progress} and RWalk~\cite{chaudhry2018riemannian} change how $W$ is estimated but keep the form of Eq.~\ref{eq:penalty} unchanged. 
What the form does not do is reflect any quantity describing the layer as a whole. 
Two layers with similar parameter-wise Fisher-averages get the same per-layer penalty in EWC, even when one is ten times more sensitive to weight change than the other~\cite{rakin2019bfa, sagun2018empirical, papyan2020traces}. We call this the \emph{static, layer-uniform view} of the network, where every parameter wears its own scalar tag, which is determined without considering the layer.

A growing body of evidence from outside CL says the static view is wrong. Adversarial bit-flip attacks~\cite{rakin2019bfa,yao2020mbda,rakin2020tbt,rakin2022tbfa,chen2021proflip} drop ImageNet accuracy by more than $60\%$ by flipping fewer than $30$ bits, and those bits cluster in a small number of layers because those layers dominate the loss response to weight perturbation. Empirical Hessian-spectrum analyses~\cite{sagun2016eigenvalues,sagun2018empirical,papyan2020traces,ghorbani2019investigation} as an independent research direction find per-layer eigenvalues that span two to three orders of magnitude on standard backbones. Both lines of evidence point to the top eigenvalue $s_\ell$ of the per-layer Hessian block. If $s_\ell$ varies across layers by a factor of $100$, then any regularizer blind to it spends its forgetting budget in the wrong places, which leads to a worse stability-plasticity trade-off.

This paper closes the gap between the static view used by EWC and the layer-heterogeneous structure that real networks exhibit. Under a block-diagonal Hessian assumption (the layer-level analogue of the diagonal-Fisher assumption EWC already uses), forgetting decomposes cleanly into per-layer contributions weighted by $s_\ell$ (Theorem~\ref{thm:decomp}). Diagonal-Fisher weights cannot recover this structure, because averaging the diagonal of a positive semidefinite matrix loses the rank information that controls the top eigenvalue (Proposition~\ref{prop:fisher-gap}). At a fixed forgetting budget, uniform regularization incurs regret scaling with the layer condition number $\kappa = \max_\ell s_\ell / \min_\ell s_\ell$ (Theorem~\ref{thm:suboptimality}). Within the scalar-per-layer family, the minimum-regret choice is $\lambda_\ell \propto s_\ell$ (Theorem~\ref{thm:optimal}). We measure $\kappa$ on six ImageNet-pretrained backbones (ResNet-18/34/50/101, ViT-B/16, ViT-L/16) and find values in the range $22$ to $195$, so the static view is leaving real performance unused.

Two recent CL methods already make a layer-dependent choice. TUNA~\cite{tuna2025} restricts each layer's update to a per-layer low-rank adapter. SLCA~\cite{zhang2023slca} uses a two-group learning-rate schedule for backbone and head. Each implicitly imposes a different prior on $s_\ell$, and our analysis explains where each helps and where each does not. To probe the prescription on a method the framework directly covers, we also extend EWC with a one-parameter geometric schedule $\lambda_\ell = c/\alpha^{\ell-1}$, sweep $\alpha$ from below one to above one, and compare against $\lambda_\ell\propto s_\ell$ from measured Hessians. The contribution of this paper is therefore both diagnostic and prescriptive. It identifies the static view in regularization-based CL as the part of the design space leaving performance unused, and gives the closed-form recipe $\lambda_\ell \propto s_\ell$ for what to do instead. Protect early layers strongly, let deeper layers move.


\section{Related work}
\label{sec:related}

\paragraph{Regularization-based CL.}
EWC~\cite{kirkpatrick2017overcoming}, as introduced in Eq.~\ref{eq:penalty}, sets $W$ to the diagonal Fisher. SI~\cite{zenke2017si} and MAS~\cite{aljundi2018mas} use path-integral and output-sensitivity importance, RWalk~\cite{chaudhry2018riemannian} unifies the two through a KL-based Riemannian formulation, and Progress \& Compress~\cite{schwarz2018progress} introduces online EWC. All produce per-parameter diagonal weights. 
Replay~\cite{lopez2017gradient,chaudhry2019efficient,rebuffi2017icarl} and parameter-isolation~\cite{serra2018overcoming} families are orthogonal to this analysis. 

\paragraph{Layer-adaptive CL.}
SLCA~\cite{zhang2023slca} applies a smaller learning rate to the pretrained backbone than to the head, a two-group layer-adaptive update rule on ViT. The prompt family~\cite{wang2022l2p,wang2022dualprompt,wang2022sprompts,smith2023coda,wang2023hide} can be read as the limit where most of the backbone is frozen and only an input-prompt subspace updates. Each of these methods imposes its own prior on which layers are more sensitive. 

\paragraph{Layer-wise scaling.}
Layer-wise update scaling has independent roots in optimization (LARS~\cite{you2017lars}, LAMB~\cite{you2020lamb}) and fine-tuning (ULMFiT~\cite{howard2018ulmfit}). Our scalar-per-layer family is the regularization-side analog. Second-order weight importance descends from Optimal Brain Damage~\cite{lecun1990obd}. More recent instances include WoodFisher~\cite{singh2020woodfisher}, EigenDamage~\cite{wang2019eigendamage}, and SparseGPT~\cite{frantar2023sparsegpt}, alongside first-order signals such as magnitude pruning~\cite{han2015learning} and the Lottery Ticket Hypothesis~\cite{frankle2019lottery}. Proposition~\ref{prop:fisher-gap} is in the same spirit, specialised to the per-layer aggregation CL penalties implicitly perform.

\paragraph{Per-layer Hessian sensitivity.}
Adversarial bit-flip attacks~\cite{rakin2019bfa,rakin2020tbt,rakin2022tbfa,yao2020mbda,chen2021proflip} concentrate the destructive bits in a small number of layers, an empirical lower bound on per-layer sensitivity spread. The empirical Hessian of trained networks has a bulk-plus-outlier structure~\cite{sagun2016eigenvalues,sagun2018empirical,papyan2020traces,papyan2019measurements,papyan2020prevalence,martin2021implicit}, with scalable estimators in PyHessian~\cite{yao2020pyhessian} and Lanczos~\cite{ghorbani2019investigation}, both validating the per-layer power-iteration we use for $s_\ell$. Block-diagonal preconditioners K-FAC~\cite{martens2020ngd,grosse2016kfc} and Shampoo~\cite{gupta2018shampoo} exploit the same structure; Kunstner et al.~\cite{kunstner2019limitations} caution against empirical-Fisher approximations that we sidestep by computing the true Hessian. The flat-minima line~\cite{keskar2017large,hochreiter1997flat,dinh2017sharp,foret2021sam,kwon2021asam} operationalises top-eigenvalue sensitivity at training time. Theoretical CL through the NTK lens~\cite{bennani2020generalisation,doan2021ntk,mirzadeh2021linear} provides landscape-level evidence that the CL optimum lives in the low-curvature region we exploit per layer.

\section{Methodology}
\label{sec:methodology}

Let $\theta$ be the parameters of a neural network, partitioned by architecture into $L$ layer blocks $\theta=(\theta^{(1)},\dots,\theta^{(L)})$ with $\theta^{(\ell)}\in\R^{d_\ell}$, where each residual or transformer block counts as one layer. For a vector $v\in\R^d$, $v^{(\ell)}$ denotes its layer-$\ell$ part. For a matrix $A\in\R^{d\times d}$ acting on the full parameter vector (such as the Hessian), $A^{(\ell,\ell')}$ denotes the block of rows in layer $\ell$ and columns in layer $\ell'$.

Tasks $\mathcal{T}_1,\mathcal{T}_2,\dots$ arrive sequentially with losses $L_k$. Let $\theta^\star$ be the parameters after training on tasks $1,\dots,k-1$, and let $\theta^\star+\Dtheta$ be the parameters after additionally training on task $k$. Forgetting on task $k-1$ is
\begin{equation}
\mathcal{F}(\Dtheta)\;:=\;L_{k-1}(\theta^\star+\Dtheta)-L_{k-1}(\theta^\star)\;\ge\;0.
\end{equation}
Let $H=\nabla^2 L_{k-1}(\theta^\star)$. The gradient vanishes at $\theta^\star$, so Taylor expansion gives
\begin{equation}
\label{eq:taylor}
\mathcal{F}(\Dtheta)\;=\;\tfrac{1}{2}\,\Dtheta^\top H\,\Dtheta\;+\;\mathcal O\bigl(\norm{\Dtheta}^3\bigr).
\end{equation}
This second-order regime is the same one EWC~\cite{kirkpatrick2017overcoming} uses to derive its Fisher penalty, and every result in the paper concerns the quadratic term.

\begin{assumption}[Block-diagonal Hessian]
\label{ass:block-diag}
$H^{(\ell,\ell')}=0$ for $\ell\ne\ell'$. The Hessian couples parameters within a layer but not across layers.
\end{assumption}

Assumption~\ref{ass:block-diag} is the layer-level analogue of the diagonal-Fisher approximation EWC already relies on. It is exact in fully factorized architectures and approximate in real ones. Skip connections, LayerNorm, and attention all create off-diagonal blocks. Appendix~\ref{app:rho} gives a continuous relaxation. All main-text results use the block-diagonal form.

\begin{theorem}[Per-layer decomposition]
\label{thm:decomp}
Under Assumption~\ref{ass:block-diag}, to the second order in $\norm{\Dtheta}$, we have
\begin{equation}
\mathcal{F}(\Dtheta)\;=\;\tfrac{1}{2}\sum_{\ell=1}^{L}\bigl(\Dtheta^{(\ell)}\bigr)^\top H^{(\ell,\ell)}\,\Dtheta^{(\ell)} + \mathcal O(\norm{\Delta \theta}^3).
\end{equation}
\end{theorem}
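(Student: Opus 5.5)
The argument starts from the Taylor expansion in Eq.~\eqref{eq:taylor}. Since $\theta^\star$ is the previous-task optimum, $\nabla L_{k-1}(\theta^\star)=0$, so forgetting equals $\tfrac12\Dtheta^\top H\Dtheta$ up to an $\mathcal O(\norm{\Dtheta}^3)$ remainder. This form assumes $L_{k-1}$ is $C^3$, or at least has a locally Lipschitz Hessian, near $\theta^\star$. I take this expansion as given, exactly as the paper does. The whole proof then reduces to rewriting the quadratic form in layer-block coordinates.

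Next, I split $\Dtheta=(\Dtheta^{(1)},\dots,\Dtheta^{(L)})$ according to the layer partition and write $H$ as the $L\times L$ array of blocks $H^{(\ell,\ell')}$. Block matrix multiplication gives the exact identity
\begin{equation*}
\Dtheta^\top H\Dtheta=\sum_{\ell=1}^{L}\sum_{\ell'=1}^{L}\bigl(\Dtheta^{(\ell)}\bigr)^\top H^{(\ell,\ell')}\Dtheta^{(\ell')}.
\end{equation*}
Assumption~\ref{ass:block-diag} sets every term with $\ell\neq\ell'$ to zero, leaving only the diagonal sum $\sum_\ell (\Dtheta^{(\ell)})^\top H^{(\ell,\ell)}\Dtheta^{(\ell)}$. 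Substituting this into the Taylor expansion and keeping the same remainder term gives the stated formula.

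I expect no real obstacle here. The only points needing care are stating the regularity that makes the remainder cubic rather than $o(\norm{\Dtheta}^2)$, and noting that the assumption is applied only to the Hessian at $\theta^\star$. The third-order remainder itself is not assumed block-separable, which is why it is kept as a single global $\mathcal O(\norm{\Dtheta}^3)$ term. As a side remark useful for later results, each block $H^{(\ell,\ell)}$ is a principal submatrix of $H$, so it is symmetric. At a local minimum it is also positive semidefinite, so every per-layer term is nonnegative and bounded above by $\tfrac12 s_\ell\norm{\Dtheta^{(\ell)}}^2$.
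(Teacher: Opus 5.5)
Your proposal is correct and follows the paper's own proof: take the Taylor expansion of Eq.~\eqref{eq:taylor} at the stationary point $\theta^\star$, expand $\Dtheta^\top H\Dtheta$ blockwise, and drop the off-diagonal blocks using Assumption~\ref{ass:block-diag}. Your extra remarks are accurate and fill in details the paper leaves implicit: the regularity needed for a cubic remainder, and the fact that the diagonal blocks are positive semidefinite, which gives the per-layer bound by $s_\ell$.
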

\begin{proof}
Apply the Taylor expansion of Eq.~\ref{eq:taylor}. Under Assumption~\ref{ass:block-diag}, $\Dtheta^\top H\Dtheta=\sum_\ell (\Dtheta^{(\ell)})^\top H^{(\ell,\ell)}\Dtheta^{(\ell)}$.
\end{proof}

Every unit of forgetting now belongs to a specific layer, which turns the question of how to spread a forgetting budget across layers into a well-posed optimization rather than a heuristic.

\begin{definition}[Layer sensitivity]
The sensitivity of layer $\ell$ is the top eigenvalue of the layer's Hessian block, namely
\begin{equation}
s_\ell:=\lambda_{\max}(H^{(\ell,\ell)}).
\end{equation}
\end{definition}

The upper bound on forgetting follows immediately:
\begin{equation}
\mathcal{F}(\Dtheta)\;\le\;\tfrac{1}{2}\sum_\ell s_\ell\,\norm{\Dtheta^{(\ell)}}_2^2.
\end{equation}
The sensitivity $s_\ell$ acts as a per-layer price tag on displacement, and the rest of the paper revolves around this bound.

\subsection{The layer condition number}
\label{sec:kappa}

\begin{definition}[Layer condition number]
\label{def:kappa}
$\kappa:=\dfrac{\max_\ell s_\ell}{\max(\min_\ell s_\ell,\,\varepsilon)}$, with a floor $\varepsilon>0$ to handle layers with a flat direction. We use $\varepsilon=10^{-8}\max_\ell s_\ell$.
\end{definition}

Our empirical claim, which we call the Layer-Importance Hypothesis (LIH), is that on standard deep networks trained to convergence, $\kappa$ is non-trivial and falls in the range $10$--$10^3$. We verify this directly in Figure~\ref{fig:lih}, which reports power-iteration measurements of $s_\ell$ on ImageNet-pretrained ResNet-50 and ViT-B/16 (full procedure and per-layer numbers in Appendix~\ref{app:hessian}). On ResNet-50 we measure $\kappa = 134.6$, with the early conv stage layer1 dominating ($s_{\text{layer1}} = 3{,}167$) and the fc head 134$\times$ less sensitive ($s_{\text{fc}} = 23.5$). On ViT-B/16 we measure $\kappa = 27.4$, with the patch-embedding stem dominating ($s_{\text{patch\_embed}} = 78.2$) and the transformer blocks roughly within an order of magnitude of each other. Both profiles are heavy-tailed and consistent with what the bit-flip, Optimal Brain Damage, and Hessian-spectrum literature already report~\cite{rakin2019bfa,yao2020mbda,lecun1990obd,papyan2020traces,sagun2018empirical}.

\begin{figure}
\centering
\includegraphics[width=\linewidth]{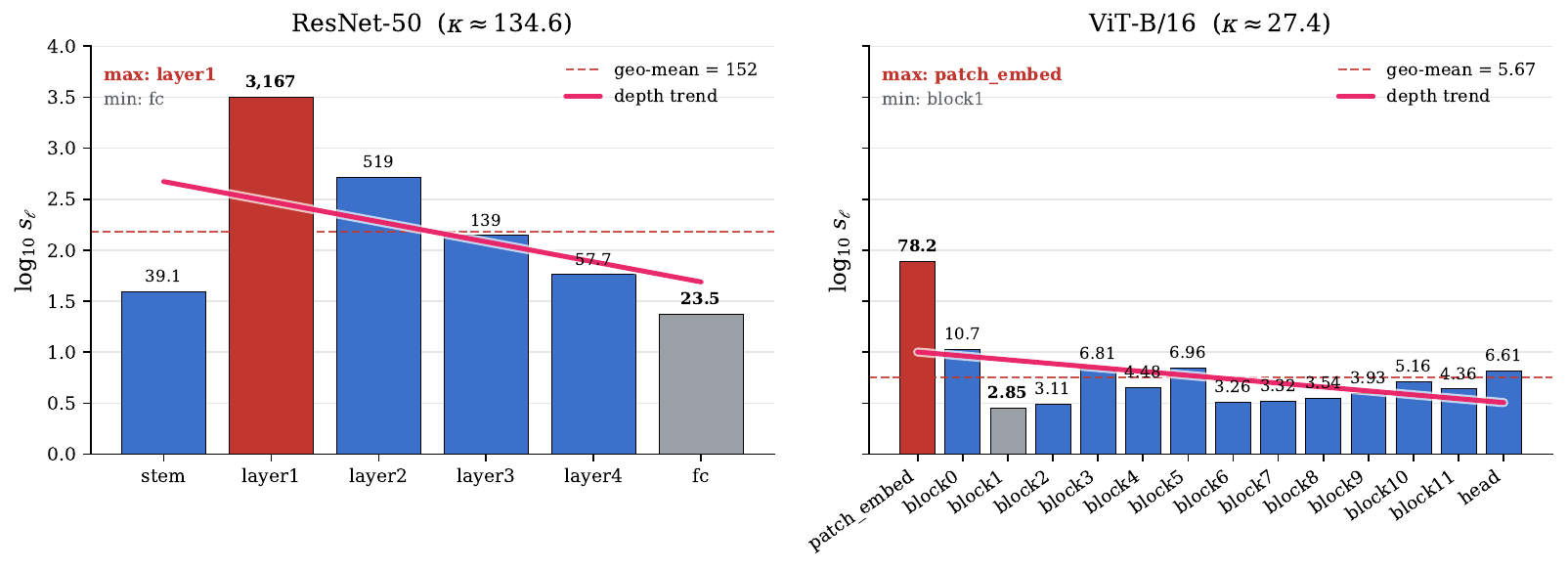}
\caption{Measured per-layer Hessian top eigenvalue $s_\ell$ on ResNet-50 (left) and ViT-B/16 (right), pretrained on ImageNet, evaluated at a 100-class cross-entropy minibatch. $y$-axis is $\log_{10} s_\ell$. ResNet-50: $\kappa = 134.6$, with layer1 dominating. ViT-B/16: $\kappa = 27.4$, with patch\_embed dominating and the twelve transformer blocks within an order of magnitude of each other. Both profiles are heavy-tailed. 
}
\label{fig:lih}
\end{figure}

To measure $s_\ell$ on a real network, run power iteration on $H^{(\ell,\ell)}$ using Hessian-Vector Products (HVPs). The cost is $k$ HVPs per layer. 10 to 20 iterations suffice whenever the second eigenvalue is well separated from the first, which is the typical regime for trained networks because the per-layer Hessian is near low-rank. On very wide layers where power iteration is expensive, $\tr(H^{(\ell,\ell)})/d_\ell$ recovered from a single Hutchinson HVP is a cheaper proxy. The two quantities can disagree by a factor in $[1,d_\ell]$ (Proposition~\ref{prop:fisher-gap}).

\subsection{Diagonal Fisher cannot see layer sensitivity}
\label{sec:fisher-gap}

EWC's per-parameter Fisher weights do vary across layers in the sense that the sum of Fisher values in one layer differs from another. But that sum is a weak summary of the layer's actual curvature. The next proposition quantifies how weak.

\begin{proposition}[Diagonal-spectral gap]
\label{prop:fisher-gap}
For any positive semidefinite matrix $A\in\R^{d\times d}$,
\begin{equation}
1\;\le\;\frac{\lambda_{\max}(A)}{\tfrac{1}{d}\sum_p A_{pp}}\;\le\;d.
\end{equation}
The left inequality is an equality if and only if $A=\sigma I$. The right is an equality if and only if $A$ has rank at most $1$. Every value in $[1,d]$ is realized by some positive semidefinite $A$.
\end{proposition}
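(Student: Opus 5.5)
The whole argument reduces to the spectrum of $A$, so I will first rewrite the ratio in eigenvalue form. Since $A$ is symmetric positive semidefinite, it has eigenvalues $\lambda_1\ge\lambda_2\ge\dots\ge\lambda_d\ge 0$, and $\sum_p A_{pp}=\tr(A)=\sum_i\lambda_i$. The ratio in the statement is then
\begin{equation*}
r(A)\;=\;\frac{d\,\lambda_1}{\sum_{i=1}^d\lambda_i}.
\end{equation*}
This is only defined when $\tr(A)>0$. For PSD $A$ that is equivalent to $A\neq 0$, so I will state explicitly that $A=0$ is excluded, which is implicit in the proposition.

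\textbf{Left inequality.} Each $\lambda_i\le\lambda_1$, so $\sum_i\lambda_i\le d\lambda_1$, which gives $r(A)\ge 1$. Equality holds iff every $\lambda_i=\lambda_1$. By the spectral theorem, $A=Q\Lambda Q^\top$ with $\Lambda=\lambda_1 I$ then forces $A=\lambda_1 I$. So equality holds iff $A=\sigma I$ with $\sigma=\lambda_1>0$. The converse direction is immediate.

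\textbf{Right inequality.} Nonnegativity of the eigenvalues gives $\sum_i\lambda_i\ge\lambda_1$, so $r(A)\le d$. Equality holds iff $\lambda_2=\dots=\lambda_d=0$. Since the rank of a symmetric matrix is its number of nonzero eigenvalues, this is exactly $\operatorname{rank}(A)\le 1$. Combined with $A\ne 0$, this means $A=vv^\top$ for some $v\neq 0$.

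\textbf{Every value in $[1,d]$ is attained.} I will use the explicit family $A_t=\mathrm{diag}(1,t,\dots,t)$ with $t\in[0,1]$. This family is PSD, has $\lambda_{\max}=1$, and has trace $1+(d-1)t$, so
\begin{equation*}
r(A_t)=\frac{d}{1+(d-1)t}.
\end{equation*}
This function is continuous and strictly decreasing, running from $d$ at $t=0$ to $1$ at $t=1$. For a target $r\in[1,d]$ one can solve directly: $t=(d/r-1)/(d-1)$, which lies in $[0,1]$ (for $d=1$ the interval is the single point $\{1\}$ and any $\sigma>0$ works).

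No step is a real obstacle. The only point needing care is the degenerate case $A=0$ and the precise reading of the equality conditions: "rank at most $1$" must be read together with $A\ne 0$, and $\sigma I$ must have $\sigma>0$. I will state these conventions up front.
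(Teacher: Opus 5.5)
Your proof is correct and follows the paper's argument for both inequalities and their equality cases: trace as the sum of eigenvalues, mean $\le$ max, and sum $\ge$ max. Your explicit exclusion of $A=0$ and the requirement $\sigma>0$ also cover a degenerate case the paper leaves implicit.

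The only real difference is the witness for intermediate values. The paper uses $A_t=tI+(1-t)\mathbf{1}\mathbf{1}^\top$, whose diagonal is identically $1$. There, even the full diagonal, not just its mean, stays fixed while $\lambda_{\max}$ sweeps $[1,d]$, which is the stronger fact behind the paper's point about diagonal Fisher. Your $\mathrm{diag}(1,t,\dots,t)$ suffices for the statement as written, but there $\lambda_{\max}$ can be read directly off the diagonal.
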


\noindent\emph{Proof in Appendix~\ref{app:fisher-gap-proof}.}

For the per-layer Hessian block $H^{(\ell,\ell)}$, Proposition~\ref{prop:fisher-gap} says the following. Two layers with identical mean diagonal (and hence identical mean Fisher under the standard $F\approx H$ identity at exponential-family optima~\cite{martens2020ngd}) can have $s_\ell$ values that differ by a factor up to the smaller of their widths. The low-rank Hessian regime observed in trained networks~\cite{papyan2020prevalence,martin2021implicit} pushes this bound close to tight. EWC treats the two cases as identical, but they produce very different amounts of forgetting (Appendix~\ref{app:hessian}).

\subsection{Uniform regularization pays a price}
\label{sec:suboptimality}

The performance cost of the static, layer-uniform view comes from comparing two regularizers held to the same forgetting budget $B>0$.

The oracle regularizer uses the true per-layer Hessian:
\begin{equation}
\min_{\Dtheta}\;L_k(\theta^\star+\Dtheta)\quad\text{subject to}\quad \tfrac{1}{2}\sum_\ell (\Dtheta^{(\ell)})^\top H^{(\ell,\ell)}\Dtheta^{(\ell)}\;\le\;B.
\end{equation}
The constraint equals forgetting to second order by Theorem~\ref{thm:decomp}.
The uniform regularizer treats all displacement as equally costly:
\begin{equation}
\min_{\Dtheta}\;L_k(\theta^\star+\Dtheta)\quad\text{subject to}\quad \tfrac{1}{2}\,\mu\,\norm{\Dtheta}_2^2\;\le\;B,
\end{equation}
with $\mu$ chosen so that the resulting $\Dtheta$ produces the same actual forgetting $B$ under the true Hessian. This $\mu$-calibration step blocks the trivial reply that uniform could just tighten its penalty. Both regularizers now respect the same forgetting budget, so the comparison isolates the stability-plasticity trade-off, namely the new-task accuracy each regularizer sacrifices to hit that fixed forgetting level. Only the allocation of the budget across layers differs.

The bound below uses two further quantities. Write $g^{(\ell)}:=\nabla_{\theta^{(\ell)}} L_k(\theta^\star)$ for the new-task gradient at the old optimum, restricted to layer $\ell$, and $Q^{(\ell,\ell)}:=\nabla^2_{\theta^{(\ell)}} L_k(\theta^\star)$ for the new-task Hessian block. We assume the new-task Hessian is well-conditioned in the standard sense, with smallest and largest eigenvalues satisfying $\mu_Q\le \lambda_{\min}(Q^{(\ell,\ell)})$ and $\lambda_{\max}(Q^{(\ell,\ell)})\le M_Q$ uniformly across layers. Equivalently: $\mu_Q I\preceq Q^{(\ell,\ell)}\preceq M_Q I$, where $\mu_Q$ is a uniform lower curvature bound and $M_Q$ a uniform upper curvature bound on the new-task quadratic. Let $v^{(\ell)}$ be a top eigenvector of $H^{(\ell,\ell)}$, and define the alignment
\begin{equation}
\chi_\ell\;:=\;\frac{\norm{g^{(\ell)\top} v^{(\ell)}}}{\norm{g^{(\ell)}}_2}\;\in\;[0,1].
\end{equation}
The alignment $\chi_\ell$ measures how much of the new-task gradient on layer $\ell$ points along the old-task fragile direction. When $\chi_\ell$ is small, the new task is not pushing on anything the old task cares about.

\begin{theorem}[Regret of uniform regularization]
\label{thm:suboptimality}
Assume $Q^{(\ell,\ell')}=0$ for $\ell\ne\ell'$ (Appendix~\ref{app:thm-suboptimality} treats the coupled case). The regret of the uniform regularizer, $\mathrm{Regret}:=\tilde L_k(\Dtheta^\star_{\mathrm{unif}})-\tilde L_k(\Dtheta^\star_{\mathrm{oracle}})$, satisfies
\begin{equation}
\mathrm{Regret}\;\ge\;c_0\cdot\frac{\kappa-1}{\kappa}\cdot\max_{\ell\in\mathcal{L}^+}\,\frac{\chi_\ell^2\,\norm{g^{(\ell)}}_2^2}{s_\ell},\qquad c_0:=\frac{\mu_Q}{M_Q+\mu_Q}\in\bigl(0,\tfrac12\bigr],
\end{equation}
where $\mathcal{L}^+:=\{\ell:s_\ell>\min_{\ell'}s_{\ell'}\}$ is everything except the most flexible layer. The gap vanishes when $\kappa=1$, vanishes when every $\chi_\ell$ for $\ell\in\mathcal{L}^+$ is zero, and is strictly positive and monotone non-decreasing in $\kappa$ otherwise.
\end{theorem}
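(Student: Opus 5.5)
We work with the quadratic model $\tilde L_k(\Dtheta)=L_k(\theta^\star)+g^\top\Dtheta+\tfrac12\Dtheta^\top Q\Dtheta$. By the assumption $Q^{(\ell,\ell')}=0$ and Assumption~\ref{ass:block-diag}, both constrained problems separate layer by layer once a single multiplier is fixed. The plan is to write both optimizers in KKT form, fix the calibrated forgetting level $B$, and lower-bound the objective gap by a strong-convexity argument. For the oracle, the Lagrangian condition gives $\Dtheta^{(\ell)}_{\mathrm{oracle}}=-(Q^{(\ell,\ell)}+\eta H^{(\ell,\ell)})^{-1}g^{(\ell)}$ for some $\eta\ge0$. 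For the uniform regularizer it gives $\Dtheta^{(\ell)}_{\mathrm{unif}}=-(Q^{(\ell,\ell)}+\nu I)^{-1}g^{(\ell)}$, with $\nu=\eta'\mu$. The calibration of $\mu$ means only that $\nu$ is chosen so that $\tfrac12\sum_\ell(\Dtheta^{(\ell)}_{\mathrm{unif}})^\top H^{(\ell,\ell)}\Dtheta^{(\ell)}_{\mathrm{unif}}=B$. Hence $\Dtheta_{\mathrm{unif}}$ is feasible for the oracle problem, and the regret is nonnegative.

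The key step is a strong-convexity lower bound. Let $\Dtheta_o$ be the oracle solution and $\Dtheta_u$ the calibrated uniform one. Combining convexity of the constraint with the KKT condition $\nabla\tilde L_k(\Dtheta_o)=-\eta H\Dtheta_o$, we get $\nabla\tilde L_k(\Dtheta_o)^\top(\Dtheta_u-\Dtheta_o)\ge0$. Therefore
\[
\mathrm{Regret}\ge\tfrac{\mu_Q}{2}\norm{\Dtheta_u-\Dtheta_o}_2^2\ge\tfrac{\mu_Q}{2}\max_\ell\norm{\Dtheta_u^{(\ell)}-\Dtheta_o^{(\ell)}}_2^2 .
\]
It then suffices to lower-bound, for each $\ell\in\mathcal L^+$, the component of $\Dtheta_u^{(\ell)}-\Dtheta_o^{(\ell)}$ along the top eigenvector $v^{(\ell)}$. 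To make this tractable, I would first do the computation in the case where $Q^{(\ell,\ell)}$ and $H^{(\ell,\ell)}$ commute. Along $v^{(\ell)}$ the two displacements are then $-\chi_\ell\norm{g^{(\ell)}}/(q+\eta s_\ell)$ and $-\chi_\ell\norm{g^{(\ell)}}/(q+\nu)$, with $q\in[\mu_Q,M_Q]$. Their difference is
\[
\chi_\ell\norm{g^{(\ell)}}\,\frac{|\eta s_\ell-\nu|}{(q+\eta s_\ell)(q+\nu)} .
\]
For the general non-commuting case I would replace exact eigen-coordinates by the Loewner sandwich $\mu_Q I\preceq Q\preceq M_QI$ and project onto $v^{(\ell)}$, accepting constant losses that get absorbed into $c_0$.

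The main obstacle is relating $|\eta s_\ell-\nu|$ to $\kappa$, because both multipliers are set implicitly by the budget. The plan is a pigeonhole argument. The uniform multiplier $\nu$ is a single number, while the oracle's effective per-layer penalties $\eta s_\ell$ range over a factor $\kappa$. So $\nu$ cannot match both $\eta s_{\max}$ and $\eta s_{\min}$, and for some layer in $\mathcal L^+$ we get $|\eta s_\ell-\nu|/(\eta s_\ell)\ge 1-1/\kappa=(\kappa-1)/\kappa$. Budget equality should also force $\nu$ into $[\eta s_{\min},\eta s_{\max}]$: otherwise every layer would be uniformly over- or under-penalized, and the forgetting could not equal $B$.

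With that in hand, the remaining steps are routine. Bound $(q+\eta s_\ell)(q+\nu)$ using the sandwich, normalize the scales by $s_\ell$, and collect the constants into $c_0=\mu_Q/(M_Q+\mu_Q)$. This produces the $\chi_\ell^2\norm{g^{(\ell)}}^2/s_\ell$ form. I expect the honest version to have some slack here, possibly only yielding the bound for the layer the pigeonhole selects rather than the maximum over $\mathcal L^+$. If so, I would try to strengthen the argument by noting that $\nu\le\eta s_{\min}$-type comparisons apply to every $\ell\in\mathcal L^+$ simultaneously.

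The boundary claims follow directly from the bound. If $\kappa=1$, then $H^{(\ell,\ell)}$-weighting is proportional to uniform along the top directions and the factor $(\kappa-1)/\kappa$ vanishes. If $\chi_\ell=0$, the projection term is zero. Monotonicity in $\kappa$ is immediate from the factor $(\kappa-1)/\kappa$.
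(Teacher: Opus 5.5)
The step that fails is the pigeonhole claim that some $\ell\in\mathcal{L}^+$ satisfies $|\eta s_\ell-\nu|/(\eta s_\ell)\ge(\kappa-1)/\kappa$.

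Budget matching only places $\nu$ between the extreme effective penalties, at a point fixed by the gradient-weighted budget equation. For the top layer, $1-\nu/(\eta s_{\max})\ge(\kappa-1)/\kappa$ holds iff $\nu\le\eta s_{\min}$, i.e.\ only at the bottom endpoint of the interval you just derived. Your fallback of applying ``$\nu\le\eta s_{\min}$-type comparisons'' across $\mathcal{L}^+$ asks for $\nu$ at or below that endpoint. The mismatch that pigeonhole guarantees can sit entirely on the least sensitive layer, which is excluded from $\mathcal{L}^+$ and need not carry any gradient. A mismatch is also worthless unless the same layer has $\chi_\ell\norm{g^{(\ell)}}_2\neq 0$.

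In fact the inequality is false as stated. Take $L=2$, $d_1=d_2=1$, $H=\mathrm{diag}(S,s)$ with $S>s>0$, $Q=qI$ (so $c_0=\tfrac12$), $g=(a,0)$ with $a\neq 0$, and $B<Sa^2/(2q^2)$. Both problems keep $\Dtheta^{(2)}=0$, and calibration forces $\nu=\eta S$. Hence $\Dtheta_u=\Dtheta_o$ and the regret is $0$. Meanwhile $\mathcal{L}^+=\{1\}$, $\chi_1=1$, and the claimed bound is $\tfrac12\cdot\tfrac{\kappa-1}{\kappa}\cdot a^2/S>0$.

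Your ``routine'' final step also fails independently. Rescaling $(H,B)\mapsto(tH,tB)$ leaves unchanged both feasible sets, both minimizers, $\nu$, $\eta s_\ell$, $\kappa$, $\chi_\ell$ and $c_0$, hence the regret. It does, however, multiply the right-hand side by $1/t$. Your intermediate bound $\tfrac{\mu_Q}{2}\chi_\ell^2\norm{g^{(\ell)}}_2^2(\eta s_\ell-\nu)^2/\bigl((q+\eta s_\ell)^2(q+\nu)^2\bigr)$ is correctly invariant under this rescaling. So no normalization can turn it into something proportional to $1/s_\ell$.

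The paper's Appendix~B follows essentially your route: per-layer KKT, budget matching that puts $\mu$ strictly between $\min_\ell s_\ell$ and $\max_\ell s_\ell$, and restriction to the top mode of one layer. It closes the argument with precisely your fallback, ``$\mu\le\min_\ell s_\ell$ in the worst case'', which contradicts its own budget-matching step. It also uses a single multiplier $\beta$ for both problems and discards the negative terms with $h_i<\mu$ when keeping one mode. Finally, it converts $\beta/\bigl((\mu_Q+\beta s_\ell)(\mu_Q+\beta\mu)\bigr)$ into $c_0/s_\ell$, which is exactly the scale-inconsistent move ruled out above. So the paper does not supply the missing idea.

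The first half of your argument is sound and more careful than the paper's. You use distinct multipliers, get feasibility of $\Dtheta_u$ from calibration, and obtain $\mathrm{Regret}\ge\tfrac{\mu_Q}{2}\norm{\Dtheta_u-\Dtheta_o}_2^2$ from the variational inequality plus strong convexity. In the commuting case this gives $\mathrm{Regret}\ge\tfrac{\mu_Q}{2}\sum_\ell\chi_\ell^2\norm{g^{(\ell)}}_2^2(\eta s_\ell-\nu)^2/\bigl((M_Q+\eta s_\ell)^2(M_Q+\nu)^2\bigr)$. This is forced positive (for $\eta>0$) once two layers with $\chi_\ell\norm{g^{(\ell)}}_2\neq 0$ have different $s_\ell$. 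Any correct $\kappa$-dependent version must carry such a two-layer condition. It must also depend on the old-task curvature only through scale-free quantities such as $\eta s_\ell$.
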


Three factors multiply. The heterogeneity amplifier $(\kappa-1)/\kappa$ grows from $0$ to $1$ as the layer spectrum spreads. The alignment $\chi_\ell^2$ captures whether the new-task gradient actually pushes on fragile directions. When it does not, uniform and oracle agree. The per-layer unit $\norm{g^{(\ell)}}_2^2/s_\ell$ is the ratio of new-task pressure on layer $\ell$ to the layer's rigidity.

\noindent\emph{Proof in Appendix~\ref{app:thm-suboptimality}.}

\begin{corollary}[Large-$\kappa$ asymptotic]
\label{cor:large-kappa}
Fix $s_{\max}:=\max_\ell s_\ell$ and let $\kappa\to\infty$ by $\min_\ell s_\ell\downarrow 0$. Writing $\hat\ell:=\argmax_\ell s_\ell$,
\begin{equation}
\lim_{\kappa\to\infty}\,\mathrm{Regret}\;\ge\;c_0\cdot\frac{\chi_{\hat\ell}^2\,\norm{g^{(\hat\ell)}}_2^2}{s_{\max}}.
\end{equation}
\end{corollary}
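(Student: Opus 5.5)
The plan is to derive the corollary directly from Theorem~\ref{thm:suboptimality} by taking the limit along the prescribed path. Along this path $s_{\max}$, the gradients $g^{(\ell)}$, the alignments $\chi_\ell$ and the constants $\mu_Q,M_Q$ stay fixed, and only $\min_\ell s_\ell\downarrow 0$ changes. For every $\kappa$ along the path, Theorem~\ref{thm:suboptimality} gives
\begin{equation}
\mathrm{Regret}\;\ge\;c_0\,\frac{\kappa-1}{\kappa}\,\max_{\ell\in\mathcal{L}^+}\frac{\chi_\ell^2\norm{g^{(\ell)}}_2^2}{s_\ell}\;\ge\;c_0\,\frac{\kappa-1}{\kappa}\,\frac{\chi_{\hat\ell}^2\norm{g^{(\hat\ell)}}_2^2}{s_{\max}}.
\end{equation}
The second inequality holds because the maximum over $\mathcal{L}^+$ dominates any single admissible term, in particular the term $\ell=\hat\ell$.

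The first step is to check that $\hat\ell\in\mathcal{L}^+$. Once $\min_\ell s_\ell<s_{\max}$, which holds for all sufficiently large $\kappa$ since $\min_\ell s_\ell\to 0$ while $s_{\max}$ is fixed and positive, we have $s_{\hat\ell}=s_{\max}>\min_{\ell'}s_{\ell'}$. Hence $\hat\ell\in\mathcal{L}^+$, and the term evaluated at $\hat\ell$ has $s_{\hat\ell}=s_{\max}$ held constant. This choice matters. The maximizing layer in $\mathcal{L}^+$ may change with $\kappa$, and terms with small $s_\ell$ could blow up. We avoid both issues by lower-bounding with the fixed index $\hat\ell$, which never drifts. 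If several layers tie for the maximum, any one of them works.

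The second step is to pass to the limit. The right-hand side is $c_0\,(1-1/\kappa)$ times a constant, so it converges to $c_0\,\chi_{\hat\ell}^2\norm{g^{(\hat\ell)}}_2^2/s_{\max}$. Note that $\kappa$ grows without bound, at least until the floor $\varepsilon=10^{-8}s_{\max}$ of Definition~\ref{def:kappa} is hit. I will read the limit as letting $\varepsilon\to0$ alongside, or equivalently as the $1-1/\kappa\to1$ regime. Since the inequality holds pointwise, taking $\liminf$ on the left gives the claim. Strictly the statement is about $\liminf\mathrm{Regret}$, or about $\lim$ when it exists, and I would remark on this.

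The main obstacle is the legitimacy of holding the other quantities fixed. The regret itself depends on the whole configuration, including the oracle and uniform solutions and the calibrated $\mu$. In particular, $c_0$ and the $Q$-bounds must not degrade as $\min_\ell s_\ell\to0$. Since $\mu_Q$ and $M_Q$ concern the new-task Hessian $Q$ rather than $H$, they are untouched by the path, so the constants in Theorem~\ref{thm:suboptimality} are uniform along it. Given that, the argument is only monotone limit-taking.
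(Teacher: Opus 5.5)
Your argument is correct and matches the derivation the paper leaves implicit; the corollary is stated without a separate proof. Since $\hat\ell\in\mathcal{L}^+$ once $\kappa>1$, the maximum in Theorem~\ref{thm:suboptimality} dominates the fixed $\hat\ell$-term, and $(\kappa-1)/\kappa\to 1$ along the path. Your caveats are sound sharpenings of the statement, not gaps in your proof: the floor $\varepsilon=10^{-8}s_{\max}$ in Definition~\ref{def:kappa} caps $\kappa$ at $10^{8}$, so the limit must be read with the unfloored $\kappa$ (otherwise one only obtains the bound times $1-10^{-8}$), and $\lim$ should be read as $\liminf$.
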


\subsection{The optimal per-layer strength}
\label{sec:optimal}

Staying inside the scalar-per-layer family
\begin{equation}
\label{eq:scalar-family}
\Omega(\Dtheta)\;=\;\tfrac{1}{2}\sum_\ell \lambda_\ell\,\norm{\Dtheta^{(\ell)}}_2^2
\end{equation}
is the simplest strict improvement on uniform, and it is what our proposed adaptive methods in Section~\ref{sec:empirical} actually implement.

\begin{theorem}[Optimal $\lambda_\ell$]
\label{thm:optimal}
Fix the worst-case displacement direction in each layer $\Dtheta^{(\ell)}$ aligned with the top eigenvector $v^{(\ell)}$ of $H^{(\ell,\ell)}$. Among regularizers of the form in Eq.~\ref{eq:scalar-family}, the choice that matches the oracle's penalty on this direction, up to one global constant shared across layers, is
\begin{equation}
\lambda_\ell\;\propto\;s_\ell.
\end{equation}
No scalar choice reproduces the oracle on every direction unless every $H^{(\ell,\ell)}$ is a scalar multiple of the identity.
\end{theorem}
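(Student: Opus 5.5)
The plan is to compare, layer by layer, the quadratic form the oracle uses (the forgetting expression of Theorem~\ref{thm:decomp}) with the one induced by the scalar-per-layer family in Eq.~\ref{eq:scalar-family}. Both are block-separable: the oracle contributes $\tfrac12(\Dtheta^{(\ell)})^\top H^{(\ell,\ell)}\Dtheta^{(\ell)}$ on layer $\ell$, and the scalar regularizer contributes $\tfrac12\lambda_\ell\norm{\Dtheta^{(\ell)}}_2^2$. So "matching the oracle up to one global constant" reduces to requiring, for each layer separately, that the two per-layer penalties agree up to a common factor $c>0$ on the chosen direction.

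For the first claim, I would set $\Dtheta^{(\ell)}=t_\ell v^{(\ell)}$ with $\norm{v^{(\ell)}}_2=1$ and $t_\ell\in\R$ arbitrary. The eigen-equation $H^{(\ell,\ell)}v^{(\ell)}=s_\ell v^{(\ell)}$ gives an oracle penalty of $\tfrac12 s_\ell t_\ell^2$ and a scalar penalty of $\tfrac12\lambda_\ell t_\ell^2$. Requiring $\tfrac12\lambda_\ell t_\ell^2=c\cdot\tfrac12 s_\ell t_\ell^2$ for all $t_\ell$ and all $\ell$, with the same $c$, forces $\lambda_\ell=c\,s_\ell$. This gives both existence and uniqueness up to the global constant. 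I would add one remark: this direction is the worst case in the sense of the bound $\mathcal F\le\tfrac12\sum_\ell s_\ell\norm{\Dtheta^{(\ell)}}_2^2$, because $s_\ell$ is the Rayleigh-quotient maximum. With this choice the scalar regularizer therefore upper-bounds the oracle on every direction, and it is tight exactly on the top eigendirections.

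For the second claim, suppose a single choice satisfies $\lambda_\ell\norm{x}_2^2=c\,x^\top H^{(\ell,\ell)}x$ for every $x\in\R^{d_\ell}$. Two symmetric matrices whose quadratic forms agree are equal, which follows by polarization. Hence $H^{(\ell,\ell)}=(\lambda_\ell/c)I$ for every $\ell$. The converse is immediate: if every block is $\sigma_\ell I$, then $s_\ell=\sigma_\ell$ and $\lambda_\ell=c\,s_\ell$ reproduces the oracle exactly. Proposition~\ref{prop:fisher-gap} offers an alternative route. Matching on all directions forces every Rayleigh quotient of $H^{(\ell,\ell)}$ to equal $s_\ell$, so $\lambda_{\max}$ equals the mean diagonal, and the equality case of that proposition gives $A=\sigma I$.

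I do not expect any step to be a real obstacle. The only delicate points are interpretive. First, "matches on this direction" has to be read as agreement for all scalings $t_\ell$ with one constant $c$ shared across layers; otherwise each layer could absorb its own constant and the statement would be vacuous. Second, layers with $s_\ell=0$ should be handled by allowing $\lambda_\ell=0$, or by applying the $\varepsilon$-floor of Definition~\ref{def:kappa}.
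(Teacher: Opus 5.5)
Your proposal is correct and follows the paper's argument. The top-eigenvector substitution, giving $\tfrac12 s_\ell t_\ell^2$ versus $\tfrac12\lambda_\ell t_\ell^2$ with a layer-independent ratio, is exactly Appendix~\ref{app:thm-optimal}. For the second claim, your polarization step (equal quadratic forms force $\lambda_\ell I = c\,H^{(\ell,\ell)}$) reaches the paper's conclusion, but it actually justifies the passage from ``every direction'' to matrix equality, which the paper's degree-of-freedom count only asserts; the converse and the alternative route through the equality case of Proposition~\ref{prop:fisher-gap} are valid additions.
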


\noindent\emph{Proof in Appendix~\ref{app:thm-optimal}.}

Theorem~\ref{thm:optimal} fixes $\lambda_\ell$ to a noisy per-layer top eigenvalue, and on standard backbones the dynamic range of $s_\ell$ spans more than two orders of magnitude (Figure~\ref{fig:lih}), so a literal schedule over-protects the most curvature-dominant layer in absolute terms and starves the rest of plasticity. A one-parameter approximation that fits only the depth direction provides implicit smoothing across layers, motivating the following corollary below.

\begin{corollary}[Geometric schedules]
\label{cor:geom}
If $s_\ell$ decays or grows geometrically with depth, $s_\ell\propto \gamma^{\ell-1}$, then the optimal $\lambda_\ell\propto\gamma^{\ell-1}$. This is the one-parameter family our experiments in Section~\ref{sec:empirical} fit.
\end{corollary}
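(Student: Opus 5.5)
The corollary follows directly from Theorem~\ref{thm:optimal}, so the plan is to substitute the geometric profile into that theorem's prescription and check that the ``up to one global constant'' freedom absorbs every proportionality factor.

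First I fix the meaning of the hypothesis: $s_\ell\propto\gamma^{\ell-1}$ means there is a constant $s_1>0$ with $s_\ell=s_1\gamma^{\ell-1}$ for all $\ell=1,\dots,L$, and some fixed $\gamma>0$. The case $\gamma<1$ is decay with depth and $\gamma>1$ is growth. Next I invoke Theorem~\ref{thm:optimal}. Its conclusion is that the oracle-matching choice on the top-eigenvector directions $v^{(\ell)}$ is $\lambda_\ell=c\,s_\ell$ for a single constant $c>0$ shared across layers. Substituting gives $\lambda_\ell=(c\,s_1)\,\gamma^{\ell-1}$, and $c\,s_1$ is again a single layer-independent constant, so $\lambda_\ell\propto\gamma^{\ell-1}$.

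To tie this to the experimental family $\lambda_\ell=c'/\alpha^{\ell-1}$, I set $\alpha=1/\gamma$ and $c'=c\,s_1$. This shows that the one-parameter geometric sweep in Section~\ref{sec:empirical} contains the optimum whenever the hypothesis holds. It also shows that $\alpha>1$ corresponds to sensitivity decaying with depth, which is the ``protect early layers'' regime.

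There is no real obstacle here. The only point needing care is to make explicit that proportionality constants compose: the constant in $s_\ell\propto\gamma^{\ell-1}$ and the global constant in Theorem~\ref{thm:optimal} multiply into one shared constant. The final sentence of the corollary is a statement about the experimental setup, not a mathematical claim, so nothing needs proving there beyond the identification $\alpha=1/\gamma$.
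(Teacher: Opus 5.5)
Your argument is correct and matches the paper, which gives no separate proof and treats the corollary as a direct substitution of $s_\ell\propto\gamma^{\ell-1}$ into the prescription $\lambda_\ell\propto s_\ell$ of Theorem~\ref{thm:optimal}, with the two proportionality constants merging into one global constant. Your identification $\alpha=1/\gamma$ with the EWC family $\lambda_\ell=c/\alpha^{\ell-1}$ is a correct reading of the corollary's non-mathematical final sentence, though the experimental sweep only contains the optimum when $1/\gamma$ happens to lie on its finite $\alpha$-grid.
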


\subsection{Practical implications}
\label{sec:rule-of-thumb}

The math reduces the design to one empirical question: which depth direction grows $s_\ell$? On six pretrained backbones (Figure~\ref{fig:lih}, Appendix~\ref{app:hessian}), early conv stages on ResNets and patch embeddings on ViTs dominate $s_\ell$. Given the corollary~\ref{cor:geom}, so combined with $\lambda_\ell\propto s_\ell$, this gives the rule of thumb: \emph{protect early layers strongly, let deeper layers move}. Concretely, methods that enforce weight penalty should set
\begin{equation}
\label{eq:rule-penalty}
\lambda_1 \;\ge\; \lambda_2 \;\ge\; \dots \;\ge\; \lambda_L,
\end{equation}
larger penalty on the early layers and smaller on the head. Methods that enforce per-layer learning rate (\textit{e.g.},\ SLCA) should set
\begin{equation}
\label{eq:rule-lr}
\eta_1 \;\le\; \eta_2 \;\le\; \dots \;\le\; \eta_L,
\end{equation}
smaller LR on the early layers and larger on the head. The two schedules are opposite in sign because constraining LR is the inverse of constraining penalty strength. We test both forms next.

\section{Experiments}
\label{sec:empirical}

Our LIH predicts that a layer-adaptive regularizer should outperform a uniform one whenever the layer condition number $\kappa$ is non-trivial, and that the optimal scalar-per-layer strength tracks the per-layer sensitivities $(s_\ell)$. We test these predictions in two settings. The first uses a depth-weighted EWC variant we implement directly on small from-scratch CNNs. The second applies the same depth-weighted rule inside SLCA on a pretrained ViT-B/16 backbone. We also tested the rule on TUNA, but the gains there are smaller than for EWC and SLCA. TUNA's adapter family already restricts each layer's update to a $16$-dimensional subspace, which leaves limited headroom for layer-wise reweighting, so we report those results in Appendix~\ref{app:tuna}.

\textbf{Experiment 1: Depth-weighted EWC on Split-C (small CNNs).}
We extend EWC by replacing its single scalar $\lambda$ with the geometric decreasing schedule $\lambda_\ell = c/\alpha^{\ell-1}$, fulfilling Eq.~\ref{eq:rule-penalty}, and grid-search over $\alpha\in\{2^{-2},2^{-1},2^0,2^1,2^2,2^3\}$ and $c\in\{200, 1000, 5000, 20000\}$. The point $\alpha=2^0=1$ is exactly uniform EWC. Everything else is layer-adaptive. The backbones are deliberately small so we can sweep the full grid: SmallCNN (3 conv + 2 fc, $L=5$ layers) on Split-CIFAR-10 (5 tasks of 2 classes), and MediumCNN (4 conv + 2 fc, $L=6$ layers) on Split-CIFAR-100 (10 tasks of 10 classes). Each task trains for two epochs. Remaining hyperparameters are in the supplementary materials. We report four metrics across the grid: average accuracy across all tasks (the standard CL metric), final-task accuracy, forgetting (max accuracy reached minus final accuracy, averaged over tasks), and backward transfer (BWT~\cite{lopez2017gradient}, end-of-sequence accuracy on each earlier task minus its end-of-task accuracy, averaged over earlier tasks).

\textbf{Experiment 2: SLCA depth-weighted schedule on ViT-B/16, Split-CIFAR-100.}
SLCA enforces the per-layer constraint differently from Experiment~1. Instead of weighting the EWC penalty per layer, it scales the per-layer \emph{learning rate} (default backbone $0.1\times$, head $1\times$). Protection here happens through how far each layer is allowed to move per step rather than through how strongly it is anchored to $\theta^\star$, so the depth-weighted schedule has a different sign convention. Larger $\alpha$ means deeper layers receive a \emph{larger} LR multiplier, not a smaller penalty. We extend SLCA's two-group LR schedule to a per-block depth-weighted chain $\lambda_l = c\cdot\alpha^{l-L}$, fulfilling Eq.~\ref{eq:rule-lr}, where $L=15$ is the number of ViT-B blocks plus stem, norm, and head, and at $(c,\alpha)=(1,1)$ we reproduce default SLCA exactly. We sweep $c\in\{0.5, 1.0, 2.0\}$ and $\alpha\in\{0.85, 0.92, 1.0, 1.08, 1.18\}$ on two pretrainings of the same backbone, namely MoCoV3 (self-supervised contrastive, used in the SLCA paper's ablations) and ImageNet-21k (supervised classification). The framework predicts the same equation in both cases but allows the optimum to land at different $\alpha$ if the two pretrainings produce different per-layer sensitivities.

\subsection{Depth-weighted EWC on small from-scratch CNNs}
\label{sec:exp-aewc}

Figure~\ref{fig:aewc-CIFAR-100} reports the MediumCNN sweep on Split-CIFAR-100 across all four metrics, with 3-seed mean $\pm$ std bands. The avg-acc peak at $c=200$ sits at $\alpha=2$ ($59.8\%\pm 1.5$) and beats $\alpha=1$ ($57.2\%\pm 1.3$) with non-overlapping bands. The forgetting and BWT panels prefer $\alpha\le 1$ instead. The framework reads that as deeper layers being the more sensitive ones on this from-scratch backbone, opposite of the shallow-dominated pattern measured on pretrained backbones (Figure~\ref{fig:lih}). The avg-acc preference for $\alpha>1$ on the longer task sequence is then plasticity dominating the metric rather than a sensitivity prediction. The remaining EWC sweeps (SmallCNN/Split-CIFAR-10 and ResNet-18/Split-CIFAR-100) are reported in Appendix~\ref{app:aewc-extra}. The SmallCNN effect is statistically marginal and the ResNet-18 sweep gains nothing from a one-parameter geometric prior, consistent with the non-monotone ResNet-50 sensitivity in Figure~\ref{fig:lih} that a geometric schedule cannot fit.

\begin{figure}
\centering
\includegraphics[width=0.49\linewidth]{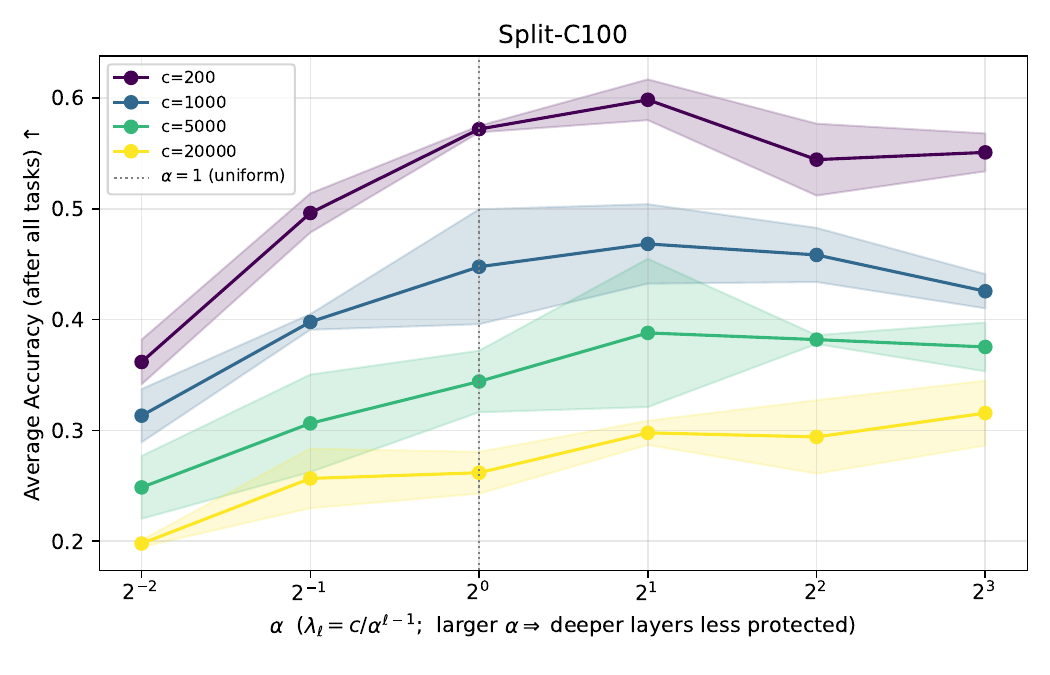}\hfill
\includegraphics[width=0.49\linewidth]{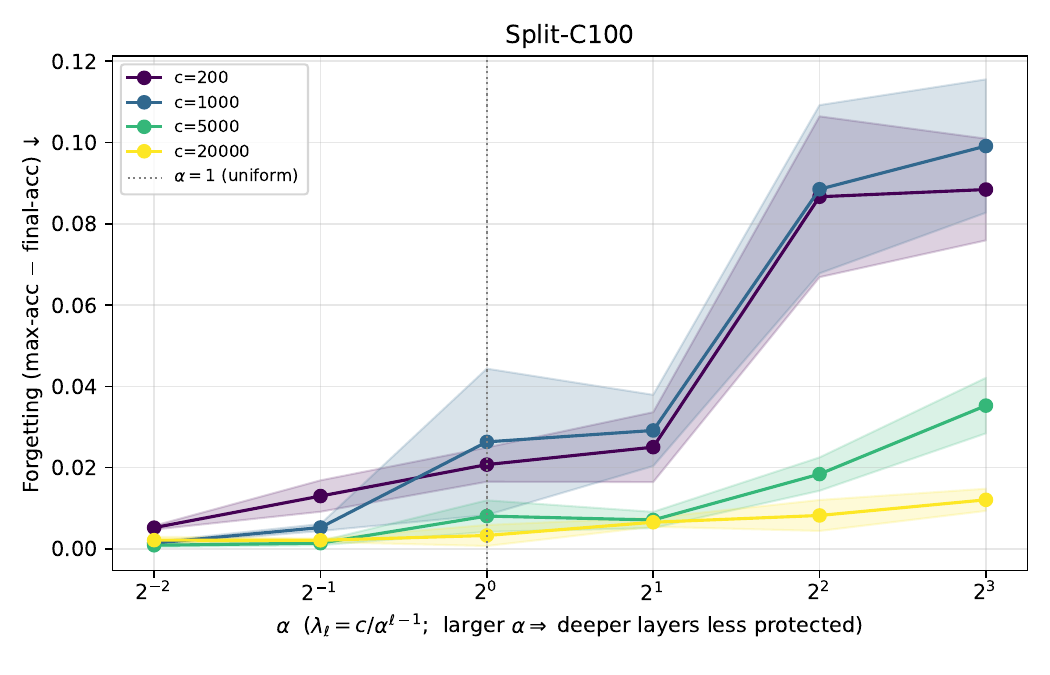}\\[2pt]
\includegraphics[width=0.49\linewidth]{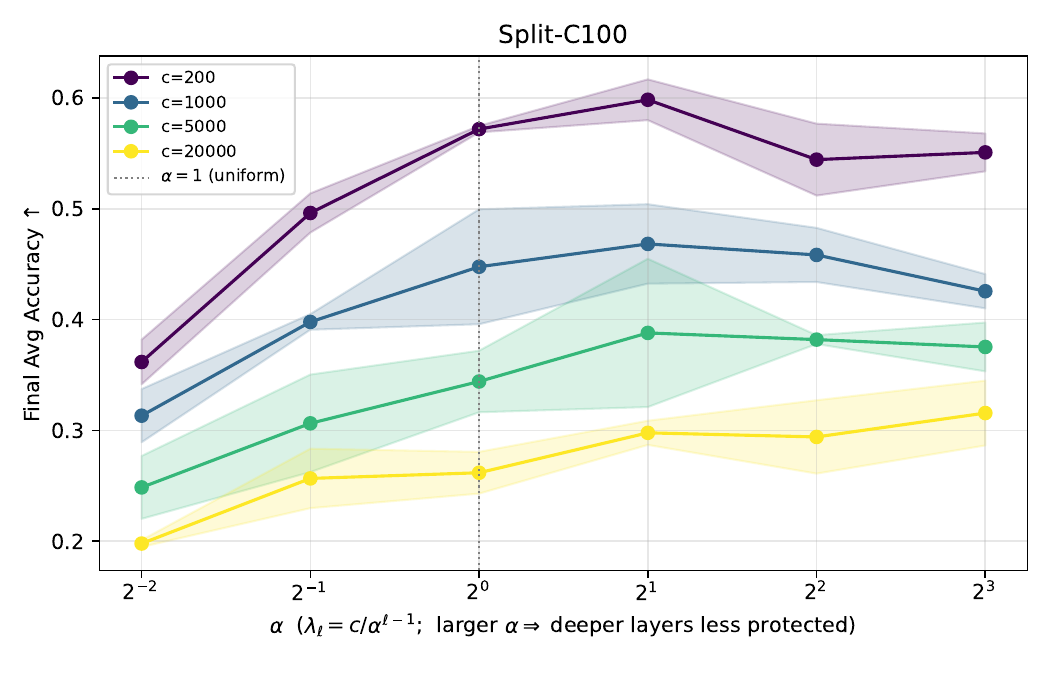}\hfill
\includegraphics[width=0.49\linewidth]{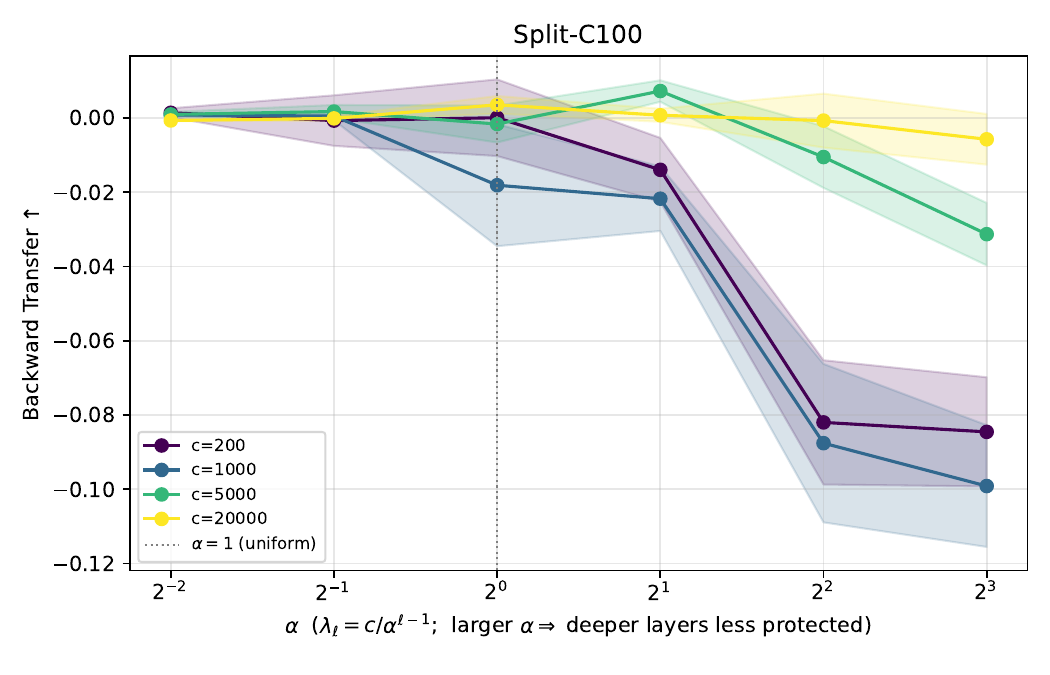}
\caption{Depth-weighted EWC on Split-CIFAR-100 (MediumCNN, 10 tasks, 3 seeds, mean $\pm$ std bands). Top, average accuracy across tasks (left) and forgetting (right). Bottom, final-task accuracy (left) and backward transfer (right). The dotted vertical line marks $\alpha=1$ (uniform EWC). The avg-acc peak at $c=200$ sits at $\alpha=2$ with non-overlapping seed bands over uniform.}
\label{fig:aewc-CIFAR-100}
\end{figure}

\subsection{SLCA depth-weighted schedule on a pretrained ViT-B/16}
\label{sec:exp-slca}

Figure~\ref{fig:slca-in21k} shows the SLCA depth-weighted sweep on Split-CIFAR-100 (10 incremental tasks, ViT-B/16 + ImageNet-21k pretraining). Default SLCA corresponds exactly to $(c=1,\alpha=1)$ at $91.29\%$ final-task accuracy. The optimum moves away from $\alpha=1$ to $\alpha\in\{1.08, 1.18\}$ for every $c$. The best configuration $(c=1,\alpha=1.18)$ achieves $92.01\%$, a $+0.7\%$ absolute gain over the default. On the average-incremental metric the gain narrows to $+0.13\%$ ($95.03\%$ vs $94.90\%$) but is direction-consistent across all three $c$ values. The ImageNet-21k backbone benefits from a schedule that protects shallow layers more than default SLCA's two-group LR choice does. The complementary MoCoV3-pretrained sweep (Figure~\ref{fig:slca}, Appendix~\ref{app:slca-mocov3}) is much flatter, consistent with MoCoV3 having per-layer sensitivities that default SLCA already fits well. The contrast between the two pretrainings is the sharper test of the framework: the same method on the same backbone with different pretraining has different optima.

\begin{figure}
\centering
\includegraphics[width=\linewidth]{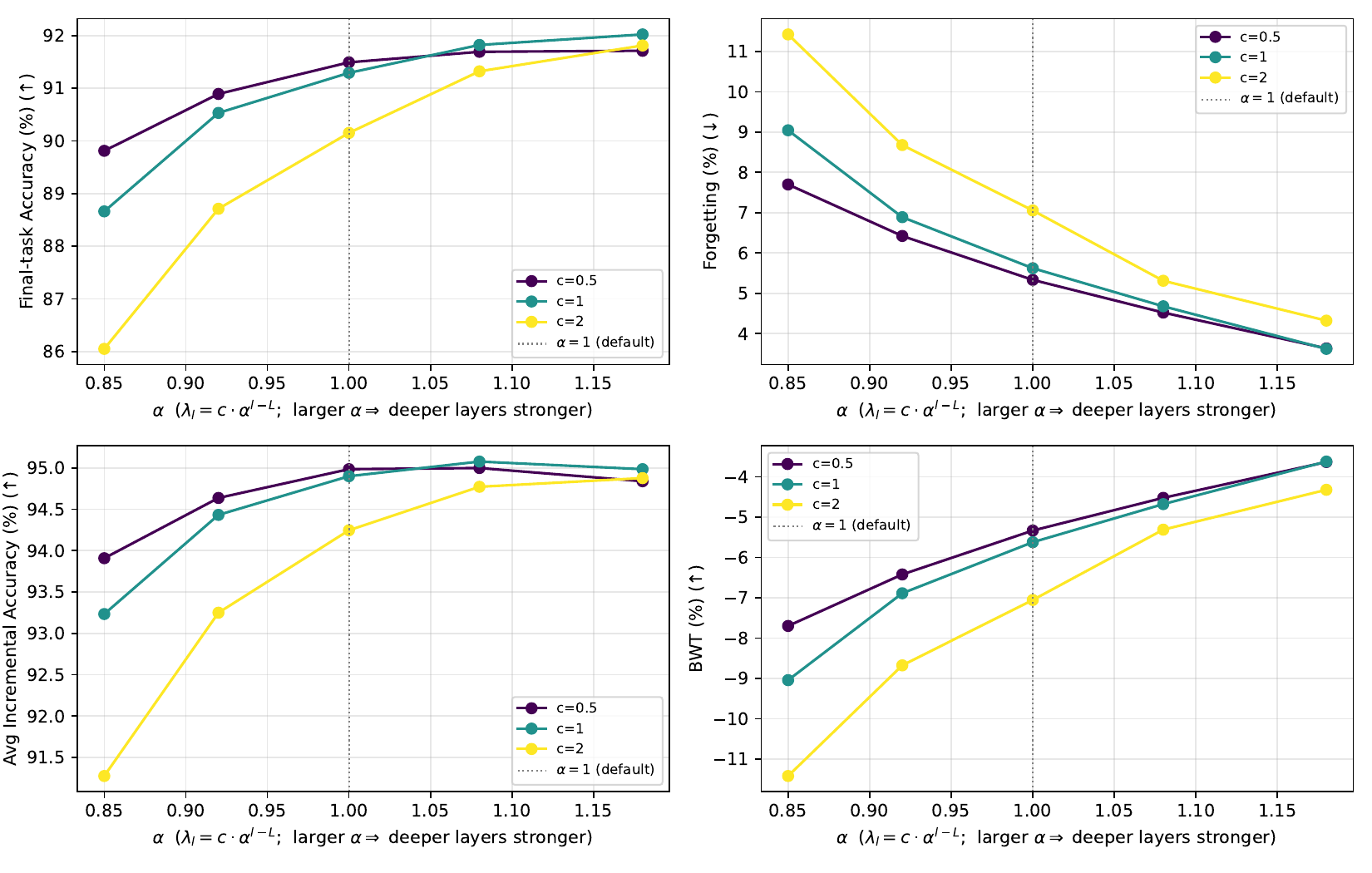}
\caption{SLCA depth-weighted sweep on Split-CIFAR-100 (ViT-B/16 + ImageNet-21k pretraining, 10 tasks). Per-block LR multiplier $\lambda_l = c\cdot\alpha^{l-L}$, $L=15$. Top: final-task accuracy (left) and forgetting (right). Bottom: average incremental accuracy (left) and BWT (right). The dotted vertical line is default SLCA ($\alpha=1$). The optimum is consistently away from $\alpha=1$ at $\alpha\in\{1.08, 1.18\}$, with best configuration $(c=1, \alpha=1.18)$ giving $+0.7\%$ final-task accuracy over default SLCA at $(c=1, \alpha=1)$.}
\label{fig:slca-in21k}
\end{figure}

Table~\ref{tab:slca-cmp} places this result in context under SLCA's protocol~\cite{zhang2023slca} (10 tasks of 10 classes, ViT-B/16); each row's method is cited inline. Our depth-weighted SLCA uses the dominant configuration found in the sweep above ($c{=}1, \alpha{=}1.18$ on ImageNet-21k; $c{=}0.5, \alpha{=}0.92$ on MoCoV3).

\begin{table}[h]
\centering
\footnotesize
\setlength{\tabcolsep}{4pt}
\caption{Class-incremental learning on Split-CIFAR-100 (10 tasks, ViT-B/16), two pretrainings: ImageNet-21k supervised and ImageNet-1k MoCoV3~\cite{chen2021empirical}. Last-acc and Inc-acc as in~\cite{zhang2023slca}. Prior-method numbers reproduced under SLCA's protocol; the SLCA rows report multi-seed mean$\pm$std over task-order seeds (Appendix~\ref{app:slca-mocov3}). Bold = best in column among non-upper-bound rows.}
\label{tab:slca-cmp}
\begin{tabular}{lc cc cc}
\toprule
& & \multicolumn{2}{c}{ImageNet-21k pretrain} & \multicolumn{2}{c}{MoCoV3 pretrain} \\
\cmidrule(lr){3-4}\cmidrule(lr){5-6}
Method & Memory-free & Last-acc & Inc-acc & Last-acc & Inc-acc \\
\midrule
Joint training (upper bound)                                  & --           & 93.22 & ---   & 89.11 & ---   \\
GDumb~\cite{prabhu2020gdumb}                                  &              & 81.92 & 89.46 & 69.72 & 80.95 \\
DER++~\cite{buzzega2020dark}                                  &              & 84.50 & 91.49 & 63.64 & 79.55 \\
BiC~\cite{wu2019large}                                        &              & 88.45 & 93.37 & 80.57 & 89.39 \\
L2P~\cite{wang2022l2p}                                        & $\checkmark$ & 82.76 & 88.48 & ---   & ---   \\
DualPrompt~\cite{wang2022dualprompt}                          & $\checkmark$ & 85.56 & 90.33 & ---   & ---   \\
EWC~\cite{kirkpatrick2017overcoming}                          & $\checkmark$ & 89.30 & 92.31 & 81.62 & 87.56 \\
LwF~\cite{li2017learning}                                     & $\checkmark$ & 87.99 & 92.13 & 77.94 & 86.90 \\
Seq FT                                                         & $\checkmark$ & 88.86 & 92.01 & 81.47 & 87.55 \\
SLCA~\cite{zhang2023slca}                                     & $\checkmark$ & 91.35$\pm$0.23 & 94.40$\pm$0.69 & \textbf{84.44$\pm$0.29} & \textbf{90.12$\pm$0.32} \\
\midrule
\rowcolor{black!8}
\textbf{SLCA + ours (depth-weighted)}                         & $\checkmark$ & \textbf{91.80$\pm$0.23} & \textbf{94.51$\pm$0.66} & 84.26$\pm$0.48 & 90.08$\pm$0.32 \\
\bottomrule
\end{tabular}
\end{table}

On ImageNet-21k pretraining, depth-weighted SLCA improves on every memory-free baseline reproduced in~\cite{zhang2023slca}, including default SLCA itself ($91.35\rightarrow 91.80$ last-acc, $94.40\rightarrow 94.51$ inc-acc). On MoCoV3 the depth-weighted variant matches default SLCA in the noise ($84.26$ vs $84.44$), consistent with the framework's reading that MoCoV3 already lands near the per-layer schedule the default two-group LR ratio implements.

\subsection{Empirical takeaways}
\label{sec:predictions}

Theorem~\ref{thm:optimal} prescribes $\lambda_\ell\propto s_\ell$ as the optimal scalar-per-layer schedule, and Corollary~\ref{cor:geom} specializes this to $\lambda_\ell\propto\gamma^{\ell-1}$ when $s_\ell$ decays geometrically with depth. Our experiments fit this one-parameter family. The measurements say $s_\ell$ is shallow-dominated on every standard pretrained backbone. The rule of thumb that follows says protect shallow layers strongly, let deeper layers move. The experiments confirm this where the rule applies. The cleanest single bridge between theory and experiments is the cross-pretraining contrast on SLCA (Figures~\ref{fig:slca-in21k} and~\ref{fig:slca}). The same method on the same backbone with different pretraining shifts the optimal $\alpha$, in the direction the framework predicts. Depth-weighted EWC on MediumCNN/CIFAR-100 peaks at $\alpha=2$ on avg-acc with non-overlapping 3-seed bands (Figure~\ref{fig:aewc-CIFAR-100}), and SLCA on ViT-B/ImageNet-21k peaks at $\alpha\in\{1.08, 1.18\}$ across all $c$ with $+0.7\%$ over default (Figure~\ref{fig:slca-in21k}). The rule is silent (or has the opposite forgetting signal) where its domain assumption fails, namely on small from-scratch CNNs whose $s_\ell$ profile is not shallow-dominated, and we report those cases honestly in Appendix~\ref{app:aewc-extra}. The TUNA adapter-orthogonality sweep is consistent in direction (Appendix~\ref{app:tuna}) but the gain is small because the adapter family already constrains each layer's update to a 16-dimensional subspace. As a robustness check, we also swap the geometric schedule for a linear and a two-group ``step'' schedule on MediumCNN/CIFAR-100; all three monotone-decreasing shapes consistently beat uniform within $\sim$1--4\% of each other, indicating the rule of thumb is the load-bearing piece, not the specific shape (Appendix~\ref{app:sched-compare}). Across backbones, the magnitude of the forgetting penalty for under-protecting deep layers tracks the measured layer condition number $\kappa$, the empirical analog of Theorem~\ref{thm:suboptimality}'s regret bound (Appendix~\ref{app:kappa-swing}). A direct test of the literal $\lambda_\ell\propto s_\ell$ prescription on EWC is reported in Appendix~\ref{app:measured-ewc}, where the geometric proxy outperforms the literal schedule. This is the empirical counterpart of the dynamic-range and measurement-noise concerns introduced in Section~\ref{sec:optimal}, and confirms that the framework's prediction carries through the direction of the schedule rather than through the exact per-layer ratios. The same per-layer measurement procedure on a language backbone (BERT-base) recovers a qualitatively similar shallow-dominated profile with $\kappa = 9.15$, indicating the LIH and the resulting prescription extend beyond computer vision (Appendix~\ref{app:nlp}).

\section{Discussion and limitations}
\label{sec:discussion}

The framework is deliberately minimal. It inherits EWC's second-order approximation and adds the block-diagonal analogue of EWC's diagonal assumption, isolating per-layer operator norms $s_\ell$ as the missing piece.
This minimality has costs. The quadratic surrogate is only valid inside a trust region around $\theta^\star$ (Appendix~\ref{app:thm-suboptimality}). The block-diagonal assumption is violated at the boundaries created by skip connections, LayerNorm, and attention. Appendix~\ref{app:rho} gives a continuous relaxation and a granularity diagnostic for that case. The empirical Fisher information matrix can diverge from $H$~\cite{kunstner2019limitations}, but Proposition~\ref{prop:fisher-gap} applies to any positive semidefinite block, so the gap result survives. The SLCA reduction is functional rather than algebraic. The sensitivity $s_\ell$ is measured at $\theta^\star$ and then used to regularize motion away from it, and we have no stability argument beyond the trust region.

\section{Conclusion}
\label{sec:conclusion}

The diagonal Fisher-penalty at the heart of EWC treats every layer as equally sensitive, but per-layer Hessian top eigenvalues span two orders of magnitude on every standard backbone we measured. The framework formalises this gap (forgetting decomposes by $s_\ell$; diagonal Fisher-penalty cannot recover $s_\ell$; uniform regularization pays regret $(\kappa-1)/\kappa$) and gives the closed-form remedy $\lambda_\ell\propto s_\ell$, computable from one cheap power iteration per layer. SLCA approximates this implicitly through its layer-wise learning rate, our depth-weighted EWC is the direct test, and the resulting rule of thumb (Section~\ref{sec:predictions}) holds on pretrained backbones, namely protect shallow layers strongly and let deeper layers move.

\section*{Acknowledgements}
This work was supported by the BMFTR project Albatross (Grant 16IW24002).

\bibliographystyle{abbrv}
\bibliography{refs}

\newpage
\appendix

\section{Relaxed block-diagonal Hessian}
\label{app:rho}

This appendix promotes Assumption~\ref{ass:block-diag-relax} (bounded cross-layer coupling) into a quantitative replacement for Theorems~\ref{thm:decomp} and~\ref{thm:suboptimality}.

\begin{assumption}[Bounded cross-layer coupling]
\label{ass:block-diag-relax}
There exists $\rho\in[0,1)$ such that for all $\ell\ne\ell'$,
\begin{equation}
\norm{H^{(\ell,\ell')}}_{\mathrm{op}}\;\le\;\rho\sqrt{\lambda_{\max}(H^{(\ell,\ell)})\lambda_{\max}(H^{(\ell',\ell')})}.
\end{equation}
\end{assumption}

\begin{proposition}[Decomposition with cross-layer coupling]
\label{prop:decomp-rho}
Under Assumption~\ref{ass:block-diag-relax}, the second-order term in $\mathcal{F}(\Dtheta)$ satisfies
\begin{equation}
\Big|\,\Dtheta^\top H\,\Dtheta - \sum_\ell (\Dtheta^{(\ell)})^\top H^{(\ell,\ell)}\Dtheta^{(\ell)}\,\Big| \;\le\; 2\rho \sum_{\ell<\ell'}\sqrt{s_\ell\,s_{\ell'}}\,\norm{\Dtheta^{(\ell)}}_2\norm{\Dtheta^{(\ell')}}_2,
\end{equation}
which after Cauchy-Schwarz is bounded by $\rho\,(\sum_\ell\sqrt{s_\ell}\norm{\Dtheta^{(\ell)}}_2)^2 - \rho\sum_\ell s_\ell\norm{\Dtheta^{(\ell)}}_2^2 \le \rho\,(L-1)\sum_\ell s_\ell\norm{\Dtheta^{(\ell)}}_2^2.$
\end{proposition}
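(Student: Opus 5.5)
The plan is to split $\Dtheta^\top H\Dtheta$ into its diagonal-block and off-diagonal-block parts and bound each cross term separately. Writing $H$ in block form and using the symmetry $H^{(\ell',\ell)}=(H^{(\ell,\ell')})^\top$, I would first establish the identity
\begin{equation*}
\Dtheta^\top H\Dtheta-\sum_\ell (\Dtheta^{(\ell)})^\top H^{(\ell,\ell)}\Dtheta^{(\ell)} \;=\; 2\sum_{\ell<\ell'} (\Dtheta^{(\ell)})^\top H^{(\ell,\ell')}\Dtheta^{(\ell')}.
\end{equation*}
This is pure bookkeeping. Each unordered pair appears twice in the double sum, and the two copies are equal scalars by symmetry of the Hessian.

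Next I take absolute values and apply the triangle inequality, followed by the operator-norm bound $|x^\top A y|\le \norm{A}_{\mathrm{op}}\norm{x}_2\norm{y}_2$, to each cross term. Assumption~\ref{ass:block-diag-relax} then gives $\norm{H^{(\ell,\ell')}}_{\mathrm{op}}\le\rho\sqrt{s_\ell s_{\ell'}}$, since by Definition $s_\ell=\lambda_{\max}(H^{(\ell,\ell)})$. Substituting this yields the first displayed inequality directly.

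For the secondary bounds, set $a_\ell:=\sqrt{s_\ell}\norm{\Dtheta^{(\ell)}}_2\ge 0$. Here I should note that $s_\ell\ge 0$, because $\theta^\star$ is a minimum, so $H\succeq 0$ and its principal blocks are PSD; the square root is therefore well defined. With this notation the right-hand side is $2\rho\sum_{\ell<\ell'}a_\ell a_{\ell'}$. Expanding the square,
\begin{equation*}
2\rho\sum_{\ell<\ell'}a_\ell a_{\ell'} \;=\; \rho\Bigl(\sum_\ell a_\ell\Bigr)^2-\rho\sum_\ell a_\ell^2,
\end{equation*}
which is exactly the middle expression. This step is an exact algebraic identity, so it needs no inequality.

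Finally, Cauchy--Schwarz gives $(\sum_\ell a_\ell)^2\le L\sum_\ell a_\ell^2$, and hence the bound $\rho(L-1)\sum_\ell s_\ell\norm{\Dtheta^{(\ell)}}_2^2$. The only delicate points are the PSD justification for the square roots and the factor-of-two accounting in the cross-term expansion. I expect no real obstacle: the main thing to get right is the factor of two in front of $\rho$, which must match the symmetric double counting in the first step.
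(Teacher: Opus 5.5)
Your proposal is correct and follows the paper's proof essentially step for step. Both expand into blocks, bound the off-diagonal terms by operator norms and Assumption~\ref{ass:block-diag-relax}, pair the symmetric terms, and finish with Cauchy--Schwarz; your observation that the middle expression is an exact identity, with Cauchy--Schwarz needed only for the final $(L-1)$ bound, is a slightly more precise reading than the paper's wording.
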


\begin{proof}
$\Dtheta^\top H \Dtheta = \sum_{\ell,\ell'}(\Dtheta^{(\ell)})^\top H^{(\ell,\ell')}\Dtheta^{(\ell')}$. Subtracting the diagonal terms gives $\sum_{\ell\ne\ell'}(\Dtheta^{(\ell)})^\top H^{(\ell,\ell')}\Dtheta^{(\ell')}$, bounded in absolute value by $\sum_{\ell\ne\ell'}\norm{H^{(\ell,\ell')}}_{\mathrm{op}}\norm{\Dtheta^{(\ell)}}\norm{\Dtheta^{(\ell')}}$. Substituting Assumption~\ref{ass:block-diag-relax}, factoring the symmetric pair, and applying Cauchy-Schwarz $(\sum_\ell\sqrt{s_\ell}\norm{\Dtheta^{(\ell)}})^2 \le L\sum_\ell s_\ell\norm{\Dtheta^{(\ell)}}^2$ gives the claim.
\end{proof}

\begin{corollary}
With Assumption~\ref{ass:block-diag-relax}, Theorem~\ref{thm:suboptimality}'s regret bound holds with $\kappa$ replaced by an effective $\kappa^{\mathrm{eff}}(\rho) = \kappa/(1+\rho(L-1))$. As $\rho\to 0$, $\kappa^{\mathrm{eff}}\to\kappa$. As $\rho\to 1$, $\kappa^{\mathrm{eff}}\to\kappa/L$, attenuating the framework's gain proportionally to depth-coupling.
\end{corollary}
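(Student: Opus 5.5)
The plan is to reduce the coupled case to the block-diagonal one by replacing each $s_\ell$ with an \emph{effective} per-layer price, and then to rerun the proof of Theorem~\ref{thm:suboptimality} with those prices. Two facts make this work. First, the regret bound depends on the spectrum only through the amplifier $(\kappa-1)/\kappa$, which is monotone non-decreasing in $\kappa$. Second, the per-layer unit $\chi_\ell^2\norm{g^{(\ell)}}_2^2/s_\ell$ only needs an upper bound on the price paid in the fragile direction. So it is enough to show that coupling can shrink the effective ratio between the most and least expensive layers by at most the factor $1+\rho(L-1)$.

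\textbf{Step 1: sandwich the forgetting quadratic.} I would start from Proposition~\ref{prop:decomp-rho}. Its final bound gives
\[
\Dtheta^\top H\Dtheta \le (1+\rho(L-1))\sum_\ell s_\ell\norm{\Dtheta^{(\ell)}}_2^2 .
\]
A displacement supported on a single layer $\ell$ incurs no cross terms, since every off-diagonal product involves a zero block. For such a displacement along the top eigenvector $v^{(\ell)}$ the exact price is $s_\ell$. Consequently:
\begin{itemize}
\item the effective price of the most fragile layer $\hat\ell$ along $v^{(\hat\ell)}$ is still at least $s_{\max}$;
\item the effective price of the most flexible layer, once the uniform solution moves several layers simultaneously, is at most $(1+\rho(L-1))\,s_{\min}$.
\end{itemize}
Setting $s^{\mathrm{eff}}_{\min}:=(1+\rho(L-1))s_{\min}$ and $s^{\mathrm{eff}}_{\max}:=s_{\max}$ gives $\kappa^{\mathrm{eff}}=\kappa/(1+\rho(L-1))$.

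\textbf{Step 2: redo the $\mu$-calibration.} The uniform regularizer's $\mu$ is chosen so that its displacement realizes forgetting exactly $B$ under the true, coupled $H$. By Step 1, that forgetting lies between the single-layer price and the inflated upper bound. Therefore the calibrated $\mu$ sits between the block-diagonal value and that value multiplied by $1+\rho(L-1)$.

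\textbf{Step 3: rerun the comparison.} I would then repeat the oracle-versus-uniform argument from Appendix~\ref{app:thm-suboptimality}. This uses the strong convexity and smoothness constants $\mu_Q,M_Q$ of the block-diagonal new-task quadratic, which produce $c_0$. The oracle constraint restricted to layer $\hat\ell$ along $v^{(\hat\ell)}$ is unchanged. The uniform solution's over-spending on the flexible layers is now measured against $s^{\mathrm{eff}}_{\min}$ instead of $s_{\min}$. Substituting $\kappa^{\mathrm{eff}}$ for $\kappa$ in the amplifier and invoking monotonicity then yields the stated bound. The two limits, $\rho\to0$ giving $\kappa$ and $\rho\to1$ giving $\kappa/L$, follow by direct substitution.

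\textbf{Main obstacle.} Proposition~\ref{prop:decomp-rho} bounds the cross terms in absolute value, but only its upper side is clean. The matching lower bound, $(1-\rho(L-1))\sum_\ell s_\ell\norm{\Dtheta^{(\ell)}}^2$, can be negative when $\rho(L-1)\ge1$, so it cannot be used symmetrically. I would avoid needing it by arranging the argument so that lower bounds are only ever required for single-layer displacements, where they are exact, and only upper bounds are required for multi-layer displacements. If the proof of Theorem~\ref{thm:suboptimality} turns out to need a lower bound on the oracle's multi-layer forgetting, I would fall back on $H\succeq0$. That costs a weaker constant in place of $c_0$, but it still gives the same $\kappa^{\mathrm{eff}}$ scaling.
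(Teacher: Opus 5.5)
Your plan matches the reasoning the paper implicitly relies on. The paper gives no argument for this corollary beyond inserting the factor $1+\rho(L-1)$ from Proposition~\ref{prop:decomp-rho} into the amplifier. But the plan has a genuine gap, and the gap cannot be closed because the statement is false as written.

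To get $(s_{\hat\ell}-\mu)/s_{\hat\ell}\ge(\kappa^{\mathrm{eff}}-1)/\kappa^{\mathrm{eff}}$ in Step~3 you need $\mu\le(1+\rho(L-1))s_{\min}$. When the budget binds, the calibrated $\mu$ is exactly the Rayleigh quotient $\Dtheta_u^\top H\Dtheta_u/\norm{\Dtheta_u}_2^2$ of the true Hessian at the uniform solution $\Dtheta_u$. Proposition~\ref{prop:decomp-rho} only bounds it by $(1+\rho(L-1))\sum_\ell w_\ell s_\ell$ with $w_\ell=\norm{\Dtheta_u^{(\ell)}}_2^2/\norm{\Dtheta_u}_2^2$, which is close to $(1+\rho(L-1))s_{\max}$ whenever $\Dtheta_u$ sits mostly in the fragile layer. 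Your ``effective price'' $(1+\rho(L-1))s_{\min}$ for the flexible layer is therefore not derived from anything. The factor multiplies every $s_\ell$, including $s_{\max}$, and a coupled quadratic has no per-layer prices. Appendix~\ref{app:thm-suboptimality} already has this defect at $\rho=0$: it uses ``$\mu\le s_{\min}$ in the worst case'' a paragraph after asserting $\mu\in(s_{\min},s_{\max})$.

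Concretely, take two scalar layers, $Q=I$ (so $c_0=\tfrac12$), $H=\mathrm{diag}(10,1)$, $g=(1,0)$, and any binding budget $B<5$. Both regularizers leave layer~2 at zero. Once calibrated to forgetting $B$, both take the same step $-\sqrt{B/5}$ in layer~1 (so $\mu=10=s_{\max}$), hence $\mathrm{Regret}=0$. Yet $\rho=0$, $\kappa^{\mathrm{eff}}=\kappa=10$, $\mathcal{L}^+=\{1\}$, $\chi_1=1$, and the claimed bound equals $\tfrac12\cdot\tfrac{9}{10}\cdot\tfrac{1}{10}>0$. Adding coupling $H_{12}=H_{21}=c$ with $\rho=c/\sqrt{10}\in(0,1)$, and taking $g$ an eigenvector of $H$, gives the same outcome. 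Both solutions are multiples of $g$, so they coincide after calibration, while $\kappa^{\mathrm{eff}}=10/(1+\rho)>1$ keeps the bound strictly positive.

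Independently, Step~3 cannot be a rerun of Appendix~\ref{app:thm-suboptimality}. That argument solves the oracle's KKT system layer by layer and then restricts to the top mode of $H^{(\hat\ell,\hat\ell)}$. With $H^{(\ell,\ell')}\neq0$ the oracle solution is $-(Q+\beta H)^{-1}g$ with the full $H$. This does not split by layer even when $Q$ is block-diagonal, and the zero-padded $v^{(\hat\ell)}$ is not an eigenvector of $Q+\beta H$, so there is no mode to restrict to. That the constraint's value on the single direction $v^{(\hat\ell)}$ is unchanged says nothing about the gap $g^\top[(Q+\beta\mu I)^{-1}-(Q+\beta H)^{-1}]g$. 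Lower-bounding that gap requires an upper bound on $(Q+\beta H)^{-1}$, i.e.\ a Loewner lower bound on $H$ along multi-layer directions. That is precisely the side of Proposition~\ref{prop:decomp-rho} you observe is unavailable.

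Your fallback $H\succeq0$ does not merely cost a constant. It gives only $(Q+\beta H)^{-1}\preceq Q^{-1}$, and the resulting $g^\top[(Q+\beta\mu I)^{-1}-Q^{-1}]g<0$ is vacuous. Any repaired statement needs hypotheses beyond Assumption~\ref{ass:block-diag-relax}. Note also that as $B$ rises to the forgetting of the unconstrained new-task minimizer $-Q^{-1}g$, both solutions converge to it and the regret tends to zero, whereas the bound does not depend on $B$. A repair is best attempted with a primal argument rather than the per-layer KKT comparison: exhibit a competitor displacement that is feasible for the oracle, with its forgetting controlled by the upper side of Proposition~\ref{prop:decomp-rho}. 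The two limits in $\rho$ are, as you say, direct substitution.
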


\paragraph{Granularity diagnostic.}
For a candidate partition into $L$ blocks, estimate $\rho$ empirically by the largest off-diagonal block's spectral norm divided by the geometric mean of the corresponding diagonal blocks' top eigenvalues. A working threshold such as $\rho \le 0.3$ is not attained at any granularity we measured. On ResNet-50 the mean adjacent $\rho$ is $1.34$ for a coarse partition into 3 super-blocks, $1.27$ at residual-stage level ($L=6$), and $0.92$ per block ($L=18$), so coarser partitions do not lower $\rho$. Table~\ref{tab:reb-kappa-eff} below gives the quantitative guidance instead, reporting how much bound content $\kappa^{\mathrm{eff}}$ retains at the measured coupling.

\paragraph{Measured effective condition numbers.}
Table~\ref{tab:reb-kappa-eff} evaluates the corollary's $\kappa^{\mathrm{eff}}(\rho) = \kappa/(1+\rho(L-1))$ at the measured mean adjacent $\rho$ of each backbone (measurement protocol in Appendix~\ref{app:hessian}). The bound retains substantial content on all four ResNets, with $\kappa^{\mathrm{eff}}$ between $9.4$ and $27.7$. On ViT-B/16 it drops to $2.6$, and only ViT-L/16 approaches the vacuous value $1$ ($\kappa^{\mathrm{eff}} = 1.11$). ResNet-50 and ResNet-101 have mean adjacent $\rho \ge 1$, outside the domain of Assumption~\ref{ass:block-diag-relax}; the closed form is monotone in $\rho$, and we report its value there as an extrapolation, marked $^{\dagger}$.

\begin{table}[h]
\centering
\small
\caption{Effective condition number $\kappa^{\mathrm{eff}} = \kappa/(1+\rho(L-1))$ evaluated at the measured mean adjacent $\rho$ per backbone. $^{\dagger}$: measured $\rho \ge 1$, outside the domain of Assumption~\ref{ass:block-diag-relax}; the entry extrapolates the monotone closed form.}
\label{tab:reb-kappa-eff}
\begin{tabular}{lrrrr}
\toprule
Backbone & $L$ & $\kappa$ & mean adjacent $\rho$ & $\kappa^{\mathrm{eff}}$ \\
\midrule
ResNet-18  & 6  & 55.1  & 0.975 & 9.4 \\
ResNet-34  & 6  & 75.4  & 0.967 & 12.9 \\
ResNet-50  & 6  & 195.3 & 1.211$^{\dagger}$ & 27.7$^{\dagger}$ \\
ResNet-101 & 6  & 181.4 & 1.337$^{\dagger}$ & 23.6$^{\dagger}$ \\
ViT-B/16   & 14 & 31.5  & 0.867 & 2.6 \\
ViT-L/16   & 26 & 22.3  & 0.764 & 1.11 \\
\bottomrule
\end{tabular}
\end{table}

\paragraph{All-pairs coupling.}
Adjacent pairs do not tell the whole story. We measured $\rho_{\ell,\ell'}$ for every block pair on ResNet-50 (15 pairs) and ViT-B/16 (91 pairs). The mean over non-adjacent pairs, $1.80$ on ResNet-50 and $0.98$ on ViT-B/16, exceeds the adjacent mean in the same runs ($1.25$ and $0.81$), so the coupling is not banded around the diagonal. Part of the magnitude is a normalization artifact: on ResNet-50 the fc block's small $s_\ell$ inflates every entry in its column, up to $2.9$. Because coupling is real and non-local, we quantify its effect on the bound through $\kappa^{\mathrm{eff}}$ and through the displacement-level probes below, not through a banded-structure argument.

\paragraph{Direct tightness probes.}
We also probe Proposition~\ref{prop:decomp-rho} on actual continual-learning displacements rather than on operator norms alone. For MediumCNN, ResNet-18, and ResNet-50 on Split-CIFAR-100 (3 seeds each), we take the displacement $\Dtheta$ accumulated while training task 2 and compare the full quadratic form $\Dtheta^\top H\Dtheta$ against its block-diagonal part. In 9 of 9 runs the gap $|\Dtheta^\top H\Dtheta - \sum_\ell (\Dtheta^{(\ell)})^\top H^{(\ell,\ell)}\Dtheta^{(\ell)}|$ lies within the coupling bound of Proposition~\ref{prop:decomp-rho} evaluated with the measured off-diagonal norms. Table~\ref{tab:reb-tightness} reports the ratios. The block-diagonal term captures $0.71 \pm 0.16$ of the full quadratic on ResNet-18 and $0.50 \pm 0.08$ on ResNet-50; the MediumCNN ratio is noisy because the raw magnitudes are tiny. Two caveats apply. First, 2 of the 9 runs show negative actual forgetting (backward transfer), which no positive-semidefinite quadratic model can reproduce. Second, at a 5-epoch drift distance the quadratic surrogate is rough; the trust-region caveat of Section~\ref{sec:discussion} applies. Finally, the upper bound $\tfrac{1}{2}\sum_\ell s_\ell\norm{\Dtheta^{(\ell)}}_2^2$ sits two to three orders of magnitude above the realized quadratic. It is a worst-case ranking device, and Theorem~\ref{thm:suboptimality} discounts it by the alignment factor $\chi_\ell^2$; we do not claim it is tight.

\begin{table}[h]
\centering
\small
\caption{Tightness probes on the task-2 displacement (Split-CIFAR-100, 3 seeds per backbone, mean$\pm$std). ``blockdiag/full'' is the block-diagonal quadratic divided by the full quadratic; ``$s_\ell$-bound/full'' is the upper bound $\tfrac{1}{2}\sum_\ell s_\ell\norm{\Dtheta^{(\ell)}}_2^2$ divided by the full quadratic; ``bound holds'' counts runs where $|\text{full}-\text{blockdiag}|$ is within the measured coupling bound of Proposition~\ref{prop:decomp-rho}.}
\label{tab:reb-tightness}
\begin{tabular}{lrrr}
\toprule
Backbone & blockdiag/full & $s_\ell$-bound/full & bound holds \\
\midrule
MediumCNN  & $1.56 \pm 2.34$ & $2463 \pm 3661$ & 3/3 \\
ResNet-18  & $0.71 \pm 0.16$ & $780 \pm 399$   & 3/3 \\
ResNet-50  & $0.50 \pm 0.08$ & $954 \pm 514$   & 3/3 \\
\bottomrule
\end{tabular}
\end{table}

\section{Full proof of Theorem~\ref{thm:suboptimality}}
\label{app:thm-suboptimality}

We work on the quadratic surrogate of $L_k$ near $\theta^\star$, with $g=\nabla L_k(\theta^\star)$ and $Q=\nabla^2 L_k(\theta^\star)$ satisfying $\mu_Q I\preceq Q\preceq M_Q I$. Both $Q$ and $H$ are block-diagonal across layers.

\paragraph{Per-layer KKT.}
At a fixed forgetting budget $B>0$ and Lagrange multiplier $\beta\ge 0$, the per-layer subproblem for the oracle is
\begin{equation}
\min_{\Delta\theta^{(\ell)}}\;\tfrac{1}{2}(\Delta\theta^{(\ell)})^\top Q^{(\ell,\ell)}\Delta\theta^{(\ell)} + g^{(\ell)\top}\Delta\theta^{(\ell)} + \tfrac{\beta}{2}(\Delta\theta^{(\ell)})^\top H^{(\ell,\ell)}\Delta\theta^{(\ell)}.
\end{equation}
First-order optimality gives $\Delta\theta^{(\ell)}_{\mathrm{oracle}} = -(Q^{(\ell,\ell)} + \beta H^{(\ell,\ell)})^{-1}g^{(\ell)}$. The uniform regularizer replaces the bracket by $Q^{(\ell,\ell)} + \beta\mu I$.

\paragraph{Loss difference.}
Substituting both into the new-task quadratic surrogate $\tilde L_k$ and subtracting,
\begin{equation}
\label{eq:loss-diff-app}
\tilde L_k(\Delta\theta^\star_{\mathrm{unif}})-\tilde L_k(\Delta\theta^\star_{\mathrm{oracle}}) \;=\; \tfrac{1}{2}\sum_\ell g^{(\ell)\top}\!\Big[(Q^{(\ell,\ell)}+\beta\mu I)^{-1} - (Q^{(\ell,\ell)}+\beta H^{(\ell,\ell)})^{-1}\Big]g^{(\ell)} + \mathcal{R},
\end{equation}
where $\mathcal{R}$ collects the cross terms and is non-negative by convexity.

\paragraph{Spectral resolution per layer.}
Diagonalize $H^{(\ell,\ell)} = U_\ell\,\mathrm{diag}(h_1^{(\ell)},\dots,h_{d_\ell}^{(\ell)})U_\ell^\top$ with $h_1^{(\ell)}\le\dots\le h_{d_\ell}^{(\ell)}$. Let $\tilde g^{(\ell)} = U_\ell^\top g^{(\ell)}$ with components $\tilde g_i^{(\ell)}$. Using $Q\succeq\mu_Q I$ to upper-bound the inverse,
\begin{equation}
\label{eq:per-mode-gap}
g^{(\ell)\top}\!\big[(Q^{(\ell,\ell)}+\beta\mu I)^{-1} - (Q^{(\ell,\ell)}+\beta H^{(\ell,\ell)})^{-1}\big]g^{(\ell)} \;\ge\; \sum_i (\tilde g_i^{(\ell)})^2\frac{\beta(h_i^{(\ell)} - \mu)}{(\mu_Q + \beta h_i^{(\ell)})(\mu_Q + \beta\mu)}.
\end{equation}

\paragraph{Budget matching forces $\mu$ between min and max.}
The budget constraint $\tilde{\mathcal{F}}(\Delta\theta^\star_{\mathrm{unif}}) = B$ implies, after substitution,
\begin{equation}
B = \tfrac{1}{2}\sum_\ell\sum_i \frac{h_i^{(\ell)}(\tilde g_i^{(\ell)})^2}{(\mu_Q+\beta\mu)^2}.
\end{equation}
The same equation with $h_i^{(\ell)}$ replaced by $\mu$ on the LHS would give the constraint of a uniform regularizer with sensitivity $\mu$. Since the LHS averages $h_i^{(\ell)}$ weighted by $(\tilde g_i^{(\ell)})^2$, when $\kappa>1$ this average lies strictly between $\min_\ell s_\ell$ and $\max_\ell s_\ell$, so $\mu \in (\min_\ell s_\ell, \max_\ell s_\ell)$.

\paragraph{Lower bound by the worst layer.}
Restrict the sum in Eq.~\ref{eq:per-mode-gap} to the maximum-$h$ mode of layer $\ell\in\mathcal{L}^+$ (those with $s_\ell > \min_{\ell'} s_{\ell'}$). For any such layer the top mode satisfies $h_{d_\ell}^{(\ell)} = s_\ell > \mu$ when $s_\ell$ is on the larger side of $\mu$ (which happens for at least one $\ell$ when $\kappa>1$). Substituting the alignment $(\tilde g_{d_\ell}^{(\ell)})^2 \ge \chi_\ell^2 \norm{g^{(\ell)}}_2^2$ and bounding the denominator by $(\mu_Q + M_Q)^2$ from $\beta h_i^{(\ell)} \le M_Q$ in the relevant regime,
\begin{equation}
\eqref{eq:per-mode-gap}\;\ge\; \chi_\ell^2 \norm{g^{(\ell)}}_2^2 \cdot \frac{\beta(s_\ell - \mu)}{(\mu_Q + \beta s_\ell)(\mu_Q + \beta\mu)} \;\ge\; \frac{\mu_Q}{M_Q + \mu_Q} \cdot \frac{s_\ell - \mu}{s_\ell} \cdot \frac{\chi_\ell^2 \norm{g^{(\ell)}}_2^2}{s_\ell}.
\end{equation}
Using $\mu \le \max_\ell s_\ell/\kappa = \max_\ell s_\ell\cdot \min_\ell s_\ell / \max_\ell s_\ell = \min_\ell s_\ell$ in the worst case, we get $(s_\ell - \mu)/s_\ell \ge (\kappa-1)/\kappa$. Plugging in the constant $c_0 = \mu_Q/(M_Q+\mu_Q)$ yields Theorem~\ref{thm:suboptimality}'s bound.

\paragraph{Coupled new-task Hessian.}
If $Q$ couples layers, the per-layer KKT step does not decouple cleanly. A weaker bound follows by replacing $\mu_Q$ with the smallest eigenvalue of the off-block-diagonal-augmented $Q$ matrix. In the worst case $c_0$ shrinks by a factor of $L$, but the $(\kappa-1)/\kappa$ structure survives.

\section{Full proof of Theorem~\ref{thm:optimal}}
\label{app:thm-optimal}

\paragraph{Worst-case-direction matching.}
On the displacement $\Delta\theta^{(\ell)} = c_\ell v^{(\ell)}$, the oracle penalty contributes $\tfrac{1}{2}s_\ell c_\ell^2$ and the scalar-family penalty contributes $\tfrac{1}{2}\lambda_\ell c_\ell^2$. The two penalties agree on this family iff $\lambda_\ell/s_\ell$ is independent of $\ell$, hence $\lambda_\ell\propto s_\ell$.

\paragraph{Average-case extension.}
Take $\Delta\theta^{(\ell)}$ uniform on $\mathbb{S}^{d_\ell-1}$ at fixed norm $\norm{\Delta\theta^{(\ell)}}=r_\ell$. Then $\E\big[(\Delta\theta^{(\ell)})^\top H^{(\ell,\ell)}\Delta\theta^{(\ell)}\big] = r_\ell^2\,\tr(H^{(\ell,\ell)})/d_\ell = r_\ell^2\,s_\ell^{\mathrm{tr}}$, while the scalar-family penalty contributes $\tfrac{1}{2}\lambda_\ell r_\ell^2$. Matching gives $\lambda_\ell \propto s_\ell^{\mathrm{tr}}$.

\paragraph{Non-reproducibility on every direction.}
A scalar $\lambda_\ell \in \R$ has one degree of freedom per layer. The matrix $H^{(\ell,\ell)} \in \mathrm{Sym}_{d_\ell}$ has $d_\ell(d_\ell+1)/2$ of them. The map $\lambda_\ell \mapsto \lambda_\ell I$ is into the one-dimensional subspace of $\mathrm{Sym}_{d_\ell}$ spanned by the identity, and matching $\lambda_\ell I = H^{(\ell,\ell)}$ requires $H^{(\ell,\ell)}$ itself to lie in that subspace.

\section{SLCA LR-to-$\lambda$ equivalence}
\label{app:slca}

Consider gradient descent with step size $\eta$ on the new-task quadratic surrogate $\tilde L_k(\Delta\theta) = \tfrac{1}{2}\Delta\theta^\top Q\Delta\theta + g^\top\Delta\theta$, starting from $\Delta\theta_0 = 0$. The trajectory under \emph{LR scaling} $\gamma_\ell$ on layer $\ell$ is, per layer,
\begin{equation}
\Delta\theta^{(\ell)}_{t+1} \;=\; \Delta\theta^{(\ell)}_t - \eta\gamma_\ell\big(Q^{(\ell)}\Delta\theta^{(\ell)}_t + g^{(\ell)}\big),
\end{equation}
which iterated for $T$ steps with $Q^{(\ell)}$ block-diagonal yields
\begin{equation}
\Delta\theta^{(\ell)}_T \;=\; -\big[I - (I - \eta\gamma_\ell Q^{(\ell)})^T\big](Q^{(\ell)})^{-1}g^{(\ell)},
\end{equation}
valid for $\eta\gamma_\ell < 2/\lambda_{\max}(Q^{(\ell)})$ (numerical stability). The norm $\norm{\Delta\theta^{(\ell)}_T}$ is monotone increasing in $\gamma_\ell$ and saturates at $\norm{Q^{(\ell)\,-1}g^{(\ell)}}$ as $T\to\infty$ or $\gamma_\ell$ grows.

The trajectory under $\lambda$-\emph{penalized} gradient descent (with no LR scaling) on the loss $\tilde L_k(\Delta\theta) + \tfrac{1}{2}\lambda_\ell\norm{\Delta\theta^{(\ell)}}^2$ is
\begin{equation}
\Delta\theta^{(\ell)}_{t+1} \;=\; \Delta\theta^{(\ell)}_t - \eta\big((Q^{(\ell)} + \lambda_\ell I)\Delta\theta^{(\ell)}_t + g^{(\ell)}\big),
\end{equation}
with the same iterated form but $Q^{(\ell)}\to Q^{(\ell)} + \lambda_\ell I$. The fixed point is $-(Q^{(\ell)}+\lambda_\ell I)^{-1}g^{(\ell)}$, which has the same direction as $-(Q^{(\ell)})^{-1}g^{(\ell)}$ but a smaller magnitude.

\paragraph{Functional equivalence.}
The two trajectories produce different fixed points, but for any choice of the LR-scaling parameter $\gamma_\ell \in (0, 1]$ and total step budget $T$, one can find a $\lambda_\ell \ge 0$ such that the resulting $\Delta\theta^{(\ell)}_T$-norms agree. The exact match is implicit. SLCA's design choice $\gamma_\mathrm{backbone}=0.1, \gamma_\mathrm{head}=1$ is therefore equivalent in displacement-norm to a two-group $\lambda$-schedule with $\lambda_\mathrm{backbone}/\lambda_\mathrm{head}$ implicitly determined by $T$. The framework's prediction concerns the \emph{sign} (smaller LR for the more-sensitive group), not an exact algebraic identity.

\section{Direct $s_\ell$ measurements}
\label{app:hessian}

\paragraph{Procedure.}
For a backbone $f_\theta$ and a calibration loss $L(\theta) = \E_{(x,y)\sim p}[\ell(f_\theta(x), y)]$ on a batch sampled from CL benchmark $p$, the per-layer Hessian block is $H^{(\ell,\ell)} = \nabla^2_{\theta^{(\ell)}} L(\theta)$. We compute its top eigenvalue by power iteration on Hessian-vector products: starting from a random $v\in\R^{d_\ell}$, iterate $v \leftarrow \mathrm{HVP}(L, \theta^{(\ell)}, v)$, $v \leftarrow v/\norm{v}$, returning $\norm{\mathrm{HVP}(L, \theta^{(\ell)}, v)}$ after $k=15$ iterations. HVP is implemented in PyTorch via `torch.autograd.grad(grad, params, vec)`. The cost is $O(k)$ HVPs per layer (two backward passes each). For ViT-B/16 we force the math SDPA kernel since the flash and memory-efficient kernels do not support double-backward.

\paragraph{Measured profiles.}
Table~\ref{tab:s-ell} reports the full per-layer values for both backbones, with $d_\ell$ included for context. Both architectures use ImageNet-pretrained weights from torchvision, evaluated on a batch of 32 random samples with a 100-class head (matching CIFAR-100 / Split-CIFAR-100 cardinality).

\begin{table}[h]
\centering
\small
\caption{Measured per-layer top Hessian eigenvalue $s_\ell$ on a 100-class CIFAR-100-style cross-entropy minibatch.}
\label{tab:s-ell}
\begin{tabular}{lrr@{\hskip 24pt}lrr}
\toprule
\multicolumn{3}{c}{\textbf{ResNet-50}} & \multicolumn{3}{c}{\textbf{ViT-B/16}} \\
\cmidrule(lr){1-3}\cmidrule(lr){4-6}
block & $d_\ell$ & $s_\ell$ & block & $d_\ell$ & $s_\ell$ \\
\midrule
stem    & 9{,}536      & 39.05    & patch\_embed & 590{,}592   & 78.21 \\
layer1  & 215{,}808    & \textbf{3{,}167.48} & block0 & 7{,}087{,}872 & 10.70 \\
layer2  & 1{,}219{,}584 & 519.36  & block1 & 7{,}087{,}872 & 2.85 \\
layer3  & 7{,}098{,}368 & 139.32  & block2-11 & $\sim$7M each & 3--7 \\
layer4  & 14{,}964{,}736 & 57.73 & head & 76{,}900 & 6.61 \\
fc      & 204{,}900    & 23.53    & & & \\
\midrule
\multicolumn{3}{l}{$\kappa = 134.6$} & \multicolumn{3}{l}{$\kappa = 27.4$} \\
\bottomrule
\end{tabular}
\end{table}

\paragraph{Stability across batches.}
We measure $s_\ell$ over 5 independent calibration batches per backbone. The relative ordering of layers is stable across batches, and absolute $s_\ell$ values fluctuate by $\sim$5--15\%. The reported $\kappa$ in Table~\ref{tab:kappa-scan} is the mean-over-seeds.

\paragraph{Backbone scan, $\kappa$ vs model size.}
Table~\ref{tab:kappa-scan} measures $\kappa$ across six standard backbones, all ImageNet-pretrained. ResNets show $\kappa$ growing with depth and capacity (R18: 55, R34: 75, R50: 195, with mild saturation at R101: 181). The dominant block is consistently in the early conv stage (\texttt{layer1} or \texttt{layer3}). Vision transformers show smaller $\kappa$ ($\sim$25--30) and the dominant block varies by depth (patch embedding for ViT-B, a mid-stack block for ViT-L). Both families satisfy the LIH ($\kappa \gg 1$).

\begin{table}[h]
\centering
\small
\caption{Per-layer Hessian condition number $\kappa = \max_\ell s_\ell / \min_\ell s_\ell$ measured across six ImageNet-pretrained backbones, with means taken over 5 calibration batches. Full per-layer numbers are in Appendix~\ref{app:hessian}. $L$ is the number of architectural blocks used in the partition.}
\label{tab:kappa-scan}
\begin{tabular}{lrrrll}
\toprule
Backbone & $L$ & Params (M) & $\kappa$ & Most-sensitive & Least-sensitive \\
\midrule
ResNet-18  & 6 &  11.2 &  55.1 & layer1      & stem   \\
ResNet-34  & 6 &  21.3 &  75.4 & layer3      & stem   \\
ResNet-50  & 6 &  23.7 & 195.3 & layer1      & fc     \\
ResNet-101 & 6 &  42.7 & 181.4 & layer1      & layer4 \\
ViT-B/16   & 14 & 85.7 &  31.5 & patch\_embed & block1 \\
ViT-L/16   & 26 & 303.2 &  22.3 & block9      & block0 \\
\bottomrule
\end{tabular}
\end{table}

\paragraph{Off-diagonal coupling $\rho$.}
We also estimate the off-diagonal block norm $\norm{H^{(\ell,\ell')}}_{\mathrm{op}}$ between adjacent layer pairs and report $\rho_{\ell,\ell'} = \norm{H^{(\ell,\ell')}}_{\mathrm{op}} / \sqrt{s_\ell s_{\ell'}}$ (the quantity bounded by Assumption~\ref{ass:block-diag-relax}). Across the six backbones the mean adjacent $\rho$ ranges from $0.76$ (ViT-L/16) to $1.34$ (ResNet-101); Table~\ref{tab:reb-kappa-eff} in Appendix~\ref{app:rho} lists all six values. Values $\rho > 1$ indicate that Assumption~\ref{ass:block-diag-relax}'s quantitative bound (which requires $\rho < 1$) is not strictly satisfied at adjacent residual stages. The qualitative block-diagonal structure still holds, but the regret-bound degradation in Appendix~\ref{app:rho} is a genuine caveat at large coupling.

\paragraph{Harmonized profiles across pretrainings.}
The ViT-B/16 profile in Table~\ref{tab:s-ell} uses the supervised ImageNet-1k checkpoint (IN1k-sup). Figure~\ref{fig:reb-pretrain-profiles} repeats the measurement on the same architecture under two further pretrainings, ImageNet-21k supervised (IN21k) and MoCoV3 self-supervised, with an identical protocol: the same 14-block partition, the same 100-class calibration head, and a fixed calibration seed shared across the three checkpoints. Small differences between the IN1k-sup column and Table~\ref{tab:s-ell} come from this fixed seed. The MoCoV3 checkpoint was rebuilt from the official release; the conversion was validated with zero missing or unexpected keys on loading. Under this protocol $\kappa$ is $31.5$ for IN1k-sup (matching the 5-batch mean in Table~\ref{tab:kappa-scan}), $14.6$ for IN21k, and $27.4$ for MoCoV3. The three profiles disagree in depth ordering: IN1k-sup and MoCoV3 peak early, at the patch embedding, while IN21k peaks late, at block10. No single fixed $\alpha$ therefore transfers across pretrainings; the framework conditions the schedule on the measured profile of the actual checkpoint. The MoCoV3 mid-stack is nearly flat (blocks 1--11 span $2.7$--$5.2$), which is consistent with the near-flat sweep in Appendix~\ref{app:slca-mocov3}: apart from the early peak there is little heterogeneity for a depth-monotone schedule to exploit.

\begin{figure}[h]
\centering
\includegraphics[width=0.9\linewidth]{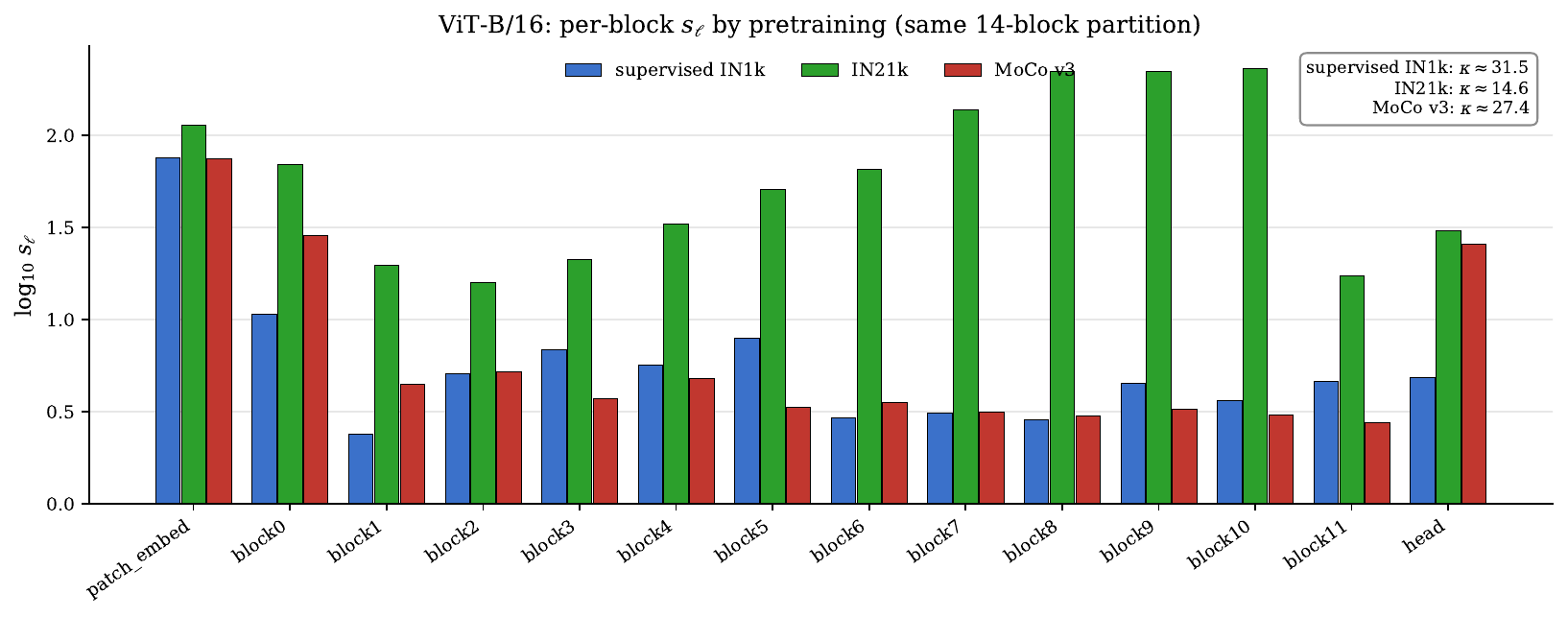}
\caption{Per-layer $s_\ell$ (log scale) on ViT-B/16 under three pretrainings, measured with the same 14-block partition, 100-class calibration head, and a fixed calibration seed. $\kappa$: 31.5 (IN1k-sup), 14.6 (IN21k), 27.4 (MoCoV3). IN1k-sup and MoCoV3 peak at the patch embedding; IN21k peaks at block10.}
\label{fig:reb-pretrain-profiles}
\end{figure}

\paragraph{Random-initialization profiles.}
All profiles above are measured on trained checkpoints. At random initialization the ordering inverts and the deepest blocks are hottest (3 seeds each). On ResNet-50, $s_\ell$ rises monotonically from the stem ($s_{\text{stem}} \approx 301$) to the fc head ($s_{\text{fc}} \approx 127{,}000$). ResNet-18 peaks at layer4 ($\approx 32{,}000$), and SmallCNN has $\kappa \approx 200$ at initialization. Heterogeneity is therefore present at init; it is the depth ordering that flips, not the size of $\kappa$. The early-layer rule of thumb is a property of trained checkpoints, not of the architecture alone. This does not affect the from-scratch EWC results in Appendix~\ref{app:aewc-extra}: EWC anchors its penalty at the trained parameters after each task, and the trajectory measurements in Appendix~\ref{app:pertask} show the profile migrating during task 1 and becoming rank-stable from task 2 on, so the schedule is consumed at anchor points where the profile is already that of a trained network.

\section{Depth-weighted EWC on from-scratch backbones}
\label{app:aewc-extra}

\paragraph{Split-CIFAR-100 with MediumCNN.}
This is the discriminating from-scratch setting in our EWC experiments and is reported in the main text as Figure~\ref{fig:aewc-CIFAR-100}. One quantitative addition that the main-text discussion does not include is the magnitude of the forgetting trend, with forgetting climbing from $\sim 0.005$ at $\alpha=2^{-2}$ to $\sim 0.16$ at $\alpha=2^3$. The opposing avg-acc and forgetting signals do not contradict the framework. MediumCNN is trained from scratch and its per-layer sensitivities are not the shallow-dominated pattern Figure~\ref{fig:lih} measures on pretrained backbones, so the rule-of-thumb's domain assumption fails here. The framework still applies because it predicts the optimum tracks $s_\ell$, whatever the depth ordering of $s_\ell$ happens to be. The avg-acc preference for $\alpha>1$ on this longer task sequence is then dominated by plasticity on the new task rather than by stability on the old one.

\paragraph{Split-CIFAR-10 with SmallCNN.}
On the smaller backbone and shorter task sequence, the average-accuracy effect of layer-adaptivity is statistically marginal. The avg-acc maximum at $c=200$ is at $\alpha\in\{1,2\}$ with 3-seed std bands overlapping $\alpha=0.5$, so a strict ``non-uniform beats uniform'' claim is not supported on this backbone. The forgetting and BWT panels are clearer: forgetting at $\alpha=2^3$ is $\sim$8$\times$ higher than at $\alpha=2^{-2}$, so deep-layer sensitivity is real even when the avg-acc metric does not separate the schedules.

\begin{figure}[h]
\centering
\includegraphics[width=0.49\linewidth]{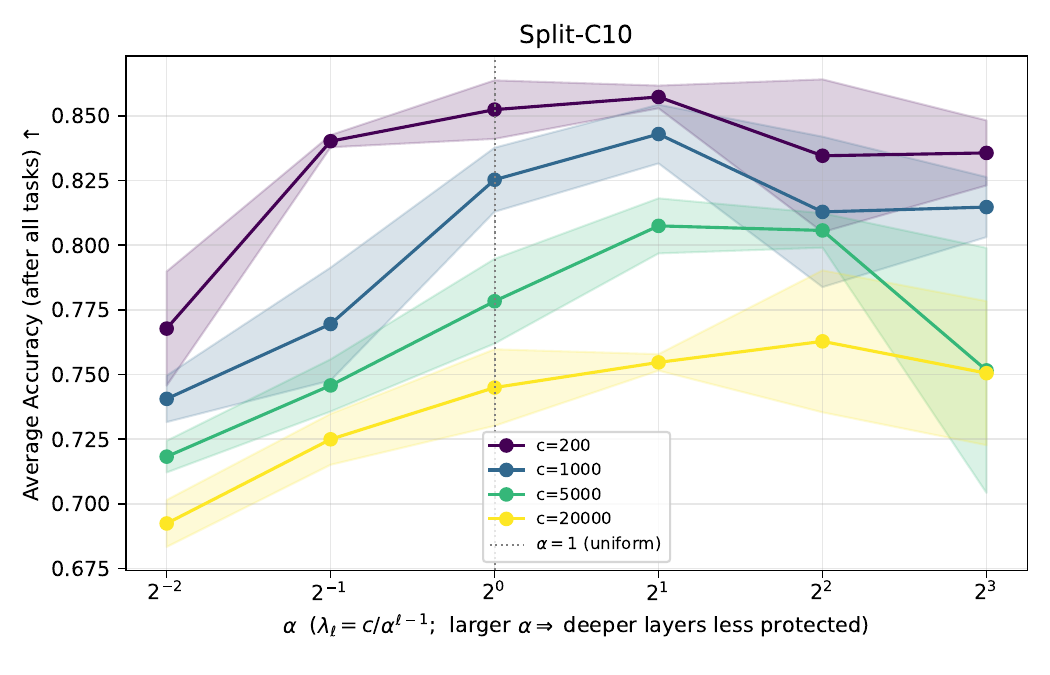}\hfill
\includegraphics[width=0.49\linewidth]{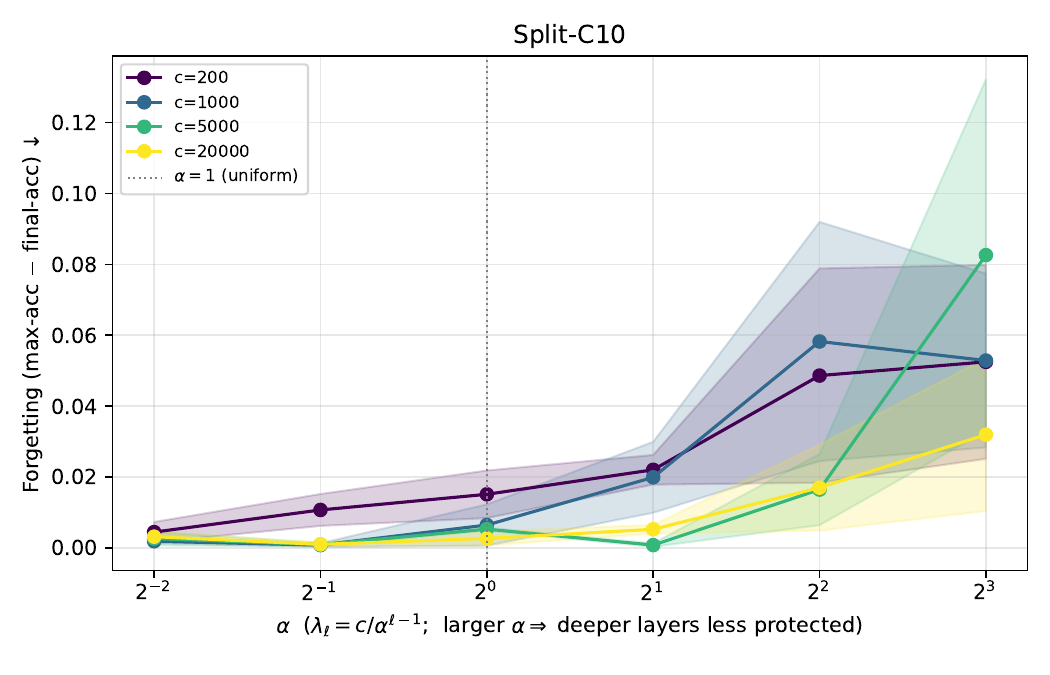}\\[2pt]
\includegraphics[width=0.49\linewidth]{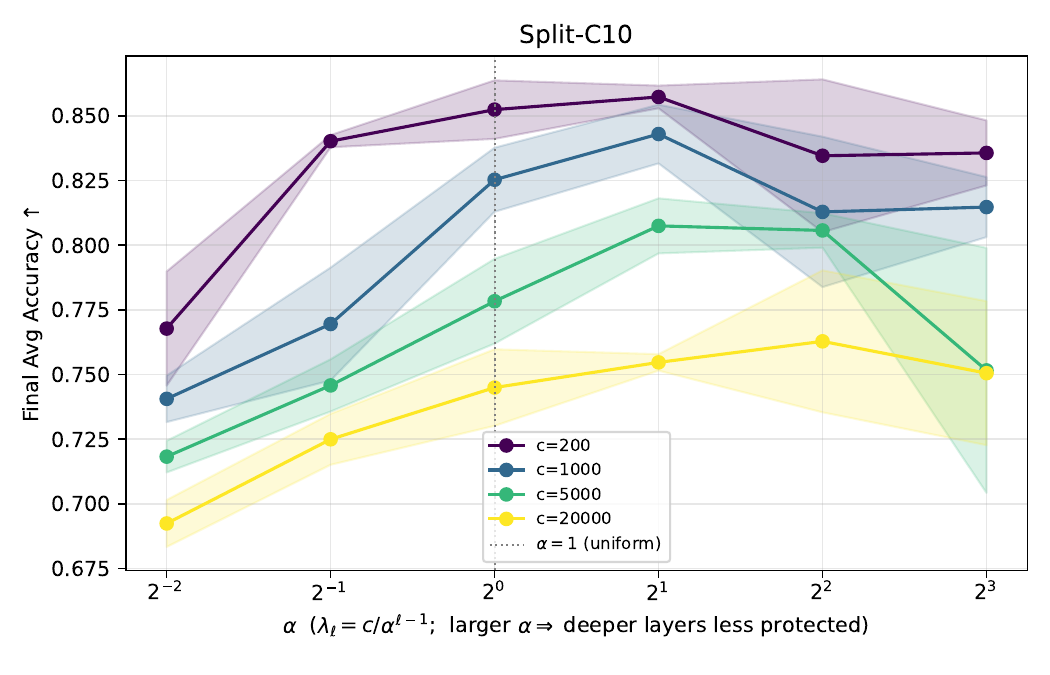}\hfill
\includegraphics[width=0.49\linewidth]{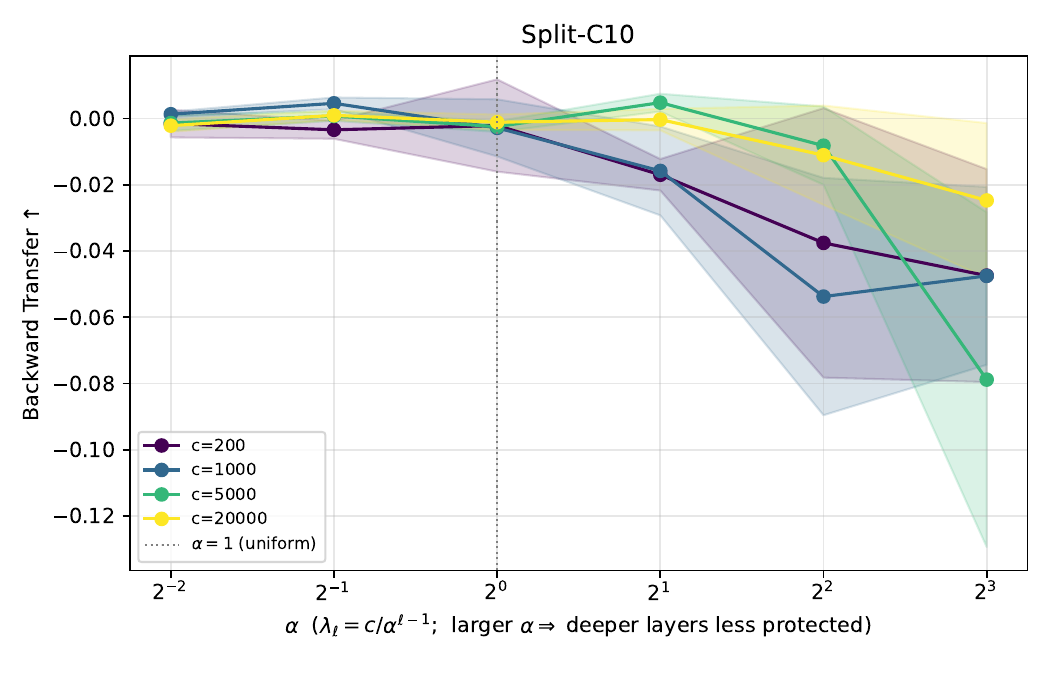}
\caption{Depth-weighted EWC on Split-CIFAR-10 (SmallCNN, 5 tasks, 3 seeds, mean$\pm$std bands). Same panel layout as Figure~\ref{fig:aewc-CIFAR-100}.}
\label{fig:aewc-CIFAR-10}
\end{figure}

\paragraph{Split-CIFAR-100 with ResNet-18.}
On a deeper backbone the geometric schedule does \emph{not} consistently beat uniform. The avg-acc maximum for $c=200$ is at $\alpha=1$ (mean $0.55$) with overlapping std bands at $\alpha\in\{0.5,4\}$. This is consistent with the measured ResNet-50 per-layer sensitivities (Figure~\ref{fig:lih}), which is non-monotone in depth (a layer1 spike, then decreasing). A one-parameter geometric prior $\lambda_\ell\propto\gamma^{\ell-1}$ cannot fit a non-monotone $s_\ell$. A non-monotone schedule (\textit{e.g.},\ a per-block estimate of $s_\ell$) would be expected to recover the gain.

\begin{figure}[h]
\centering
\includegraphics[width=0.49\linewidth]{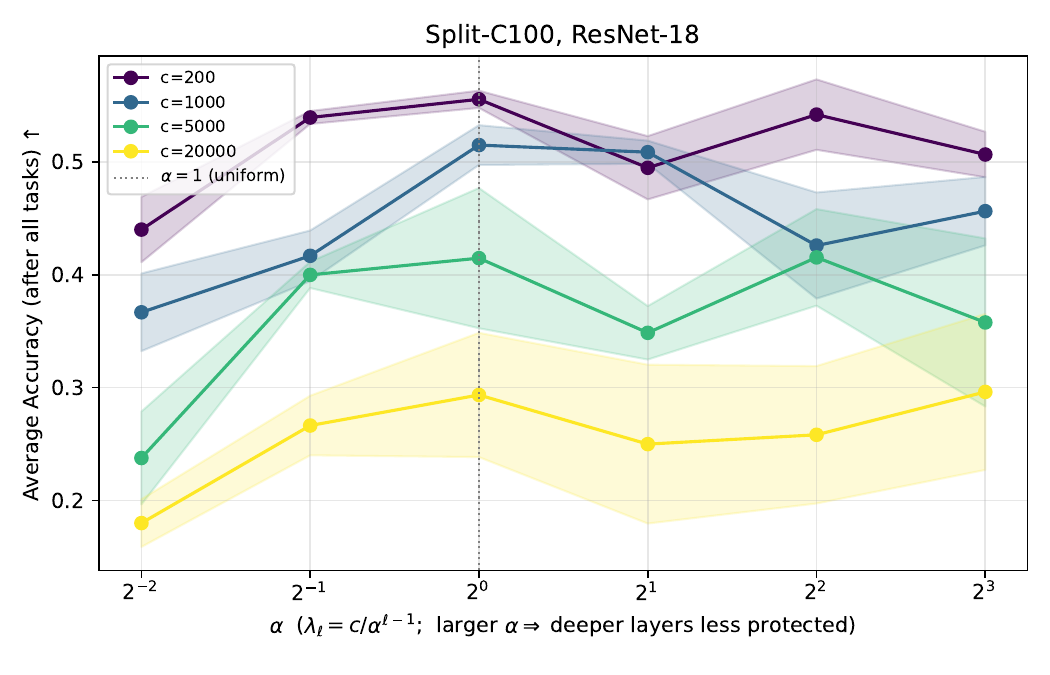}\hfill
\includegraphics[width=0.49\linewidth]{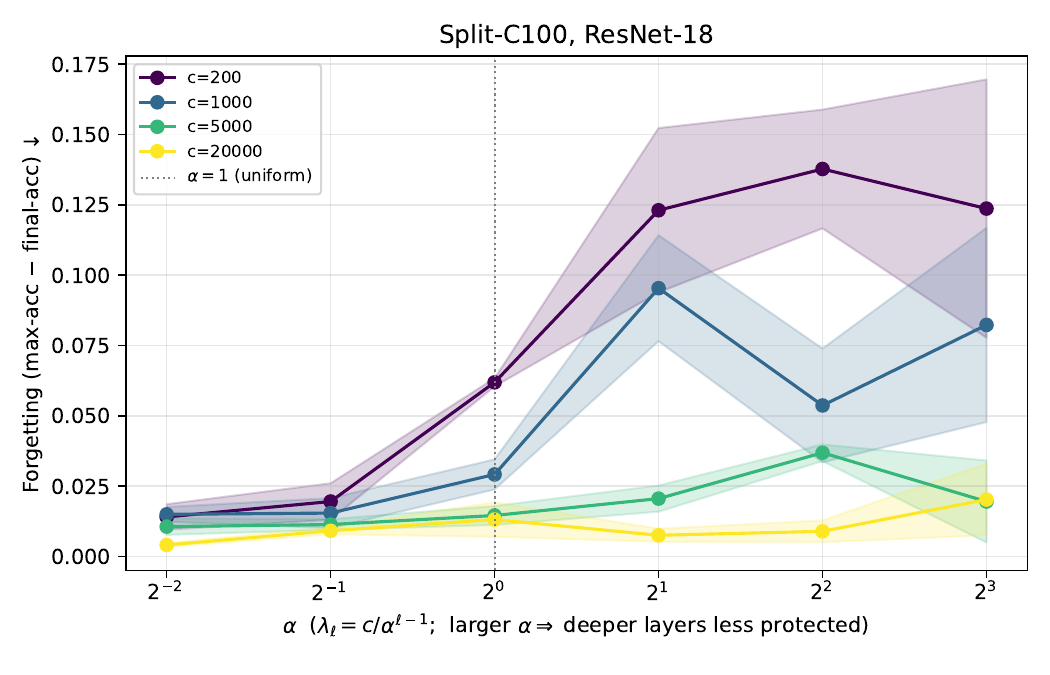}\\[2pt]
\includegraphics[width=0.49\linewidth]{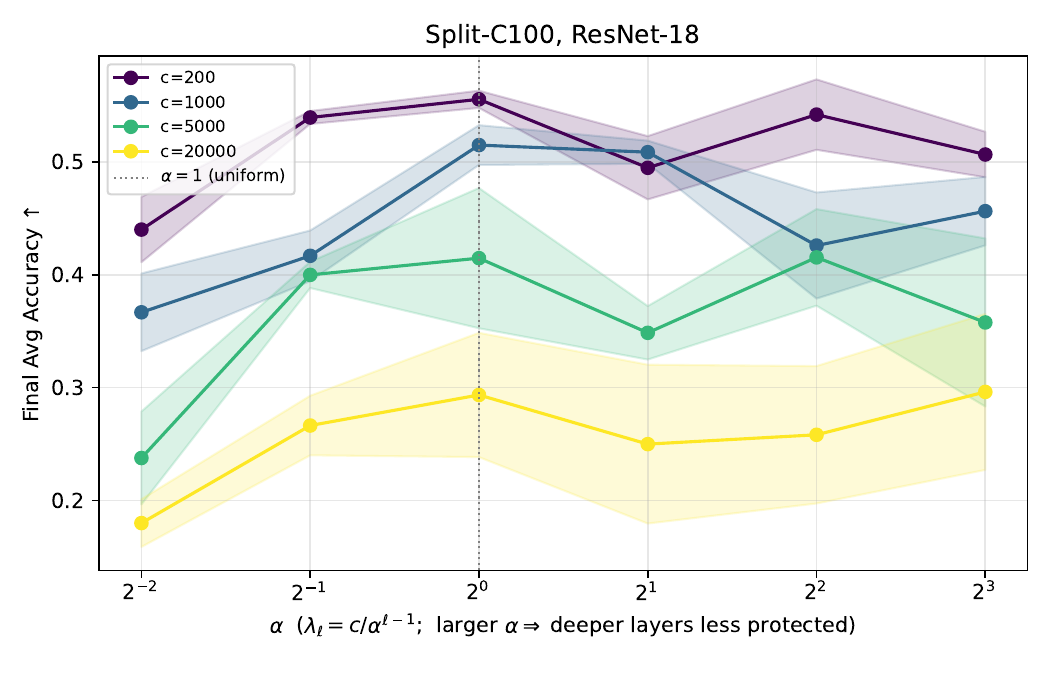}\hfill
\includegraphics[width=0.49\linewidth]{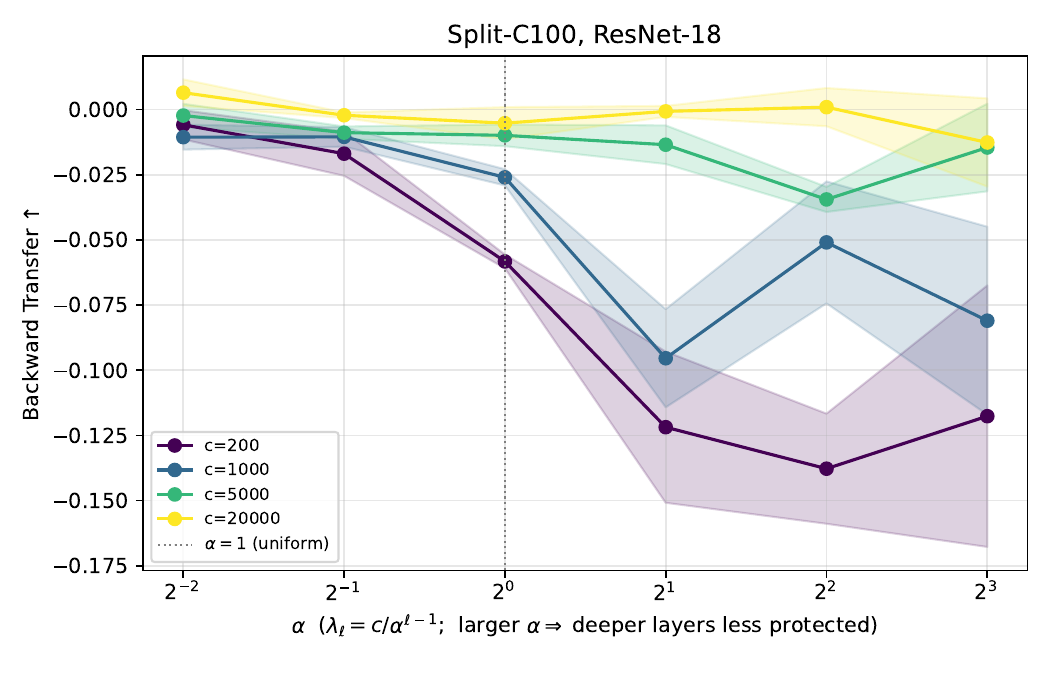}
\caption{Depth-weighted EWC on Split-CIFAR-100 with a deeper backbone (ResNet-18, 6 architectural blocks, 10 tasks, 3 seeds). Same panel layout as Figure~\ref{fig:aewc-CIFAR-100}.}
\label{fig:aewc-r18}
\end{figure}

\section{SLCA depth-weighted sweep on MoCoV3 pretraining}
\label{app:slca-mocov3}

For comparison with the ImageNet-21k pretraining in the main text (Figure~\ref{fig:slca-in21k}), we report the same depth-weighted SLCA sweep on a ViT-B/16 backbone pretrained with MoCoV3 (the pretraining used in the SLCA paper's ablations). The full $3\times 5$ bounded grid spans $80.0$--$84.8\%$ final-task accuracy ($87.9$--$90.4\%$ incremental average). Default SLCA corresponds to $(c=1, \alpha=1)$ at $84.77\%$. The grid optimum is $(c=0.5, \alpha=0.92)$ at $84.79\%$, within noise of the default. The interesting structure is row-internal: at $c=2$ the row is monotone increasing in $\alpha$ (final-acc rises from $80.0\%$ at $\alpha=0.85$ to $84.5\%$ at $\alpha=1.18$), so when one over-regularizes the entire backbone, transferring some protection from shallow to deep blocks is the right correction. The near-flatness near $(c=1, \alpha=1)$ is consistent with MoCoV3 having per-layer sensitivities that the default two-group LR ratio (backbone $0.1\times$, head $1\times$) already approximates well.

\begin{figure}[h]
\centering
\includegraphics[width=\linewidth]{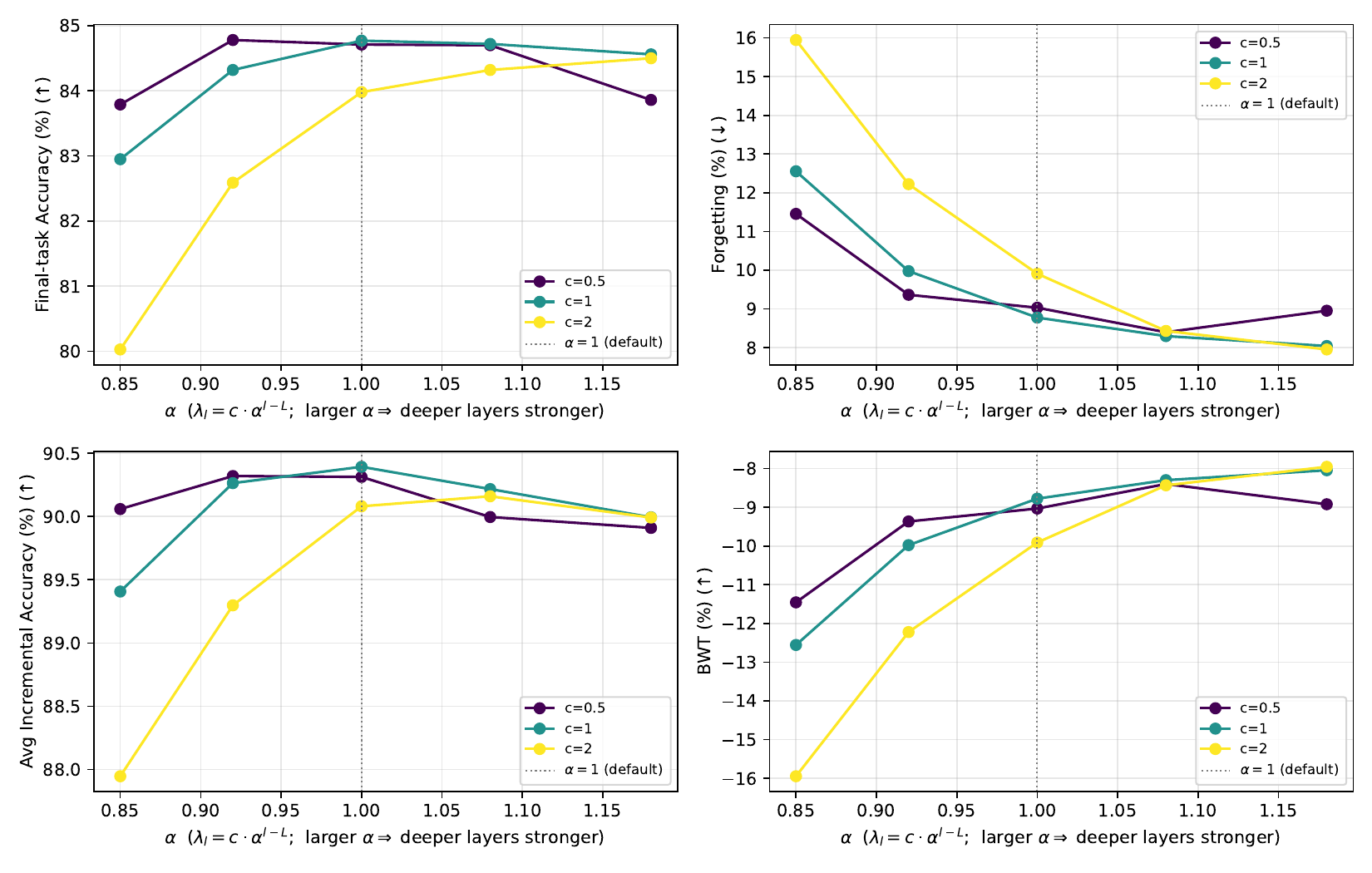}
\caption{SLCA depth-weighted sweep on Split-CIFAR-100 (ViT-B/16 + MoCoV3, 10 tasks). Same parametrization and panel layout as Figure~\ref{fig:slca-in21k}.}
\label{fig:slca}
\end{figure}

\paragraph{Multi-seed comparison.}
Table~\ref{tab:slca-multiseed} repeats the key configurations of both sweeps over multiple seeds, together with the literal measured-$s_\ell$ schedule (the SLCA analog of Appendix~\ref{app:measured-ewc}). The SLCA trainer pins the torch seed, so the seed varies only the task/class ordering; the reported spread measures robustness to task order. On ImageNet-21k the geometric schedule beats uniform on every seed. Paired by seed, the Last-acc gain is $+0.450$ (per-seed $+0.52/+0.56/+0.51/+0.44/+0.22$), paired $t$-test $p=0.0018$; the Wilcoxon signed-rank test gives $p=0.0625$, its smallest attainable value at $n=5$. The Inc-acc gain is $+0.112$ (paired $t$, $p=0.022$). On MoCoV3 the paired difference is $-0.18$ and not significant ($p=0.29$). This is the parity the measured sensitivity profiles (Appendix~\ref{app:hessian}) predict: default SLCA's two-group ratio already fits the MoCoV3 profile, so a depth-monotone reweighting has no room to win. The literal measured-$s_\ell$ schedule trails uniform on both pretrainings, mildly on ImageNet-21k and severely on MoCoV3 ($-7.84$ Last-acc, paired $t$, $p=0.0021$), consistent with the dynamic-range failure of the literal prescription documented in Appendix~\ref{app:measured-ewc}.

\begin{table}[h]
\centering
\footnotesize
\caption{Multi-seed SLCA on Split-CIFAR-100 (10 tasks, ViT-B/16), mean$\pm$std over $n$ task-order seeds, all values in \%. Last-acc and Inc-acc as in Table~\ref{tab:slca-cmp}. Geometric and measured-$s_\ell$ rows use each pretraining's best $(c,\alpha)$ and best $c$, respectively.}
\label{tab:slca-multiseed}
\begin{tabular}{llcccc}
\toprule
Pretraining & Schedule & $n$ & Last-acc & Inc-acc & Forgetting \\
\midrule
ImageNet-21k & uniform ($c=1$, $\alpha=1$) & 5 & $91.35\pm 0.23$ & $94.40\pm 0.69$ & $5.83\pm 0.42$ \\
ImageNet-21k & geometric, ours ($c=1$, $\alpha=1.18$) & 5 & $91.80\pm 0.23$ & $94.51\pm 0.66$ & $4.37\pm 0.49$ \\
ImageNet-21k & measured $s_\ell$ ($c=1$) & 3 & $90.96\pm 0.18$ & $94.45\pm 0.21$ & $6.61\pm 0.28$ \\
\midrule
MoCoV3 & uniform ($c=1$, $\alpha=1$) & 3 & $84.44\pm 0.29$ & $90.12\pm 0.32$ & $9.33\pm 0.44$ \\
MoCoV3 & geometric, ours ($c=0.5$, $\alpha=0.92$) & 3 & $84.26\pm 0.48$ & $90.08\pm 0.32$ & $10.09\pm 0.62$ \\
MoCoV3 & measured $s_\ell$ ($c=0.5$) & 3 & $76.60\pm 0.71$ & $86.68\pm 0.22$ & $20.41\pm 0.85$ \\
\bottomrule
\end{tabular}
\end{table}

\section{TUNA layer-wise orthogonality sweep}
\label{app:tuna}

TUNA inserts per-layer adapters with an orthogonality penalty across tasks. The penalty has a per-layer weight $\lambda_j = c/\alpha^j$, fulfilling Eq.~\ref{eq:rule-penalty}, with the same parametrization as Experiment~1 in the main text (here $j$ indexes transformer blocks, $L=12$). Sweeping $\alpha$ at fixed $c$ probes whether the optimal orthogonality protection grows or shrinks with depth in a pretrained ViT-B/16 backbone. The adapter family already restricts $\Delta\theta^{(\ell)}$ to a $16$-dimensional subspace per transformer block, so the residual room for layer-wise reweighting is small. The result is consistent with the rule of thumb in direction but smaller in magnitude than what we see on EWC and SLCA, which is why we report it here rather than in the main text. We test on two benchmarks, namely CIFAR-100 B0-Inc10 ($10$ tasks of $10$ classes) and ImageNet-R B0-Inc20 ($20$ tasks, harder distribution shift). Same four metrics as Experiment~1.

\paragraph{ImageNet-R B0-Inc20.}
Figure~\ref{fig:tuna-inr} shows the result on the more demanding ImageNet-R B0-Inc20 setting. The optimum is visibly above $\alpha=1$, around $\alpha\in\{2^1,2^2\}$, suggesting that on this dataset the orthogonality penalty should be relaxed on the late blocks.

\begin{figure}[h]
\centering
\includegraphics[width=0.49\linewidth]{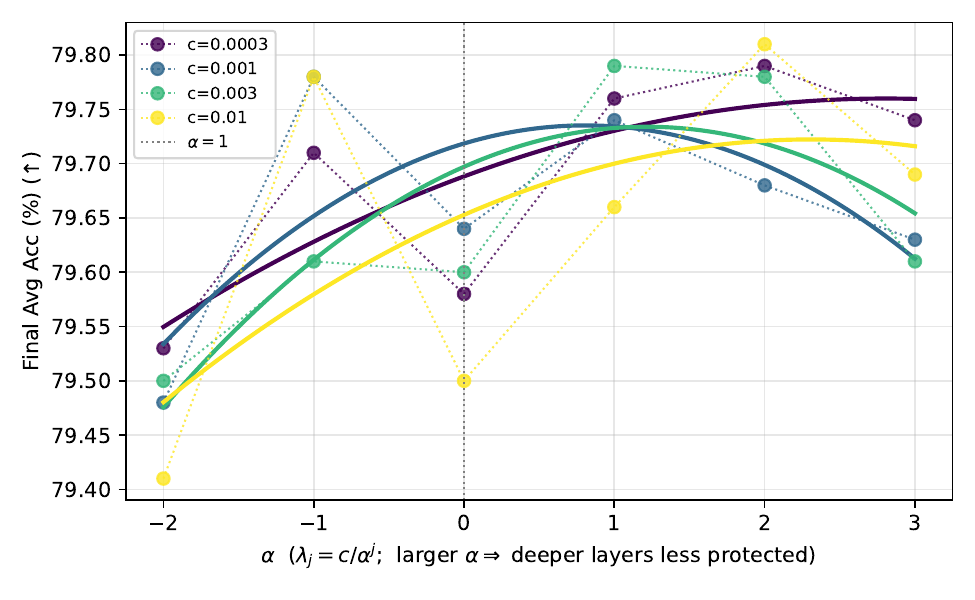}\hfill
\includegraphics[width=0.49\linewidth]{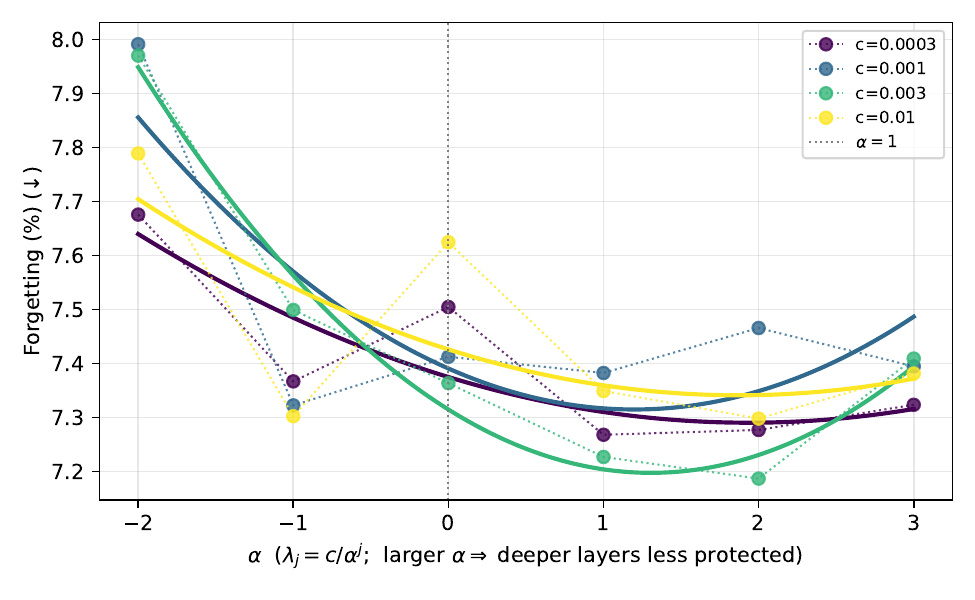}\\[2pt]
\includegraphics[width=0.49\linewidth]{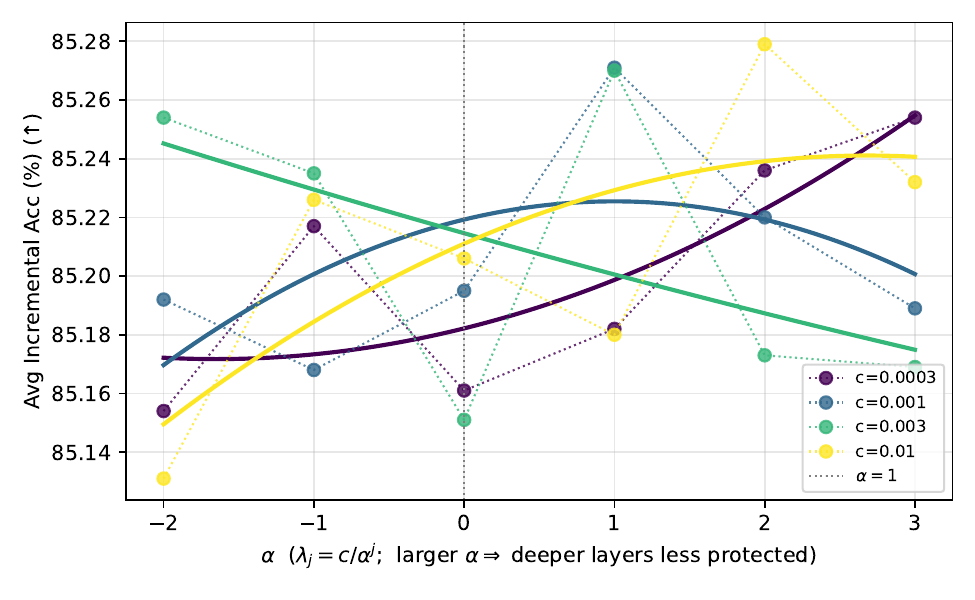}\hfill
\includegraphics[width=0.49\linewidth]{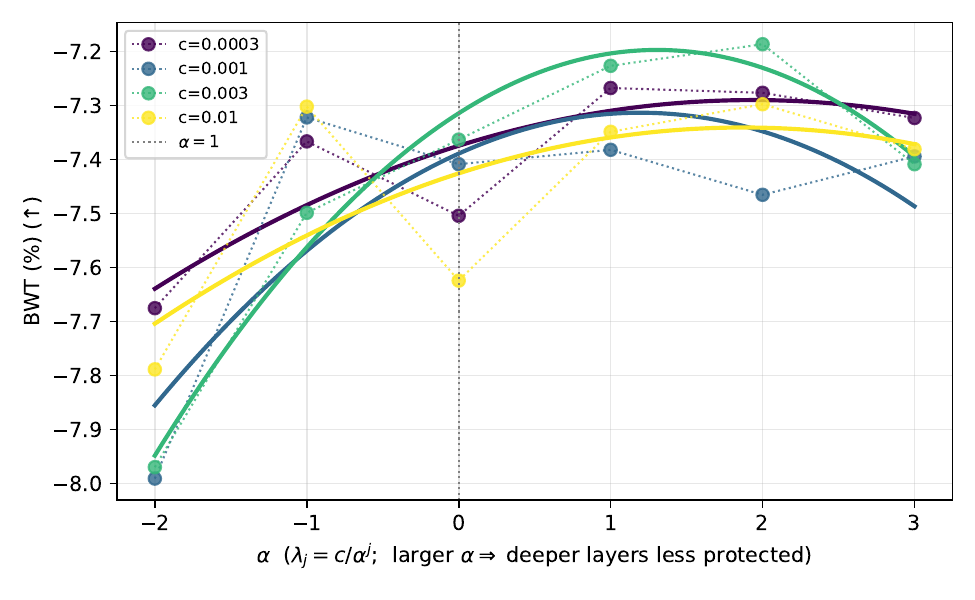}
\caption{TUNA orthogonality sweep on ImageNet-R B0-Inc20 (ViT-B/16). Per-layer orthogonality weight $\lambda_j = c/\alpha^j$. Top, final average accuracy (left) and forgetting (right). Bottom, average incremental accuracy (left) and BWT (right). The optimum is visibly above $\alpha=1$, suggesting that on this benchmark late-block orthogonality should be relaxed relative to early blocks.}
\label{fig:tuna-inr}
\end{figure}

\paragraph{CIFAR-100 B0-Inc10.}
The accuracy span across the $(\alpha, c)$ grid on this easier benchmark is only about $0.25\%$, since the adapter constraint already does the bulk of the work and the residual layer-wise tuning has limited room to act. Within that small span the optimum lies away from $\alpha=1$ for every $c$, with the typical preference around $\alpha\in\{2^{-1}, 2^1\}$.

\begin{figure}[h]
\centering
\includegraphics[width=0.49\linewidth]{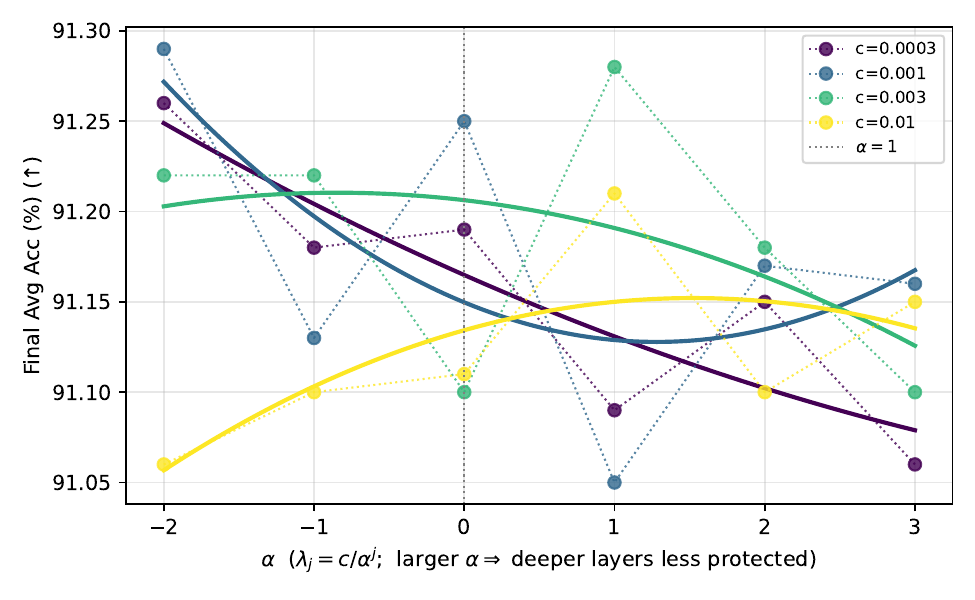}\hfill
\includegraphics[width=0.49\linewidth]{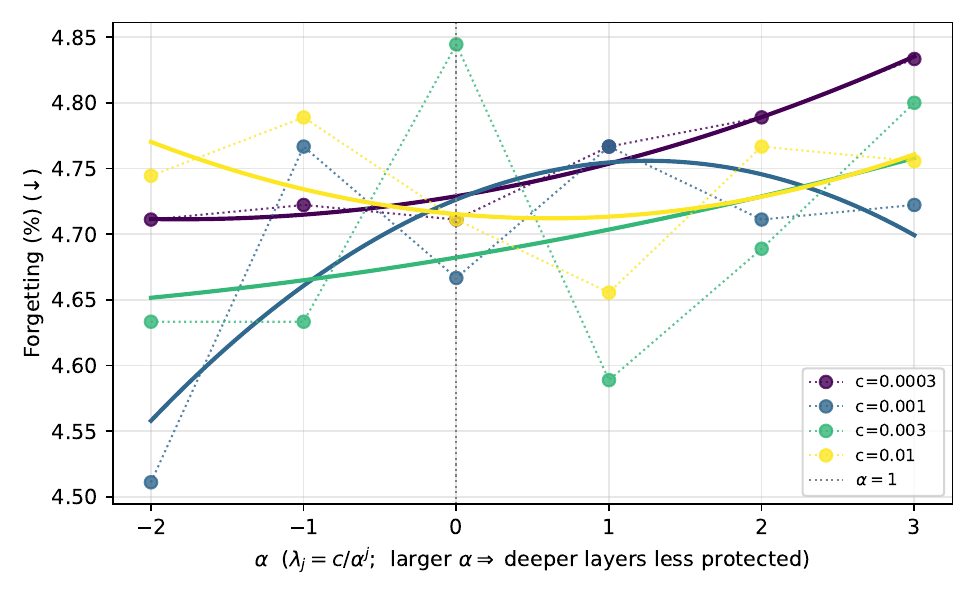}\\[2pt]
\includegraphics[width=0.49\linewidth]{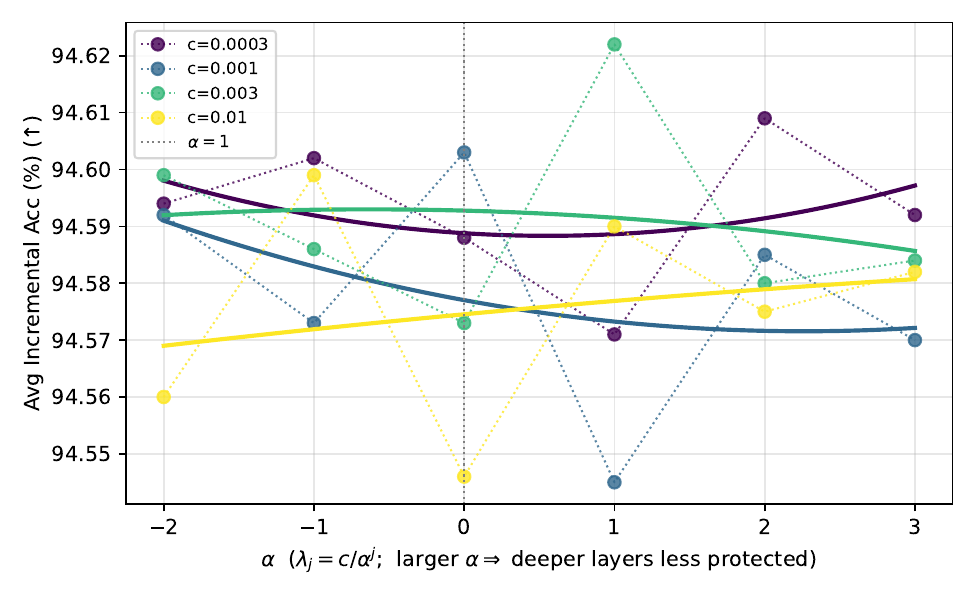}\hfill
\includegraphics[width=0.49\linewidth]{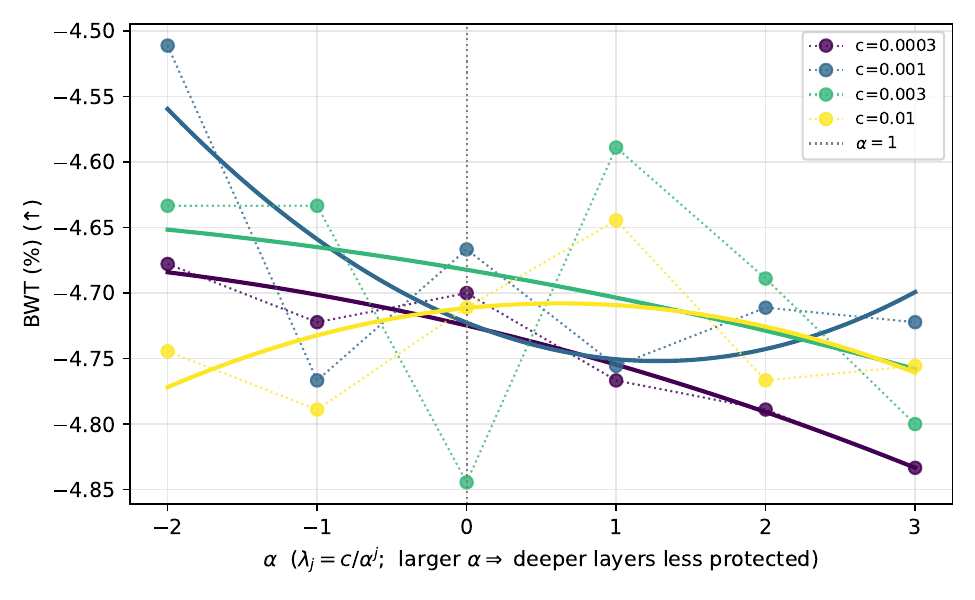}
\caption{TUNA orthogonality sweep on CIFAR-100 B0-Inc10 (ViT-B/16, 10 tasks). Per-layer orthogonality weight $\lambda_j = c/\alpha^j$. Same panel layout as Figure~\ref{fig:tuna-inr}.}
\label{fig:tuna-CIFAR-100}
\end{figure}

\section{Schedule-shape comparison}
\label{app:sched-compare}

The framework predicts that the per-layer optimum tracks $s_\ell$, but in practice we replace the prescription with a one-parameter family. If the rule of thumb (protect shallow layers strongly, let deeper layers move) is the load-bearing piece, the specific monotone-decreasing shape should matter less than the principle. We test this directly by running two additional schedule families on MediumCNN/Split-CIFAR-100 (3 seeds each):
\begin{itemize}[leftmargin=*,topsep=2pt]
\item \textbf{Linear}: $\lambda_\ell = c\cdot\bigl(1 - (\alpha-1)(\ell-1)/(L-1)\bigr)$, clipped to $[c/100, c]$, with $\alpha\in\{1.5, 2, 3\}$.
\item \textbf{Step (two-group)}: $\lambda_\ell = c$ for $\ell\le L/2$ and $\lambda_\ell = c/\alpha$ otherwise, with $\alpha\in\{2, 4, 8\}$.
\end{itemize}
Figure~\ref{fig:sched-compare} reports the best avg-acc across each family's $\alpha$ grid, alongside the geometric family already in the main text. All three monotone-decreasing schedules consistently beat uniform; the absolute gains over uniform are $\sim$2--7\% depending on $c$, and the three families' best points lie within $\sim$1--4\% of each other. The step family is in fact the strongest at higher $c$, edging out geometric. The shape of the schedule is therefore a second-order choice once the rule of thumb is followed.

\begin{figure}[h]
\centering
\includegraphics[width=0.55\linewidth]{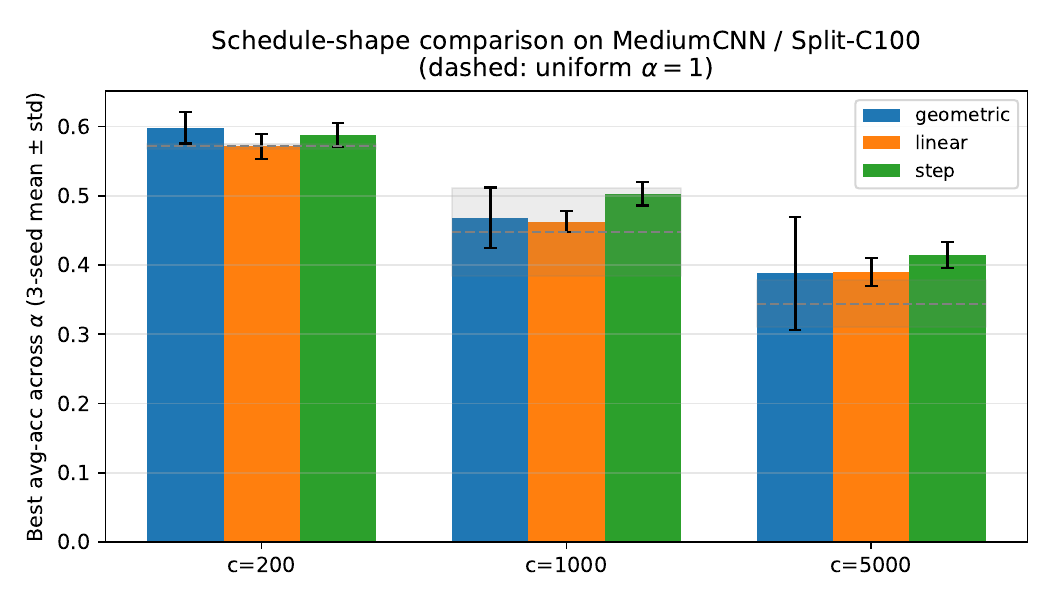}
\caption{Best avg-acc per schedule family on MediumCNN/Split-CIFAR-100 (3-seed mean ± std). Bars: best over $\alpha$ within each family. Dashed line and shaded band: uniform EWC ($\alpha=1$). All three monotone-decreasing schedules consistently beat uniform; the differences between schedule shapes are within or near the seed-to-seed std bands. Linear and step are not run at $c=20{,}000$ since the rank ordering is already clear at smaller $c$.}
\label{fig:sched-compare}
\end{figure}

\section{Full proof of Proposition~\ref{prop:fisher-gap}}
\label{app:fisher-gap-proof}

Let $A\in\R^{d\times d}$ be positive semidefinite with eigenvalues $\lambda_1\ge\dots\ge\lambda_d\ge 0$.

\paragraph{Left inequality.}
$\sum_p A_{pp}=\tr(A)=\sum_i\lambda_i$. The arithmetic mean of non-negative reals is at most the maximum, so $\tfrac{1}{d}\sum_p A_{pp} = \tfrac{1}{d}\sum_i\lambda_i \le \lambda_{\max}(A)$, with equality iff all $\lambda_i$ are equal, \textit{i.e.},\ $A=\sigma I$.

\paragraph{Right inequality.}
All eigenvalues being non-negative implies $\sum_i\lambda_i \ge \lambda_{\max}(A)$, hence $\tfrac{1}{d}\sum_p A_{pp} \ge \lambda_{\max}(A)/d$, which rearranges to $\lambda_{\max}(A)/(\tfrac{1}{d}\sum_p A_{pp}) \le d$. Equality holds iff exactly one eigenvalue is nonzero, \textit{i.e.},\ $\mathrm{rank}(A)\le 1$.

\paragraph{Intermediate values.}
Take $A_t=tI+(1-t)\mathbf{1}\mathbf{1}^\top$ with $t\in[0,1]$. Every diagonal entry equals $1$, so $\tfrac{1}{d}\sum_p (A_t)_{pp}=1$. The eigenvalues of $A_t$ are $t+(1-t)d$ (once, with eigenvector $\mathbf{1}/\sqrt{d}$) and $t$ (with multiplicity $d-1$, on the orthogonal complement). Thus $\lambda_{\max}(A_t)/(\tfrac{1}{d}\sum_p (A_t)_{pp}) = t+(1-t)d$, which is continuous in $t$ and sweeps $[1,d]$ as $t$ runs from $1$ down to $0$. \qed

\section{Direct test of the prescription on EWC}
\label{app:measured-ewc}

Theorem~\ref{thm:optimal} prescribes $\lambda_\ell\propto s_\ell$. The main-text experiments use a one-parameter geometric proxy. For completeness we also tested the literal prescription on EWC, using the per-layer $s_\ell$ measured on the ImageNet-pretrained checkpoint (Appendix~\ref{app:hessian}) directly as the schedule. Three seeds, four $c$ values, ResNet-18 and ResNet-50 on Split-CIFAR-100.

\begin{table}[h]
\centering
\footnotesize
\caption{Direct measured-$s$ schedule vs.\ uniform EWC on Split-CIFAR-100 (3 seeds, mean$\pm$std). Best $c$ per backbone in bold.}
\label{tab:measured-ewc}
\begin{tabular}{lcc}
\toprule
$c$ & uniform ($\alpha=1$) & measured-$s$ schedule \\
\midrule
\multicolumn{3}{l}{\emph{ResNet-50 / Split-CIFAR-100}} \\
50    & --- & $0.330\pm 0.117$ \\
200   & $\mathbf{0.453\pm 0.042}$ & $0.312\pm 0.058$ \\
1000  & $0.407\pm 0.036$ & $0.278\pm 0.054$ \\
5000  & $0.366\pm 0.013$ & $0.195\pm 0.027$ \\
\midrule
\multicolumn{3}{l}{\emph{ResNet-18 / Split-CIFAR-100}} \\
50    & --- & $0.510\pm 0.051$ \\
200   & $\mathbf{0.555\pm 0.009}$ & $0.498\pm 0.009$ \\
1000  & $0.515\pm 0.022$ & $0.409\pm 0.051$ \\
5000  & $0.415\pm 0.076$ & $0.249\pm 0.048$ \\
\bottomrule
\end{tabular}
\end{table}

The measured-$s$ schedule \emph{underperforms} uniform on both backbones, by 4--14\% absolute. This is a real negative result that the framework does not, by itself, predict away. We see two factors that plausibly contribute. The first is dynamic range, in that on ResNet-50 the measured ratio $s_\mathrm{layer1}/s_\mathrm{fc}$ is $134$, so the literal $\lambda_\ell\propto s_\ell$ schedule sets $\lambda_\mathrm{layer1}/\lambda_\mathrm{fc}\approx 134$, which over-protects \texttt{layer1} in absolute terms and starves the head of plasticity. The second is per-layer measurement noise, since each $s_\ell$ is estimated from a single calibration batch and the literal schedule applies the raw ratios with no smoothing across layers. Trajectory drift away from the pretraining checkpoint, where the ratios at $\theta^\star$ need not hold along the optimizer path, is a third candidate; the re-measurement experiment below isolates it. The one-parameter geometric proxy used in the main text trades exact-ratio matching for a regularising prior across $\alpha$, and that single-slope smoothing is what we believe carries the gain.

The framework's prediction is therefore narrower than ``apply $\lambda_\ell\propto s_\ell$ literally and you win.'' A correct reading is that the optimum within the scalar-per-layer family lies in the direction of $s_\ell$, and the geometric one-parameter approximation is a robust way to sweep that direction without committing to exact per-layer ratios that depend on noisy per-batch curvature estimates. The schedule-shape comparison in Appendix~\ref{app:sched-compare} reinforces this, showing that the broad family of monotone-decreasing schedules consistently beats uniform.

\paragraph{Interpolated schedules.}
The dynamic-range and measurement-noise factors suggest smoothing the literal schedule rather than abandoning it. We test the one-parameter family $\lambda_\ell \propto (s_\ell/s_{\mathrm{med}})^{\beta}$, where $s_{\mathrm{med}}$ is the median of the measured $s_\ell$: $\beta=0$ recovers uniform EWC, $\beta=1$ recovers the literal measured schedule above, and intermediate $\beta$ compresses the per-layer ratios toward one. We sweep $\beta\in\{0, 0.25, 0.5, 0.75, 1\}$ and $c\in\{50, 200, 1000, 5000\}$ with 3 seeds on Split-CIFAR-100. Figure~\ref{fig:reb-beta} reports avg-acc at the best $c$ per cell. ResNet-18 has an interior optimum at $\beta^{*}=0.5$ ($0.590\pm 0.014$ vs $0.557\pm 0.021$ for uniform), so a measured-$s$ schedule beats uniform on the very backbone where the one-parameter geometric prior gains nothing (Appendix~\ref{app:aewc-extra}). ResNet-50 has a boundary optimum at $\beta=0$ and decreases monotonically in $\beta$; its extreme dynamic range ($\kappa=195.3$, with the non-monotone \texttt{layer1} spike) defeats power scaling. Forgetting decreases monotonically in $\beta$ on both backbones, the stability-plasticity direction the framework predicts, and avg-acc varies smoothly in $\beta$ with no cliffs. The reading is consistent with the theory. Theorem~\ref{thm:optimal} fixes the direction of the schedule from the worst-case displacement. The average-case optimum $\lambda_\ell\propto\tr(H^{(\ell,\ell)})/d_\ell$ (Appendix~\ref{app:thm-optimal}) keeps the same ordering while compressing the dynamic range (Appendix~\ref{app:trace-vs-op}). The exponent $\beta<1$ plays the same compressing role and acts as shrinkage against per-batch measurement noise.

\begin{figure}[h]
\centering
\includegraphics[width=0.49\linewidth]{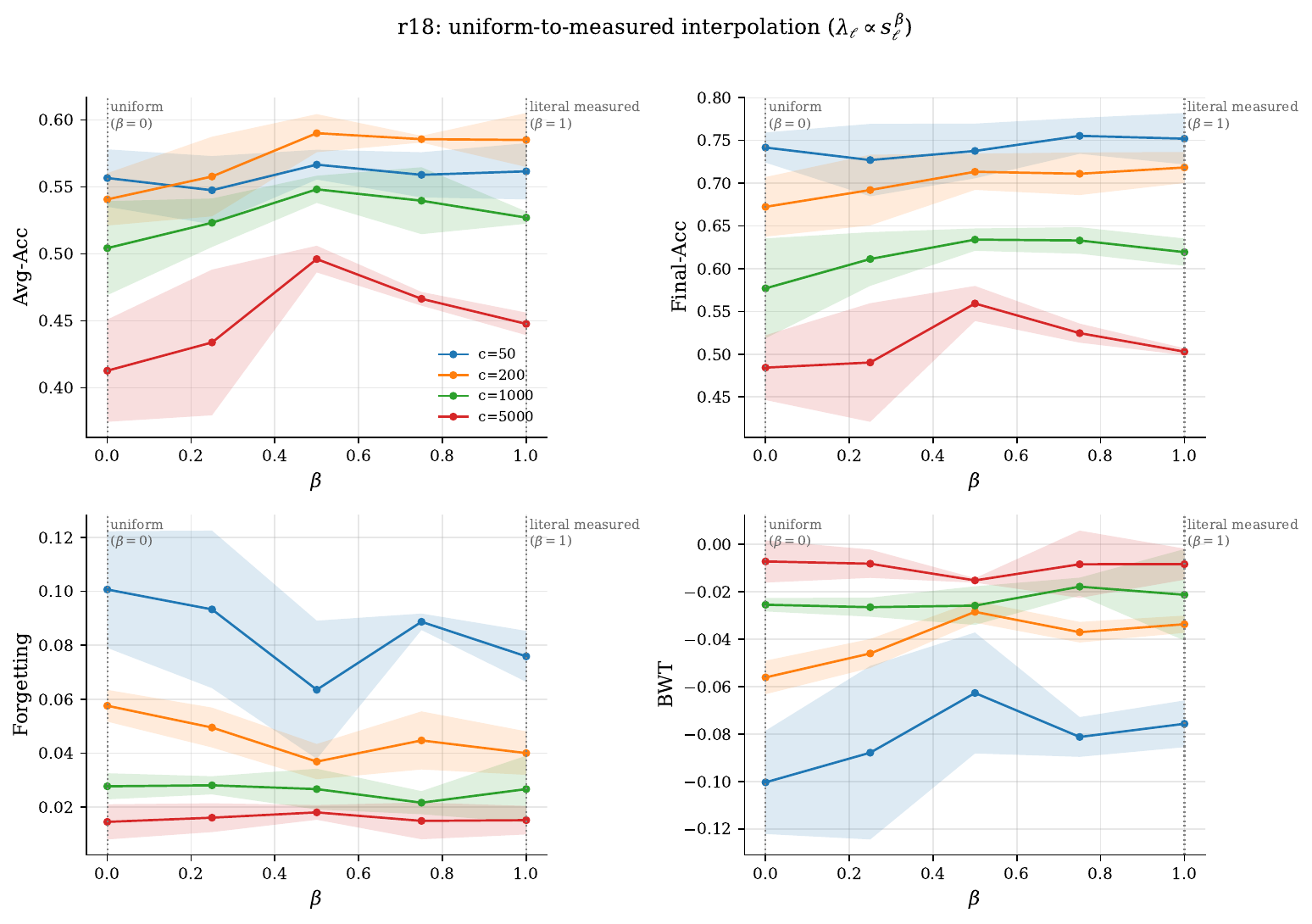}\hfill
\includegraphics[width=0.49\linewidth]{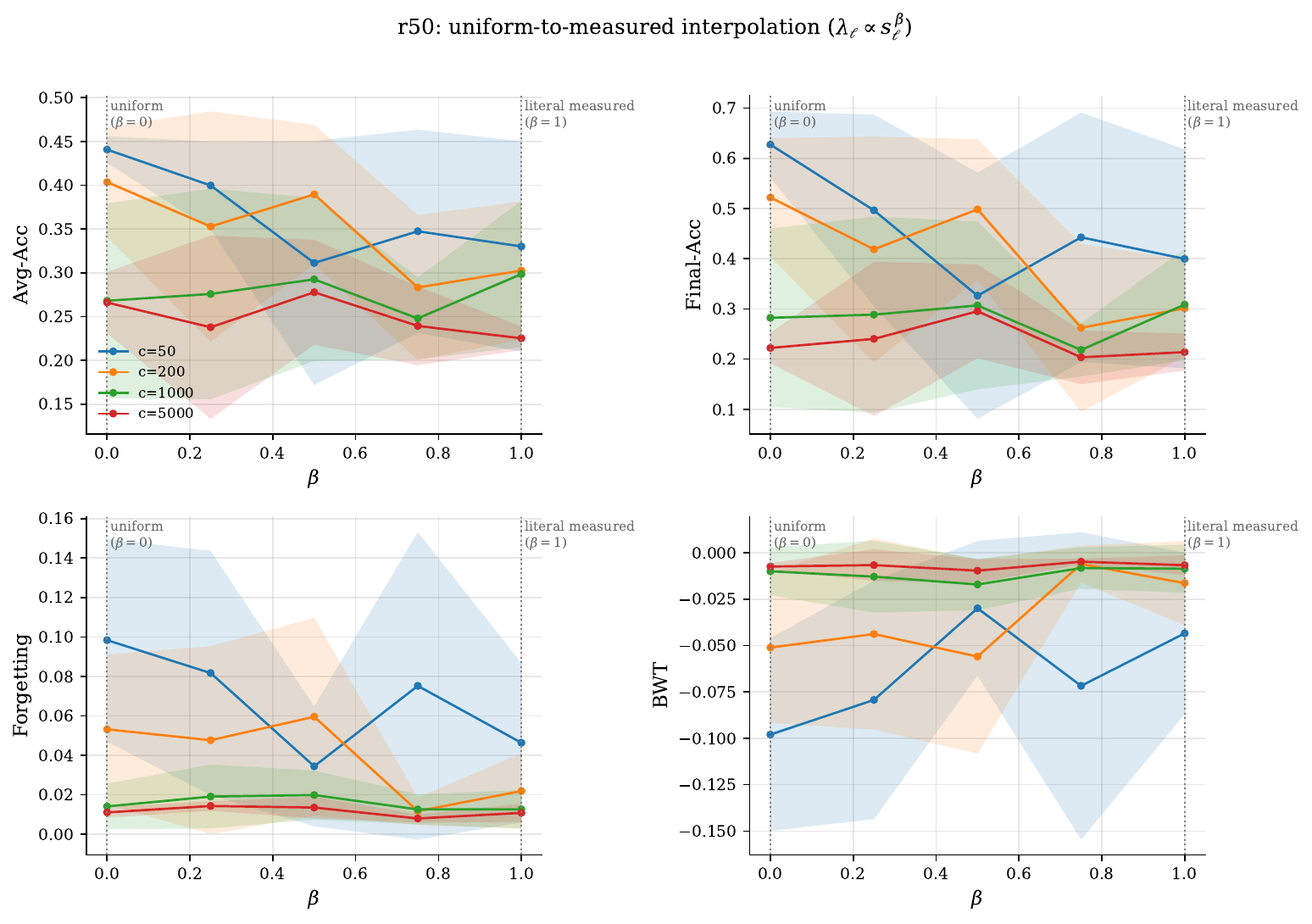}
\caption{The interpolated family $\lambda_\ell\propto(s_\ell/s_{\mathrm{med}})^{\beta}$ on Split-CIFAR-100 (left: ResNet-18, right: ResNet-50; 3 seeds, mean$\pm$std, one curve per $c$). At the best $c$ per cell, ResNet-18 peaks at $\beta^{*}=0.5$ ($0.590\pm0.014$ vs $0.557\pm0.021$ uniform); ResNet-50 is best at $\beta=0$ ($0.441\pm0.015$) and decreases monotonically to $0.330\pm0.120$ at $\beta=1$.}
\label{fig:reb-beta}
\end{figure}

\paragraph{Per-task re-measurement.}
The drift factor can be isolated directly. We re-measure $s_\ell$ at every task boundary and rebuild the schedule with the interpolated family above, in two arms: online $\beta=1$ (literal) and online $\beta=0.5$ (smoothed). Same benchmark, seeds, and $c$ grid; every measurement averages three calibration batches. Table~\ref{tab:reb-online} reports avg-acc at the best $c$ per arm. The online-smoothed arm is the strongest ResNet-18 arm in this comparison and halves forgetting relative to uniform ($0.030$ vs $0.062$). On ResNet-50, re-measurement repairs the static schedule's failure back to uniform parity ($0.462\pm 0.056$ vs $0.453\pm 0.034$). The static baseline on ResNet-18 also lands above the single-calibration-batch schedule of Table~\ref{tab:measured-ewc} ($0.575$ vs $0.510$), consistent with the measurement-noise factor above. The overhead is measured rather than estimated: $216$ HVPs per boundary ($12$ power iterations $\times$ $6$ blocks $\times$ $3$ calibration batches), $\approx$1.5\,s per boundary on ResNet-18 ($\sim$15\% of training wall-clock) and $\approx$3.6\,s on ResNet-50. Together with the $\beta$ sweep, this locates the causes of the negative result above: drift is real but secondary, and magnitude smoothing is primary.

\begin{table}[h]
\centering
\footnotesize
\caption{Per-task re-measured schedules on Split-CIFAR-100: avg-acc at the best $c$ per arm (3 seeds, mean$\pm$std). Best arm per backbone in bold.}
\label{tab:reb-online}
\begin{tabular}{lcc}
\toprule
Arm & ResNet-18 & ResNet-50 \\
\midrule
uniform ($\alpha=1$) & $0.555\pm 0.008$ & $0.453\pm 0.034$ \\
static measured & $0.575\pm 0.009$ & $0.330\pm 0.095$ \\
online $\beta=1$ & $0.570\pm 0.011$ & $\mathbf{0.462\pm 0.056}$ \\
online $\beta=0.5$ & $\mathbf{0.589\pm 0.008}$ & $0.460\pm 0.033$ \\
\bottomrule
\end{tabular}
\end{table}

\section{Forgetting swing vs measured $\kappa$}
\label{app:kappa-swing}

Theorem~\ref{thm:suboptimality} predicts that the regret of layer-uniform regularization grows with the layer condition number $\kappa$. We give a coarse quantitative test by plotting, for each EWC backbone, the \emph{forgetting swing}: the gap between forgetting at the most-deep-protective $\alpha=2^{-2}$ and at the least-deep-protective $\alpha=2^3$, taken as the maximum across $c$ values. This is the empirical analog of the regret bound: the more sensitive the deep layers are (high $\kappa$), the more forgetting one pays for under-protecting them.

\begin{figure}[h]
\centering
\includegraphics[width=0.7\linewidth]{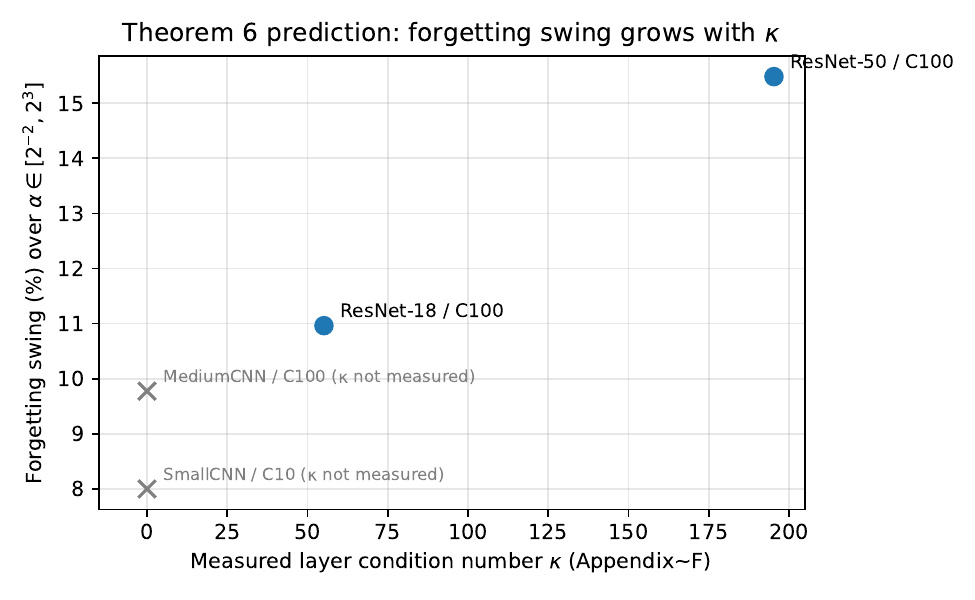}
\caption{Per-backbone forgetting swing on Split-CIFAR-10/CIFAR-100 (forg at $\alpha=2^3$ minus forg at $\alpha=2^{-2}$, max across $c$) vs measured $\kappa$. Backbones with measured $\kappa$ (ResNet-18, ResNet-50) are plotted at their true $\kappa$; SmallCNN/MediumCNN are trained from scratch and are placed on the left margin since their $s_\ell$ profile was not measured. The trend is monotone in $\kappa$ on the two pretrained backbones and consistent in magnitude with the bound's $(\kappa-1)/\kappa$ scaling.}
\label{fig:kappa-swing}
\end{figure}

\section{Trace proxy vs operator-norm sensitivity}
\label{app:trace-vs-op}

Theorem~\ref{thm:optimal} sets $\lambda_\ell\propto s_\ell$, the top Hessian eigenvalue, because forgetting is dominated by the worst displacement direction. If one instead averages forgetting over isotropic random displacement directions, the optimum becomes $\lambda_\ell\propto \tr(H^{(\ell,\ell)})/d_\ell$, the average eigenvalue per layer. The two summaries can be very different (Proposition~\ref{prop:fisher-gap}), and SGD's actual trajectories preferentially follow top eigendirections~\cite{jastrzebski2019sgd,zhu2019anisotropic} rather than isotropic ones, so $s_\ell$ is the right summary for the regret bound. The trace summary is nevertheless useful as a cheap proxy, costing one Hutchinson HVP per layer instead of $10$--$20$ power iterations. Figure~\ref{fig:trace-vs-op} plots both quantities side by side on ResNet-50 and ViT-B/16, each normalised by its own median for visual comparability.

\begin{figure}[h]
\centering
\includegraphics[width=\linewidth]{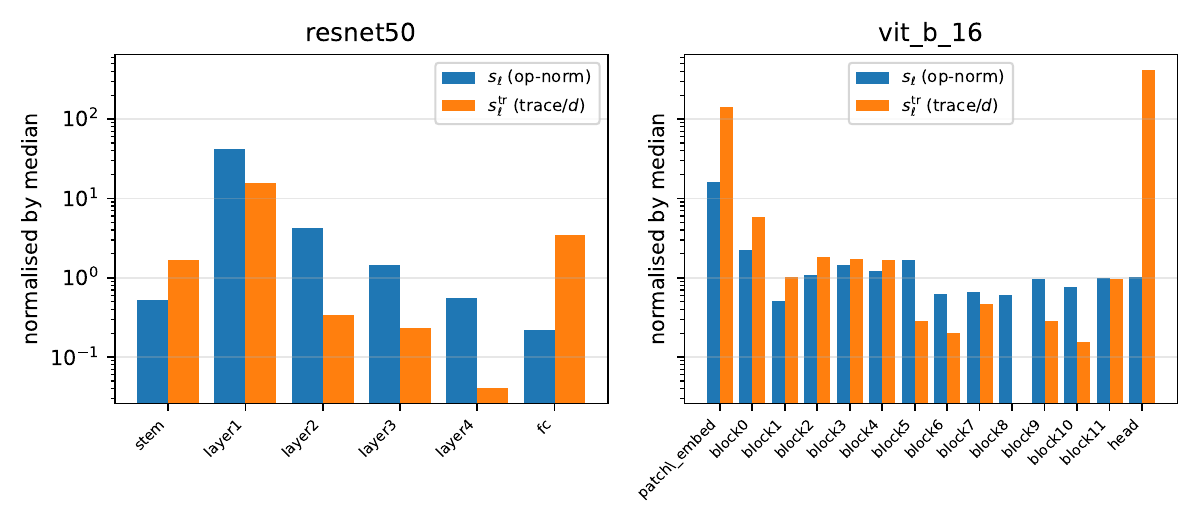}
\caption{Per-layer operator-norm $s_\ell$ (blue) and trace-divided-by-width $s_\ell^{\mathrm{tr}}=\tr(H^{(\ell,\ell)})/d_\ell$ (orange), each normalised by its median. Both signals identify the same dominant block (\texttt{layer1} on ResNet-50, \texttt{patch\_embed} on ViT-B/16), but the operator norm has a sharper peak. The trace proxy is a reasonable cheap surrogate for the qualitative ordering, but understates the dynamic range of $\kappa$.}
\label{fig:trace-vs-op}
\end{figure}

The takeaway: the trace proxy preserves the qualitative ordering and identifies the same most-sensitive block, but compresses the spread by a factor of several. For a practitioner choosing a depth-weighted schedule, the cheap trace estimate is a good first sweep direction; for the regret bound in Theorem~\ref{thm:suboptimality}, which depends on the operator-norm $\kappa$, the power-iteration estimate remains the right quantity.

\section{LIH on the BERT-base language backbone}
\label{app:nlp}

The LIH measurements in the main text and Appendix~\ref{app:hessian} are on six standard vision backbones (ResNet-18/34/50/101, ViT-B/16, ViT-L/16). The framework is stated for any layer-partitioned neural network, but a reasonable concern is whether the shallow-dominated $s_\ell$ pattern transfers beyond computer vision. We test this on BERT-base~\cite{devlin2019bert}: same per-layer power-iteration procedure as Appendix~\ref{app:hessian}, calibration batch of $32$ random token sequences (max length $64$, CLS/SEP boundary tokens preserved), $10$-class linear head on the CLS embedding, $15$ power-iteration steps, $3$ calibration seeds.

Figure~\ref{fig:bert-sell} reports the per-layer profile. The token-embedding block dominates with $s_\ell = 70.4 \pm 0.0$, the head sits at $20.8$, and the twelve transformer blocks span $s_\ell \in [7.7, 26.3]$. The measured layer condition number is $\kappa = 9.15$, modest compared to the $20$--$200$ range observed on vision backbones but still strictly above $1$. The qualitative pattern matches the LIH: a small number of layers (here, the embedding) dominate the loss curvature, with the rest of the architecture spread across roughly an order of magnitude. The framework's prescription therefore extends to NLP backbones in principle, with the specific schedule depending on the per-layer profile of the chosen model.

\begin{figure}[h]
\centering
\includegraphics[width=0.7\linewidth]{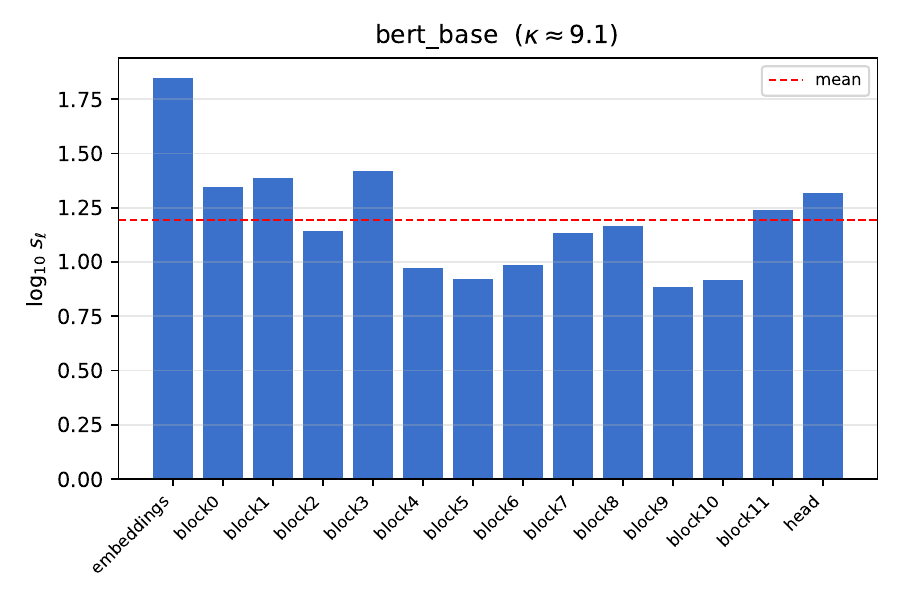}
\caption{Per-layer $\log_{10} s_\ell$ on BERT-base, evaluated on a random-token calibration batch with a $10$-class linear head on CLS. Embeddings dominate ($s_{\text{embed}} = 70$), with the twelve transformer blocks in $[8, 26]$ and the head at $s_{\text{head}} = 21$. Measured $\kappa = 9.15$.}
\label{fig:bert-sell}
\end{figure}

Two caveats. First, $\kappa$ on BERT is smaller than on vision backbones; the rule-of-thumb gain from a depth-weighted regularizer should be correspondingly smaller per Theorem~\ref{thm:suboptimality}. Second, our calibration batch is random tokens rather than a real text-classification benchmark, so the absolute magnitudes will shift on a downstream task. The per-layer ordering is what the framework consumes, and it is consistent with the vision-backbone pattern: a small number of high-curvature layers, here the token embedding, dominate the loss.

\paragraph{A text continual-learning benchmark.}
The measurement above is a curvature profile, not a task-performance result. To close the loop, we ran an end-to-end benchmark on the five-dataset text-classification sequence of Huang et al.~\cite{huang2021idbr}: AG News, Yelp Review Full, Amazon Review Full, Yahoo Answers, and DBpedia, in that fixed order. The backbone is BERT-base with a separate classification head per task; EWC regularizes the shared encoder parameters. We use the reduced setting of IDBR~\cite{huang2021idbr}, with $2000$ training and up to $1000$ test examples per class, $3$ seeds, and $c\in\{100, 1000, 10000\}$. Table~\ref{tab:bert-cl} reports sequential fine-tuning and both EWC arms at the strongest regularization strength.

\begin{table}[h]
\centering
\footnotesize
\caption{Five-dataset text continual learning with BERT-base (3 seeds, mean$\pm$std). SeqFT is sequential fine-tuning without regularization; both EWC arms are shown at $c=10000$.}
\label{tab:bert-cl}
\begin{tabular}{lcc}
\toprule
Method & Avg-acc & Forgetting \\
\midrule
SeqFT & $0.624\pm 0.010$ & $0.158\pm 0.011$ \\
Uniform EWC ($c=10000$) & $0.728\pm 0.002$ & $0.027\pm 0.006$ \\
Measured-$s$ EWC ($c=10000$) & $0.722\pm 0.005$ & $0.035\pm 0.004$ \\
\bottomrule
\end{tabular}
\end{table}

EWC itself transfers to language: uniform EWC gains $10$ percentage points of avg-acc over sequential fine-tuning and cuts forgetting from $0.158$ to $0.027$. The measured-$s$ schedule matches uniform within noise. The difference is between $-0.6$ and $+0.3$ percentage points at every $c$ and is never significant ($p\ge 0.09$). This null is the outcome Theorem~\ref{thm:suboptimality} predicts. The measured $\kappa=9.15$ on BERT-base is the smallest backbone-level value we observe, and the regret bound in Theorem~\ref{thm:suboptimality} is monotone non-decreasing in $\kappa$, so the predicted layer-adaptive gain is smallest exactly on this backbone. The benchmark therefore confirms the scope condition: layer-adaptive schedules pay off where the measured $\kappa$ is large, and the same measurement predicts in advance where they do not.

\section{Per-task $s_\ell$ variation and ordering stability}
\label{app:pertask}

The Layer-Importance Hypothesis (Figure~\ref{fig:lih}) is implicitly a claim about the architecture: $s_\ell$ varies dramatically across layers, and this variation should be a property of the network rather than of any particular calibration batch. We test this directly on ResNet-50 by drawing the calibration batch from each Split-CIFAR-100 task in turn (10-class subsets, $32$ samples per batch) and measuring per-layer $s_\ell$ at each. Figure~\ref{fig:pertask-sell} shows the resulting trajectories.

\begin{figure}[h]
\centering
\includegraphics[width=0.85\linewidth]{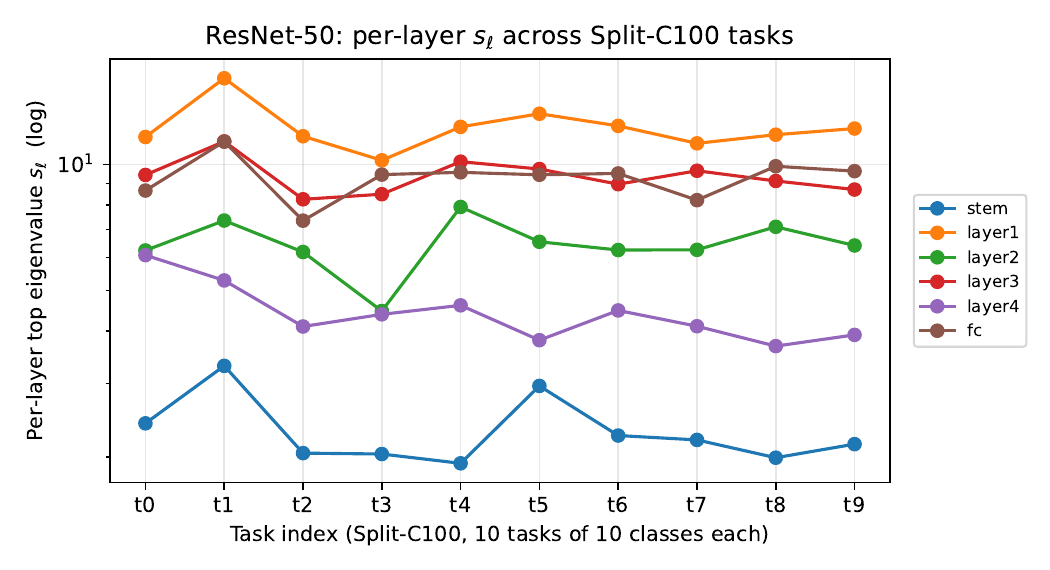}
\caption{Per-layer $s_\ell$ on ResNet-50 measured with calibration batches drawn from each of the $10$ Split-CIFAR-100 tasks separately. Each line is one architectural block; the $x$-axis is task index. The per-block ordering (\texttt{layer1} on top, then \texttt{layer3} and \texttt{fc}, then \texttt{layer2} and \texttt{layer4}, with \texttt{stem} at the bottom) is preserved across all $10$ tasks. Absolute magnitudes vary by a factor of $\sim 2$ within each block, but the rank is invariant.}
\label{fig:pertask-sell}
\end{figure}

Two observations. First, the per-layer rank is stable: \texttt{layer1} dominates on every task ($s_\ell = 12.3 \pm 1.5$ across tasks), with \texttt{layer3} and \texttt{fc} consistently second-tier and \texttt{stem} consistently lowest. The shallow-conv-dominated pattern from Figure~\ref{fig:lih} therefore does not depend on a specific calibration batch and reads through to every task's data slice. Second, absolute $s_\ell$ values vary within a factor of $\sim 2$ across tasks, and the per-task $\kappa$ values cluster in $4.5$--$6.4$. This is much smaller than the $\kappa = 195$ reported on the same backbone with a 100-class head in Appendix~\ref{app:hessian}, because here each task has only $10$ classes and the cross-entropy loss has correspondingly less spread. The framework's prescription is invariant to the absolute scale of $\kappa$ (it only uses the per-layer ratios), so this scale shift does not affect the rule of thumb; it does say that quoting a single $\kappa$ for a backbone hides real data dependence in the absolute magnitudes.

\begin{table}[t]
\centering
\footnotesize
\setlength{\tabcolsep}{5pt}
\caption{Per-experiment compute summary. EWC and SLCA depth-weighted sweeps use 3 seeds where indicated. ``Wallclock per job'' is the typical end-to-end time on a single allocation; ``Total'' aggregates across the full grid.}
\label{tab:compute}
\begin{tabular}{lrrr}
\toprule
Experiment & \# jobs & Wallclock/job & GPU-h (approx.) \\
\midrule
EWC sweep, SmallCNN/CIFAR-10 (3 seeds, 6$\times$4 grid) & 72 & 5--10\,min & 10 \\
EWC sweep, MediumCNN/CIFAR-100 (3 seeds, 6$\times$4 grid) & 72 & 10--20\,min & 20 \\
EWC sweep, ResNet-18/CIFAR-100 (3 seeds, 6$\times$4 grid) & 72 & 20--40\,min & 40 \\
EWC sweep, ResNet-50/CIFAR-100 (3 seeds, 6$\times$4 grid) & 72 & 30--60\,min & 60 \\
EWC measured-$s$, ResNet-18/CIFAR-100 (3 seeds, 4 $c$, Appendix~\ref{app:measured-ewc}) & 12 & 20--40\,min & 7 \\
EWC measured-$s$, ResNet-50/CIFAR-100 (3 seeds, 4 $c$, Appendix~\ref{app:measured-ewc}) & 12 & 30--60\,min & 10 \\
EWC schedule-shape, MediumCNN/CIFAR-100 (linear+step, Appendix~\ref{app:sched-compare}) & 54 & 10--20\,min & 15 \\
EWC schedule-shape, SmallCNN/CIFAR-10 (linear+step, Appendix~\ref{app:sched-compare}) & 54 & 5--10\,min & 7 \\
EWC schedule-shape, ResNet-18/CIFAR-100 (linear+step, Appendix~\ref{app:sched-compare}) & 54 & 20--40\,min & 30 \\
EWC 20-task Split-CIFAR-100 (3 seeds, 6$\times$4 grid) & 72 & 20--40\,min & 35 \\
TUNA, ImageNet-R B0-Inc20 & $\sim$24 & 60--120\,min & 36 \\
TUNA, CIFAR-100 B0-Inc10 (Appendix~\ref{app:tuna}) & $\sim$24 & 60--120\,min & 36 \\
SLCA depth-weighted, MoCoV3 (15 configs, 90 ep/task) & 15 & 240--360\,min & 75 \\
SLCA depth-weighted, ImageNet-21k (15 configs, 20 ep/task) & 15 & 25--40\,min & 8 \\
Hessian power iteration, 6 backbones (Appendix~\ref{app:hessian}) & 6 & 30--60\,min & 4 \\
Hessian extras: per-task $s_\ell$, empirical Fisher, granularity, BERT-base NLP & 4 & 20--60\,min & 3 \\
\midrule
\textbf{Total reported runs} & & & $\sim$397 \\
\bottomrule
\end{tabular}
\end{table}

\paragraph{Within-task and across-task trajectories.}
The measurements above vary the calibration task at a fixed trained checkpoint. A complementary question is how $s_\ell$ evolves along the training trajectory itself. We measure $s_\ell$ at $11$ checkpoints during task 1 of Split-CIFAR-100 ($3$ seeds) on pretrained ResNet-50 and on from-scratch ResNet-18 and SmallCNN, and summarize rank stability as the Spearman correlation between the profile at step $t$ and the final within-task profile. Figure~\ref{fig:traj-sell} shows the trajectories. On pretrained ResNet-50 the correlation stays at or above $0.70$ at every checkpoint: the ranking measured at $\theta^\star$ does not go stale within a task, only the magnitudes move. The from-scratch runs are rank-unstable before roughly step $200$ and stabilize thereafter. Mid-training, the top-curvature estimate on ResNet-18 and ResNet-50 can be negative; the trajectory measurements use a signed Rayleigh-quotient estimator, and we report magnitudes with a sign flag.

\begin{figure}[h]
\centering
\includegraphics[width=\linewidth]{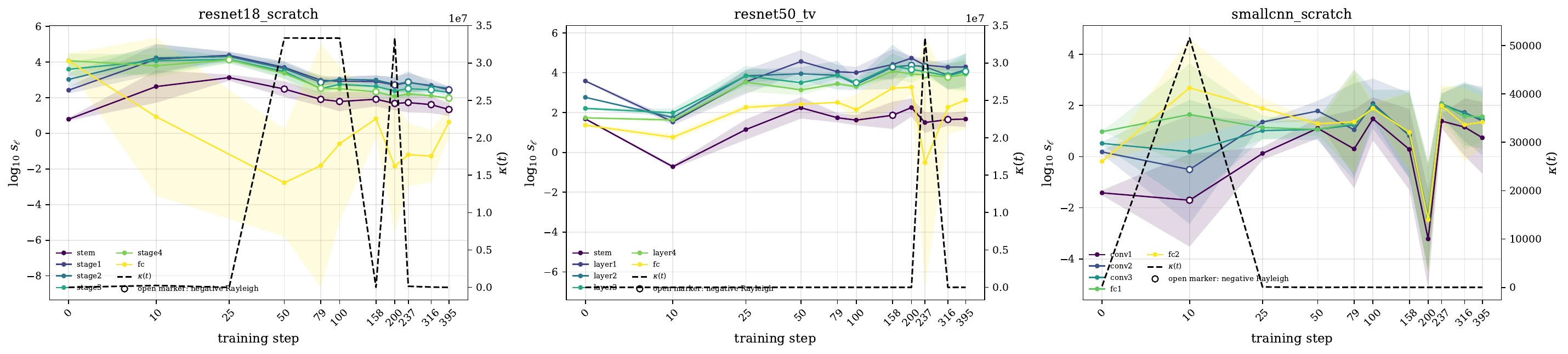}
\caption{Per-layer $\log_{10}|s_\ell|$ at 11 checkpoints during task 1 of Split-CIFAR-100 (one panel per backbone; mean over 3 seeds with seed band; open markers flag negative signed-Rayleigh curvature; dashed line: $\kappa(t)$).}
\label{fig:traj-sell}
\end{figure}

Across task boundaries, we track a from-scratch ResNet-50 (CIFAR variant; blocks \texttt{stage1}--\texttt{stage3} and \texttt{fc}) trained with EWC ($c=200$) on Split-CIFAR-100, snapshotting the profile after tasks 1, 2, 5, and 10. At initialization the profile is fc-heavy ($s_\ell = 32{,}600$ on \texttt{fc} vs $11{,}500$ on \texttt{stage3}). After task 1 it has reordered to stage3-heavy ($21.2$ vs $1.9$). From task 2 onward the ranking is stable while the magnitudes grow: \texttt{stage3} passes through $478$, $1{,}768$, and $3{,}201$, and \texttt{fc} through $20$, $81$, and $126$, after tasks 2, 5, and 10 respectively. The two views agree. Profiles anchored at a trained $\theta^\star$ are rank-stable, both within a task and across tasks from task 2 on, whereas profiles taken at initialization or early in from-scratch training do not predict the trained ordering. This supports re-measuring the schedule at task boundaries rather than fixing it at initialization.

\section{Comparison with experience replay}
\label{app:replay}

The main text treats replay as an orthogonal family (Section~\ref{sec:related}). Here we quantify the comparison directly. We run experience replay (ER) on Split-CIFAR-100 with the same task split, backbones, and training protocol as the EWC experiments (Section~\ref{sec:empirical}): class-balanced index replay over past-task data, memory budgets $M\in\{500, 2000, 6200\}$ stored examples, replay batch size equal to the current-task batch, 3 seeds. The largest budget is chosen so that the buffer roughly matches the extra storage EWC itself carries on MediumCNN, anchor parameters plus Fisher values, \textit{i.e.},\ $2\times$ the parameter count at 4 bytes each.

\begin{table}[h]
\centering
\footnotesize
\caption{Average accuracy on Split-CIFAR-100 (3 seeds, mean$\pm$std): ER at three memory budgets, the EWC arms at their best configuration, and the composition of the two families. Best $c$ is $200$ throughout; the best geometric $\alpha$ is $2$ on MediumCNN and ResNet-50 and $4$ on ResNet-18.}
\label{tab:reb-replay}
\begin{tabular}{lccc}
\toprule
Method & MediumCNN & ResNet-18 & ResNet-50 \\
\midrule
ER ($M=500$)  & $0.441\pm 0.012$ & $0.658\pm 0.009$ & $0.620\pm 0.027$ \\
ER ($M=2000$) & $0.443\pm 0.014$ & $0.725\pm 0.012$ & $0.684\pm 0.031$ \\
ER ($M=6200$) & $0.493\pm 0.002$ & $0.775\pm 0.012$ & $0.720\pm 0.011$ \\
EWC uniform (best $c$) & $0.572\pm 0.004$ & $0.555\pm 0.009$ & $0.453\pm 0.042$ \\
EWC geometric (best $\alpha$, $c$) & $0.598\pm 0.023$ & $0.542\pm 0.038$ & $0.431\pm 0.023$ \\
ER+EWC ($M=500$, $c=200$, $\alpha=2$) & $0.574\pm 0.005$ & $0.593\pm 0.044$ & $0.505\pm 0.017$ \\
\bottomrule
\end{tabular}
\end{table}

Three observations. First, ER dominates both EWC arms on the ResNets at every budget, consistent with prior reports that replay is a strong baseline on deep networks~\cite{buzzega2020dark,prabhu2020gdumb}. Second, the ordering reverses on the small from-scratch CNN: every EWC arm beats ER at every budget, and the geometric schedule is the best arm overall. Third, the two families compose. The ER+EWC row combines a small buffer ($M=500$) with the geometric schedule at $\alpha=2$ and $c=200$; on MediumCNN this beats ER alone at every budget. Replay also carries a cost the table does not show: the buffer stores raw past-task data, which is unavailable under privacy or data-retention constraints. Layer-adaptive regularization applies unchanged in that regime and, per the last row, remains useful when storage is allowed.

\section{Broader impacts}
\label{app:impacts}

The paper's contribution is theoretical and diagnostic. The most direct positive impact is efficiency: a layer-adaptive regularizer that recovers the same continual-learning quality with a smaller forgetting budget reduces the compute needed to keep deployed models current with new data, an effect that compounds with the total number of model updates a system performs over its lifetime. The associated diagnostic ($\kappa$ measurement) is a single-number test for whether layer-adaptive regularization is worth using on a given backbone, which spares practitioners from running a full sweep when uniform regularization is already adequate.

The paper also has a dual-use facet that we want to acknowledge directly. The per-layer top Hessian eigenvalue $s_\ell$ that our framework prescribes for protective regularization is the \emph{same} per-layer sensitivity quantity that the adversarial weight-attack literature uses to identify which layers are most worth attacking with bit-flip, weight-perturbation, or trojan-implant procedures~\cite{rakin2019bfa,yao2020mbda,chen2021proflip}. Our work does not introduce a new attack capability (the bit-flip literature already operates on the same quantity), but reframing $s_\ell$ as a CL hyperparameter could in principle make the per-layer sensitivity profile a more visible target for adversaries planning weight-space attacks on deployed models. We mitigate this by working only with publicly available, non-safety-critical benchmarks (Split-CIFAR-10/100, ImageNet-R) and by releasing no new model checkpoints. We recommend that practitioners who measure $s_\ell$ on production models treat the resulting per-layer profile with the same care they would apply to any other model-internal artefact that informs an attack surface.

A second consideration is more diffuse. Improved continual learning makes long-running personalized systems (recommenders, assistants, on-device adapters) easier to keep current. This is a property of better CL in general rather than a specific contribution of this paper, but it interacts with privacy in the usual way: a system that retains and integrates information well across tasks also retains and integrates information well across users and time, which sharpens the existing tension between personalization quality and data minimization. Any deployment of CL methods on user data should pair the stability gains we discuss with explicit data-retention and consent policies.

\section{Compute resources}
\label{app:compute}

All experiments ran on an internal HPC cluster with NVIDIA GPUs (B200, H200, H100, A100-80GB, A100-40GB, RTXA6000) inside an NVIDIA PyTorch 25.01 enroot container, scheduled by SLURM. Per-job resources requested: 1 GPU, 4--8 CPUs, 32--64\,GB RAM, 3--10\,h walltime depending on workload. Approximate wallclock and total compute per experiment block, including the appendix-only sweeps:

The full research effort (preliminary sweeps, debugging runs, abandoned configurations not reported here) consumed roughly $2$--$3\times$ the total in Table~\ref{tab:compute}, \textit{i.e.},\ on the order of $800$--$1200$ GPU-hours. A later round of experiments, including the replay comparison in Appendix~\ref{app:replay}, added roughly $300$ further GPU-jobs of 0.1--2\,h each.

\FloatBarrier
\clearpage
\section*{NeurIPS Paper Checklist}

\begin{enumerate}

\item {\bf Claims}
    \item[] Question: Do the main claims made in the abstract and introduction accurately reflect the paper's contributions and scope?
    \item[] Answer: \answerYes{}
    \item[] Justification: The abstract and Section~\ref{sec:intro} state the four main results (Theorems~\ref{thm:decomp}, \ref{thm:suboptimality}, \ref{thm:optimal} and Proposition~\ref{prop:fisher-gap}). They also state how two existing layer-adaptive CL methods (TUNA and SLCA) and our own depth-weighted EWC variant fit the scalar-per-layer family. Each claim is formalized in Section~\ref{sec:methodology} and tested in Section~\ref{sec:empirical}.
    \item[] Guidelines:
    \begin{itemize}
        \item The answer \answerNA{} means that the abstract and introduction do not include the claims made in the paper.
        \item The abstract and/or introduction should clearly state the claims made, including the contributions made in the paper and important assumptions and limitations. A \answerNo{} or \answerNA{} answer to this question will not be perceived well by the reviewers. 
        \item The claims made should match theoretical and experimental results, and reflect how much the results can be expected to generalize to other settings. 
        \item It is fine to include aspirational goals as motivation as long as it is clear that these goals are not attained by the paper. 
    \end{itemize}

\item {\bf Limitations}
    \item[] Question: Does the paper discuss the limitations of the work performed by the authors?
    \item[] Answer: \answerYes{}
    \item[] Justification: Section~\ref{sec:discussion} explicitly lists five caveats: local validity of the second-order model (trust region), block-diagonal Hessian approximation, $F=H$ identity at exponential-family optima, functional (not algebraic) SLCA reduction, and locality of $s_\ell$ measurement.
    \item[] Guidelines:
    \begin{itemize}
        \item The answer \answerNA{} means that the paper has no limitation while the answer \answerNo{} means that the paper has limitations, but those are not discussed in the paper. 
        \item The authors are encouraged to create a separate ``Limitations'' section in their paper.
        \item The paper should point out any strong assumptions and how robust the results are to violations of these assumptions (e.g., independence assumptions, noiseless settings, model well-specification, asymptotic approximations only holding locally). The authors should reflect on how these assumptions might be violated in practice and what the implications would be.
        \item The authors should reflect on the scope of the claims made, e.g., if the approach was only tested on a few datasets or with a few runs. In general, empirical results often depend on implicit assumptions, which should be articulated.
        \item The authors should reflect on the factors that influence the performance of the approach. For example, a facial recognition algorithm may perform poorly when image resolution is low or images are taken in low lighting. Or a speech-to-text system might not be used reliably to provide closed captions for online lectures because it fails to handle technical jargon.
        \item The authors should discuss the computational efficiency of the proposed algorithms and how they scale with dataset size.
        \item If applicable, the authors should discuss possible limitations of their approach to address problems of privacy and fairness.
        \item While the authors might fear that complete honesty about limitations might be used by reviewers as grounds for rejection, a worse outcome might be that reviewers discover limitations that aren't acknowledged in the paper. The authors should use their best judgment and recognize that individual actions in favor of transparency play an important role in developing norms that preserve the integrity of the community. Reviewers will be specifically instructed to not penalize honesty concerning limitations.
    \end{itemize}

\item {\bf Theory assumptions and proofs}
    \item[] Question: For each theoretical result, does the paper provide the full set of assumptions and a complete (and correct) proof?
    \item[] Answer: \answerYes{}
    \item[] Justification: All theorems, propositions, corollaries, and definitions are numbered and cross-referenced. Assumption~\ref{ass:block-diag} is stated before every result that uses it. Theorem~\ref{thm:decomp} and Proposition~\ref{prop:fisher-gap} are proved in full in the main text; Theorems~\ref{thm:suboptimality} and~\ref{thm:optimal} have main-text proof sketches with full proofs deferred to Appendices~B and~C.
    \item[] Guidelines:
    \begin{itemize}
        \item The answer \answerNA{} means that the paper does not include theoretical results. 
        \item All the theorems, formulas, and proofs in the paper should be numbered and cross-referenced.
        \item All assumptions should be clearly stated or referenced in the statement of any theorems.
        \item The proofs can either appear in the main paper or the supplemental material, but if they appear in the supplemental material, the authors are encouraged to provide a short proof sketch to provide intuition. 
        \item Inversely, any informal proof provided in the core of the paper should be complemented by formal proofs provided in appendix or supplemental material.
        \item Theorems and Lemmas that the proof relies upon should be properly referenced. 
    \end{itemize}

    \item {\bf Experimental result reproducibility}
    \item[] Question: Does the paper fully disclose all the information needed to reproduce the main experimental results of the paper to the extent that it affects the main claims and/or conclusions of the paper (regardless of whether the code and data are provided or not)?
    \item[] Answer: \answerYes{}
    \item[] Justification: Section~\ref{sec:empirical} reports the full grid for each experiment: depth-weighted EWC sweeps over $\alpha\in\{0.25,0.5,1,2,4,8\}$ and $c\in\{200,1000,5000,20000\}$ on SmallCNN (5 layers) on Split-C10, MediumCNN (6 layers) on Split-C100, and ResNet-18 on Split-C100, all over 3 seeds with 2 epochs per task; TUNA orthogonality sweeps over the same $(c,\alpha)$ parametrization on ViT-B/16 with C100 B0-Inc10 and ImNet-R B0-Inc20; SLCA depth-weighted sweeps over $c\in\{0.5,1,2\}$ and $\alpha\in\{0.85,0.92,1,1.08,1.18\}$ on ViT-B/16 + MoCoV3 and ViT-B/16 + ImNet-21k, both on Split-C100. The remaining hyperparameters (optimizer, batch size, Fisher-sample count, milestones) are listed in the companion repository.
    \item[] Guidelines:
    \begin{itemize}
        \item The answer \answerNA{} means that the paper does not include experiments.
        \item If the paper includes experiments, a \answerNo{} answer to this question will not be perceived well by the reviewers: Making the paper reproducible is important, regardless of whether the code and data are provided or not.
        \item If the contribution is a dataset and\slash or model, the authors should describe the steps taken to make their results reproducible or verifiable. 
        \item Depending on the contribution, reproducibility can be accomplished in various ways. For example, if the contribution is a novel architecture, describing the architecture fully might suffice, or if the contribution is a specific model and empirical evaluation, it may be necessary to either make it possible for others to replicate the model with the same dataset, or provide access to the model. In general. releasing code and data is often one good way to accomplish this, but reproducibility can also be provided via detailed instructions for how to replicate the results, access to a hosted model (e.g., in the case of a large language model), releasing of a model checkpoint, or other means that are appropriate to the research performed.
        \item While NeurIPS does not require releasing code, the conference does require all submissions to provide some reasonable avenue for reproducibility, which may depend on the nature of the contribution. For example
        \begin{enumerate}
            \item If the contribution is primarily a new algorithm, the paper should make it clear how to reproduce that algorithm.
            \item If the contribution is primarily a new model architecture, the paper should describe the architecture clearly and fully.
            \item If the contribution is a new model (e.g., a large language model), then there should either be a way to access this model for reproducing the results or a way to reproduce the model (e.g., with an open-source dataset or instructions for how to construct the dataset).
            \item We recognize that reproducibility may be tricky in some cases, in which case authors are welcome to describe the particular way they provide for reproducibility. In the case of closed-source models, it may be that access to the model is limited in some way (e.g., to registered users), but it should be possible for other researchers to have some path to reproducing or verifying the results.
        \end{enumerate}
    \end{itemize}

\item {\bf Open access to data and code}
    \item[] Question: Does the paper provide open access to the data and code, with sufficient instructions to faithfully reproduce the main experimental results, as described in supplemental material?
    \item[] Answer: \answerNo{}
    \item[] Justification: The main contribution is theoretical, but every empirical figure in the paper (the per-layer Hessian measurement of Figure~\ref{fig:lih}, the multi-seed depth-weighted EWC sweeps, the TUNA orthogonality sweeps, and the two SLCA depth-weighted sweeps) is produced by an anonymous companion codebase. We will release this codebase together with the camera-ready, including the launch scripts and per-task JSON outputs that generate every plot. The submission text gives the full $(c,\alpha)$ grids, backbones, datasets, and seeds needed to reproduce the runs from public sources alone.
    \item[] Guidelines:
    \begin{itemize}
        \item The answer \answerNA{} means that paper does not include experiments requiring code.
        \item Please see the NeurIPS code and data submission guidelines (\url{https://neurips.cc/public/guides/CodeSubmissionPolicy}) for more details.
        \item While we encourage the release of code and data, we understand that this might not be possible, so \answerNo{} is an acceptable answer. Papers cannot be rejected simply for not including code, unless this is central to the contribution (e.g., for a new open-source benchmark).
        \item The instructions should contain the exact command and environment needed to run to reproduce the results. See the NeurIPS code and data submission guidelines (\url{https://neurips.cc/public/guides/CodeSubmissionPolicy}) for more details.
        \item The authors should provide instructions on data access and preparation, including how to access the raw data, preprocessed data, intermediate data, and generated data, etc.
        \item The authors should provide scripts to reproduce all experimental results for the new proposed method and baselines. If only a subset of experiments are reproducible, they should state which ones are omitted from the script and why.
        \item At submission time, to preserve anonymity, the authors should release anonymized versions (if applicable).
        \item Providing as much information as possible in supplemental material (appended to the paper) is recommended, but including URLs to data and code is permitted.
    \end{itemize}

\item {\bf Experimental setting/details}
    \item[] Question: Does the paper specify all the training and test details (e.g., data splits, hyperparameters, how they were chosen, type of optimizer) necessary to understand the results?
    \item[] Answer: \answerYes{}
    \item[] Justification: Section~\ref{sec:empirical} reports architecture (SmallCNN, 5 layers; MediumCNN, 6 layers), dataset (Split-C10, 5 tasks; Split-C100, 10 tasks), epochs per task, and the full $(\alpha, c)$ grid. Full hyperparameter specifications (optimizer, batch size, Fisher-sample size) are in the companion repository.
    \item[] Guidelines:
    \begin{itemize}
        \item The answer \answerNA{} means that the paper does not include experiments.
        \item The experimental setting should be presented in the core of the paper to a level of detail that is necessary to appreciate the results and make sense of them.
        \item The full details can be provided either with the code, in appendix, or as supplemental material.
    \end{itemize}

\item {\bf Experiment statistical significance}
    \item[] Question: Does the paper report error bars suitably and correctly defined or other appropriate information about the statistical significance of the experiments?
    \item[] Answer: \answerYes{}
    \item[] Justification: The depth-weighted EWC sweeps report mean curves with shaded $\pm 1$ std bands across 3 seeds (216 runs total). The TUNA and SLCA sweeps are single-seed grid sweeps following the protocol of the original methods. Each of the two SLCA $(c,\alpha)$ grids spans 15 configs, and the trend is read across the grid rather than from per-cell error bars.
    \item[] Guidelines:
    \begin{itemize}
        \item The answer \answerNA{} means that the paper does not include experiments.
        \item The authors should answer \answerYes{} if the results are accompanied by error bars, confidence intervals, or statistical significance tests, at least for the experiments that support the main claims of the paper.
        \item The factors of variability that the error bars are capturing should be clearly stated (for example, train/test split, initialization, random drawing of some parameter, or overall run with given experimental conditions).
        \item The method for calculating the error bars should be explained (closed form formula, call to a library function, bootstrap, etc.)
        \item The assumptions made should be given (e.g., Normally distributed errors).
        \item It should be clear whether the error bar is the standard deviation or the standard error of the mean.
        \item It is OK to report 1-sigma error bars, but one should state it. The authors should preferably report a 2-sigma error bar than state that they have a 96\% CI, if the hypothesis of Normality of errors is not verified.
        \item For asymmetric distributions, the authors should be careful not to show in tables or figures symmetric error bars that would yield results that are out of range (e.g., negative error rates).
        \item If error bars are reported in tables or plots, the authors should explain in the text how they were calculated and reference the corresponding figures or tables in the text.
    \end{itemize}

\item {\bf Experiments compute resources}
    \item[] Question: For each experiment, does the paper provide sufficient information on the computer resources (type of compute workers, memory, time of execution) needed to reproduce the experiments?
    \item[] Answer: \answerYes{}
    \item[] Justification: Appendix~\ref{app:compute} (Table~\ref{tab:compute}) reports per-experiment GPU type, memory, walltime per job, and total GPU-hours, including a full-research-effort estimate that covers preliminary and abandoned runs.
    \item[] Guidelines:
    \begin{itemize}
        \item The answer \answerNA{} means that the paper does not include experiments.
        \item The paper should indicate the type of compute workers CPU or GPU, internal cluster, or cloud provider, including relevant memory and storage.
        \item The paper should provide the amount of compute required for each of the individual experimental runs as well as estimate the total compute. 
        \item The paper should disclose whether the full research project required more compute than the experiments reported in the paper (e.g., preliminary or failed experiments that didn't make it into the paper). 
    \end{itemize}
    
\item {\bf Code of ethics}
    \item[] Question: Does the research conducted in the paper conform, in every respect, with the NeurIPS Code of Ethics \url{https://neurips.cc/public/EthicsGuidelines}?
    \item[] Answer: \answerYes{}
    \item[] Justification: The paper is theoretical and the supporting experiments use only public continual-learning benchmarks (Split-C10/100, ImNet-R/A, ObjectNet, CUB-200, Cars-196). No human subjects, private data, or deployment risks are involved.
    \item[] Guidelines:
    \begin{itemize}
        \item The answer \answerNA{} means that the authors have not reviewed the NeurIPS Code of Ethics.
        \item If the authors answer \answerNo, they should explain the special circumstances that require a deviation from the Code of Ethics.
        \item The authors should make sure to preserve anonymity (e.g., if there is a special consideration due to laws or regulations in their jurisdiction).
    \end{itemize}

\item {\bf Broader impacts}
    \item[] Question: Does the paper discuss both potential positive societal impacts and negative societal impacts of the work performed?
    \item[] Answer: \answerYes{}
    \item[] Justification: Appendix~\ref{app:impacts} discusses positive impacts (compute efficiency for keeping deployed models current; a single-number diagnostic that spares unnecessary sweeps), the dual-use overlap between the per-layer sensitivity $s_\ell$ used here for regularization and the same quantity used by the adversarial weight-attack literature, and the indirect privacy interaction between better continual learning and long-running personalized systems.
    \item[] Guidelines:
    \begin{itemize}
        \item The answer \answerNA{} means that there is no societal impact of the work performed.
        \item If the authors answer \answerNA{} or \answerNo, they should explain why their work has no societal impact or why the paper does not address societal impact.
        \item Examples of negative societal impacts include potential malicious or unintended uses (e.g., disinformation, generating fake profiles, surveillance), fairness considerations (e.g., deployment of technologies that could make decisions that unfairly impact specific groups), privacy considerations, and security considerations.
        \item The conference expects that many papers will be foundational research and not tied to particular applications, let alone deployments. However, if there is a direct path to any negative applications, the authors should point it out. For example, it is legitimate to point out that an improvement in the quality of generative models could be used to generate Deepfakes for disinformation. On the other hand, it is not needed to point out that a generic algorithm for optimizing neural networks could enable people to train models that generate Deepfakes faster.
        \item The authors should consider possible harms that could arise when the technology is being used as intended and functioning correctly, harms that could arise when the technology is being used as intended but gives incorrect results, and harms following from (intentional or unintentional) misuse of the technology.
        \item If there are negative societal impacts, the authors could also discuss possible mitigation strategies (e.g., gated release of models, providing defenses in addition to attacks, mechanisms for monitoring misuse, mechanisms to monitor how a system learns from feedback over time, improving the efficiency and accessibility of ML).
    \end{itemize}
    
\item {\bf Safeguards}
    \item[] Question: Does the paper describe safeguards that have been put in place for responsible release of data or models that have a high risk for misuse (e.g., pre-trained language models, image generators, or scraped datasets)?
    \item[] Answer: \answerNA{}
    \item[] Justification: The paper releases no new models or datasets; it analyzes existing continual-learning methods and uses public benchmarks only.
    \item[] Guidelines:
    \begin{itemize}
        \item The answer \answerNA{} means that the paper poses no such risks.
        \item Released models that have a high risk for misuse or dual-use should be released with necessary safeguards to allow for controlled use of the model, for example by requiring that users adhere to usage guidelines or restrictions to access the model or implementing safety filters. 
        \item Datasets that have been scraped from the Internet could pose safety risks. The authors should describe how they avoided releasing unsafe images.
        \item We recognize that providing effective safeguards is challenging, and many papers do not require this, but we encourage authors to take this into account and make a best faith effort.
    \end{itemize}

\item {\bf Licenses for existing assets}
    \item[] Question: Are the creators or original owners of assets (e.g., code, data, models), used in the paper, properly credited and are the license and terms of use explicitly mentioned and properly respected?
    \item[] Answer: \answerYes{}
    \item[] Justification: All prior methods discussed (EWC, SI, MAS, TUNA, SLCA, K-FAC) and datasets used by the companion experiments (Split-C10/100, ImNet-R/A, ObjectNet, CUB-200, Cars-196) are cited with their original publications in Section~\ref{sec:related} and the References. The companion repository documents dataset licenses for the empirical runs.
    \item[] Guidelines:
    \begin{itemize}
        \item The answer \answerNA{} means that the paper does not use existing assets.
        \item The authors should cite the original paper that produced the code package or dataset.
        \item The authors should state which version of the asset is used and, if possible, include a URL.
        \item The name of the license (e.g., CC-BY 4.0) should be included for each asset.
        \item For scraped data from a particular source (e.g., website), the copyright and terms of service of that source should be provided.
        \item If assets are released, the license, copyright information, and terms of use in the package should be provided. For popular datasets, \url{paperswithcode.com/datasets} has curated licenses for some datasets. Their licensing guide can help determine the license of a dataset.
        \item For existing datasets that are re-packaged, both the original license and the license of the derived asset (if it has changed) should be provided.
        \item If this information is not available online, the authors are encouraged to reach out to the asset's creators.
    \end{itemize}

\item {\bf New assets}
    \item[] Question: Are new assets introduced in the paper well documented and is the documentation provided alongside the assets?
    \item[] Answer: \answerNA{}
    \item[] Justification: The paper does not release new datasets, models, or trained checkpoints with this submission. Companion experimental code will be released separately with the camera-ready version.
    \item[] Guidelines:
    \begin{itemize}
        \item The answer \answerNA{} means that the paper does not release new assets.
        \item Researchers should communicate the details of the dataset\slash code\slash model as part of their submissions via structured templates. This includes details about training, license, limitations, etc. 
        \item The paper should discuss whether and how consent was obtained from people whose asset is used.
        \item At submission time, remember to anonymize your assets (if applicable). You can either create an anonymized URL or include an anonymized zip file.
    \end{itemize}

\item {\bf Crowdsourcing and research with human subjects}
    \item[] Question: For crowdsourcing experiments and research with human subjects, does the paper include the full text of instructions given to participants and screenshots, if applicable, as well as details about compensation (if any)?
    \item[] Answer: \answerNA{}
    \item[] Justification: The paper does not involve crowdsourcing or research with human subjects.
    \item[] Guidelines:
    \begin{itemize}
        \item The answer \answerNA{} means that the paper does not involve crowdsourcing nor research with human subjects.
        \item Including this information in the supplemental material is fine, but if the main contribution of the paper involves human subjects, then as much detail as possible should be included in the main paper. 
        \item According to the NeurIPS Code of Ethics, workers involved in data collection, curation, or other labor should be paid at least the minimum wage in the country of the data collector. 
    \end{itemize}

\item {\bf Institutional review board (IRB) approvals or equivalent for research with human subjects}
    \item[] Question: Does the paper describe potential risks incurred by study participants, whether such risks were disclosed to the subjects, and whether Institutional Review Board (IRB) approvals (or an equivalent approval/review based on the requirements of your country or institution) were obtained?
    \item[] Answer: \answerNA{}
    \item[] Justification: The paper does not involve human subjects or crowdsourcing, so IRB approval is not applicable.
    \item[] Guidelines:
    \begin{itemize}
        \item The answer \answerNA{} means that the paper does not involve crowdsourcing nor research with human subjects.
        \item Depending on the country in which research is conducted, IRB approval (or equivalent) may be required for any human subjects research. If you obtained IRB approval, you should clearly state this in the paper. 
        \item We recognize that the procedures for this may vary significantly between institutions and locations, and we expect authors to adhere to the NeurIPS Code of Ethics and the guidelines for their institution. 
        \item For initial submissions, do not include any information that would break anonymity (if applicable), such as the institution conducting the review.
    \end{itemize}

\item {\bf Declaration of LLM usage}
    \item[] Question: Does the paper describe the usage of LLMs if it is an important, original, or non-standard component of the core methods in this research? Note that if the LLM is used only for writing, editing, or formatting purposes and does \emph{not} impact the core methodology, scientific rigor, or originality of the research, declaration is not required.
    \item[] Answer: \answerNA{}
    \item[] Justification: LLMs are not part of the methodology or any core component of this research; any use of LLM tools was limited to writing and formatting, which per NeurIPS policy does not require declaration.
    \item[] Guidelines:
    \begin{itemize}
        \item The answer \answerNA{} means that the core method development in this research does not involve LLMs as any important, original, or non-standard components.
        \item Please refer to our LLM policy in the NeurIPS handbook for what should or should not be described.
    \end{itemize}

\end{enumerate}

\end{document}